\newcommand{\eins}{\boldsymbol{1}}
\DeclareSymbolFont{wideparensymbol}{OMX}{yhex}{m}{n}
\DeclareMathAccent{\wideparen}{\mathord}{wideparensymbol}{"F3}
  \pgfplotsset{compat=newest}
  \pgfplotsset{plot coordinates/math parser=false,trim axis left}
     \newlength\figureheight
     \newlength\figurewidth
\begin{document}

\title{Best-scored Random Forest Density Estimation}

\author{\name Hanyuan Hang \email hans2017@ruc.edu.cn \\
	\name Hongwei Wen \email hongwei.wen@ruc.edu.cn \\
	\addr Institute of Statistics and Big Data \\  
	Renmin University of China \\
	100872 Beijing, China}

%\editor{ }

\maketitle

\begin{abstract}This paper presents a brand new nonparametric density estimation strategy named the best-scored random forest density estimation whose effectiveness is supported by both solid theoretical analysis and significant experimental performance. The terminology best-scored stands for selecting one density tree with the best estimation performance out of a certain number of purely random density tree candidates and we then name the selected one the best-scored random density tree. In this manner, the ensemble of these selected trees that is the best-scored random density forest can achieve even better estimation results than simply integrating trees without selection. From the theoretical perspective, by decomposing the error term into two, we are able to carry out the following analysis: First of all, we establish the consistency of the best-scored random density trees under $L_1$-norm. Secondly, we provide the convergence rates of them under $L_1$-norm concerning with three different tail assumptions, respectively. Thirdly, the convergence rates under $L_{\infty}$-norm is presented. Last but not least, we also achieve the above convergence rates analysis for the best-scored random density forest. When conducting comparative experiments with other state-of-the-art density estimation approaches on both synthetic and real data sets, it turns out that our algorithm has not only significant advantages in terms of estimation accuracy over other methods, but also stronger resistance to the curse of dimensionality. 
\end{abstract}

\begin{keywords}
	nonparametric density estimation, purely random decision tree, random forest, ensemble learning, learning theory
\end{keywords}

\section{Introduction}
Owing to the rapid development of computation and the consequent emergence of various types of data, effective tools to deal with data analysis are in great demand. Among those tools, density estimation which aims at estimating the underlying density of an unknown distribution through observations drawn independently from that distribution has been attached paramount importance in many fields of science and technology 
\cite{fraley2002modelbased}. This broad attention is a direct result of the fact that the density estimation does not learn for its own sake, but rather facilitate solving some higher level tasks, such as assessing the multimodality, skewness, or any other structure in the distribution of the data \cite{scott1992multivariate,silverman1986density}, summarizing the Bayesian posteriors, classification and discriminant analysis \cite{simonoff1996smoothing}, and being proved useful in Monte Carlo computational methods like bootstrap and particle filter \cite{doucet2001an}. Other applications, especially in the computer vision society, include image detection \cite{ma2015small,liu2016highway,wang2018manifoldbased}, gesture recognition \cite{chang2016nonparametric}, image reconstruction \cite{ihsani2016a}, deformable 3D shape matching \cite{vestner2017product}, image defogging \cite{jiang2017fog}, hyperspectral unmixing \cite{zhou2018a}, just to name a few.

Decades have witnessed vast literature on finding different appropriate methods to solve density estimation problems and nonparametric density estimations have become the focus of attention since weaker assumptions are applied to the underlying probability distribution \cite{hwang1994nonparametric,hardle2012nonparametric}. Histogram density estimation, being a simple and convenient estimation method, has been extensively studied as the most basic form of density estimation \cite{freedman1981on,lugosi1996consistency}. Although consistency
\cite{glick1973samplebased,gordon1978asymptotically,gordon1980consistent} and a strong universal consistency \cite{devroye1983distributionfree} of the histograms are established, they are suboptimal for not being smooth. Moreover, the non-smoothness of histogram and therefore insufficient accuracy brings great obstacles to practical applications. Taking the smoothness into consideration, the machine learning society turns to another popular strategy called the kernel density estimation (KDE), which is also termed as \emph{Parzen-Rosenblatt estimation} \cite{parzen1962on,rosenblatt1956remarks}. This method gains its prevalence when dealing with cases where the density is assumed to be smooth and the optimal convergence rates can be achieved with kernel and bandwidth chosen appropriately \cite{hang2018kernel}. However, these optimal rates depend on the order of smoothness of the density function on the entire input space while the actual cases may be that the smoothness of density function varies from areas to areas. In other words, KDE lacks local adaptivity, and this often leads to a large sensitivity to outliers, the presence of spurious bumps, and a tendency to flatten the peaks and valleys of the density \cite{terrell1992variable,botev2010kernel}. Nevertheless, this method undergoes a high computational complexity since the computation time grows linearly with the number of samples increasing. Other density estimation strategies published so far include estimators based on wavelet \cite{doukhan1990deviation,kerkyacharian1992density}, mixtures of models \cite{roeder1997practical,ghosal2001convergence,escobar1995bayesian}, just to name a few. It is worth noting that the above mentioned methods can barely escape from the curse of dimensionality for their unsatisfying performance for moderate to large dimension. To the best of our knowledge, it is a challenge for an algorithm to have the theoretical availability for both local and global analysis, the experimental advantages of achieving efficient and accurate prediction results on real data, and stronger resistance to the curse of dimensionality compared to the existing common algorithms.

Committed to conquering the challenge, we propose a random-forest-based density estimation method named the \emph{best-scored random forest density estimation}. By taking full advantage of the purely random splitting criterion and the ensemble nature of a forest consisting of purely random trees, we are able to construct an algorithm not only achieving fast convergence rates, but also a desirable asymptotic smoothness beneficial for prediction accuracy. Moreover, since the local and global analysis of the random forests are in essence the same, so is our algorithm. The algorithm starts with partitioning the feature space into non-overlapping cells following the purely random splitting criterion where at each step, the to-be-split cell and its corresponding cut point are chosen uniformly at random. Then, the inherent randomness within the partitions allows us to build different random density trees and pick out the one with the best experimental performance as a best-scored tree in the forest. We name this selection mechanism the best-scored method. Last but not least, by integrating trees generated by the above procedure, we obtain a density forest with satisfying asymptotic smoothness.

The contributions of this paper come from both the theoretical and experimental aspects: 
\emph{(i)} 
Considering from a theoretical perspective, our best-scored random forest density estimation shows its preponderance in achieving fast convergence rates under some mild conditions. Different from the commonly utilized $L_2$-distance for measuring the difference between the nonparametric density estimator and the underlying density function, we regard $L_1$-distance as a more reasonable choice concerning with its invariance under monotone transformations, being well-defined as a metric on the density functions space and better visualization of the closeness to the ground-truth density function than $L_2$-distance. Besides, we also carry out analysis under the $L_{\infty}$-distance for its ability to measure the worst-case goodness-of-fit of the estimator. Based on these two types of distance, we manage to establish consistency and fast convergence rates for the best-scored random density trees and forest. In the analysis, the error term is decomposed into data-free and data-dependent error terms which are handled by employing techniques from the approximation theory and empirical process theory, respectively. Our theoretical advantages are essentially twofold: First, the underlying density function is only assumed to be $\alpha$-H\"{o}lder continuous which is a weak and natural assumption for nonparametric density estimation in the literature. Second, fast convergence rates are established under certain common tail assumptions on the distribution which are rigorously calculated step by step according to our purely random splitting mechanism. 
\emph{(ii)} 
Experimental improvements of the algorithm architecture are made for better numerical performance and their effectiveness is later verified by both synthetic and real data analysis. First of all, we adopt an adaptive random partition method instead of the original random splitting criterion where the cell selection process is data-driven. To be concrete, at each step, we pick up a certain number of sample points from the entire training set uniformly at random and then choose the cell which most of these samples fall in as the to-be-split one. In this manner, sample-dense areas are more likely to be split more whereas sample-sparse areas are possible to be split less, which not only increases the effective number of splits, but also helps to obtain cells with sample sizes evenly distributed. Secondly, concerning with the fact that the partitions theoretically studied are axis-parallel and may not be that accurate since it has to approximate the correct model with staircase-like structure when the underlying concept is a polygonal space partitioning in practice. Therefore, we propose a best-scored random forest density estimation induced by the oblique purely random partitions and it does improve the prediction accuracy. Thirdly, when conducting real data analysis, our algorithm is predominant in accuracy for it has much more free parameters tunable, trains faster than other classical machine learning methods when the sample volume is large, can be even speed up for it inherits the parallelism of random forests and significantly more resistant to the curse of dimensionality than any other methods in comparison. As a result, the noteworthy advantages of experimental accuracy and training time further demonstrate the effectiveness and efficiency of our algorithm.

This paper is organized as follows. In Section \ref{sec::methodology}, we lay out some required fundamental notations and definitions concerned with the best-scored random density forest. Main results on the consistency and convergence rates under the $L_1$-norm and the $L_{\infty}$-norm of the estimators are provided in Section \ref{sec::consistency_convergence}. Some comments and discussions related to the main results will be also presented in this section. Section \ref{error_analysis} is devoted to the main analysis on bounding the error terms. Numerical experiments of comparisons between different density estimation methods based on both synthetic and real data sets are provided in Section \ref{sec::numerical_experiments}. For the sake of clarity, we place all the proofs of Section \ref{sec::consistency_convergence} and Section \ref{error_analysis} in Section \ref{sec::proofs}. We close this paper in Section \ref{sec::conclusion} with several concluding remarks.

\section{Methodology} \label{sec::methodology}

We dedicate this section to the methodology of our best-scored random forest density estimation. To this end, we begin by introducing some notations that will be used throughout. Then, we give explicit description of the purely random partitions that our density trees and thus forest are based on. The architecture of our best-scored random density trees and then forest are presented in Sections \ref{sec::subsec::BRDT} and \ref{sec::subsec::BRDF}, respectively.

\subsection{Notations}

Let $\mathcal{X} \subset \mathbb{R}^d$ be a subset, $\mu := \lambda^d$ be the Lebesgue measure with $\mu(\mathcal{X}) > 0$, and $\mathrm{P}$ be a probability measure with support $\mathcal{X}$ which is absolute continuous with respect to $\mu$ with density $f$. We denote $B_r$ as the centered hypercube of $\mathbb{R}^d$ with side length $2 r$, that is
\begin{align*}
B_r := \{ x = (x_1, \ldots, x_d) \in \mathbb{R}^d : x_i \in [-r, r], i = 1, \ldots, d \},
\end{align*}
and write $B_r^c := \mathbb{R}^d \setminus [-r, r]^d$ for the complement of $B_r$. Throughout this paper, we use the notation $a_n \lesssim b_n$ to denote that there exists a positive constant $c$ such that $a_n \leq c b_n$, for all $n \in \mathbb{N}$.

\subsection{Purely Random Partitions} \label{sec::subsec::PRP}

In this subsection, we introduce the purely random partition which is the foundation of establishing our best-scored random density trees and then forest. This partition follows the idea put forward by \cite{bremain2000some} of the construction of purely random forest. To give a better understanding of one possible general building procedure of the random partition, a random vector $Q_i := (L_i, R_i, S_i)$ is set up to describe the splitting mechanism at the $i$th step of the partition. For definiteness, let $L_i$ in the triplet denote the to-be-split cell at the $i$th step chosen uniformly at random from the candidates which are defined to be all the cells presented in the $(i-1)$th step. In this way, the cell choosing procedure follows a recursive manner. The second random variable $R_i \in \{ 1, \ldots, d \}$ in the triplet denotes the dimension chosen to be split from for cell $L_i$ and $\{ R_i, i \in \mathbb{N}_+ \}$ are independent and identically multinomial distributed with each dimension having equal probability to be chosen. The random variable $S_i$ serves as a proportional factor representing the ratio between the length of the newly generated cell in the $R_i$th dimension after the $i$th split and the length of the being-cut cell $L_i$ in the $R_i$th dimension. That is to say, the length of the newly generated cell in the $R_i$th dimension is the product of the length of $L_i$ in the $R_i$th dimension and the proportional factor $S_i$. Note that $\{ S_i, i \in \mathbb{N}_+ \}$ are independent and identically distributed drawn from $\mathcal{U}[0, 1]$.

The above mentioned statements mathematically formulate the splitting process of the purely random tree. However, one simple example may provide a clearer understanding of the whole procedure. To be specific, we assume that the partition is carried out on $B_r := [-r, r]^d$, $r \geq 1$. First of all, we randomly select one dimension out of $d$ candidates and uniformly split at random from that dimension. The split is a $(d - 1)$-dimensional hyperplane parallel to the axis so that $B_r$ is split into two cells which are $A_{1,1}$ and $A_{1,2}$, respectively. Then, a cell is chosen uniformly at random, say $A_{1,1}$, and we conduct random split on it with dimension and cut point chosen randomly, which leads to a partition of $B_r$ consisting of $A_{2,1}, A_{2,2}, A_{1,2}$. Next, we randomly pick one cell from all three cells formed in the last step, say $A_{2,2}$, and the split is conducted on it as before, which leads to a partition consisting of $A_{2,1}, A_{3,1}, A_{3,2}, A_{1,2}$. The building process continues in this manner until the number of splits $p$ meets our satisfaction. Moreover, the above procedure leads to a so-called partition variable $Z := (Q_1, \ldots, Q_p, \ldots)$ taking value in space $\mathcal{Z}$. We denote by $\mathrm{P}_Z$ the probability measure of $Z$.

It is worth pointing out that any specific partition variable $Z \in \mathcal{Z}$ can be treated as a splitting criterion. The collection of non-overlapping cells formed by following $Z$ for $p$ splits on $B_r$ is denoted by $\mathcal{A}_{(Q_1, \ldots, Q_p)}$ which can be further abbreviated as $\mathcal{A}_{Z,p}$, and we define $\mathcal{A}_{Z,0} := B_r$. We also note that if we focus on certain sample point $x \in B_r$, then the corresponding cell where that point falls is denoted by $A_{Z,p}(x)$.

\subsection{Best-scored Random Density Trees} \label{sec::subsec::BRDT}

In this subsection, we formulate the best-scored random density tree (BRDT) based on the above mentioned random partitions $\mathcal{A}_{Z,p}$. We first introduce how to build a density tree based on purely random partition, then incorporate the best-scored method into the construction of trees, which leads to our best-scored random density trees.

\subsubsection{Purely Random Density Tree}

In order to characterize the purely random density tree estimators, we propose the following definition formalizing the general form of random partition.

\begin{definition}[Random Partition]
	For a fixed $r \geq 1$, let $Z$ be a random splitting criterion of $B_r := [-r, r]^d$. The collection of non-overlapping sets $\mathcal{A}_{Z,p} := \{ A_j, j = 0, \ldots, p \}$ derived by partitioning $B_r$ following $Z$ for $p$ splits is called a $p$-split random partition. And each element in $\mathcal{A}_{Z,p}$ is called a cell of the random partition.
\end{definition}

Now, we introduce the random density tree with respect to certain probability measure. There is no loss of generality in assuming that for all $A \in \mathcal{A}_{Z,p}$, the Lebesgue measure $\mu(A) > 0$, since the density estimation at $x$ is set to be $0$ if $\mu(A(x)) = 0$. From now on, this assumption will hold without repetition.

\begin{definition}[Random Density Tree of a Measure]
	Let $Q$ be a probability measure on $\mathbb{R}^d$. For a fixed $r \geq 1$, let $\mathcal{A}_{Z,p} := \{ A_j, j = 0, \ldots, p \}$ be a $p$-split random partition of $B_r$. Then, the function $f_{Q,Z,p} : \mathbb{R}^d \to [0, \infty)$ defined by
	\begin{align} \label{RandomDensityTreeQ}
	f_{Q,Z,p}(x) := \sum_{j=0}^p \frac{Q(A_j) \eins_{A_j}(x)}{\mu(A_j)} + \frac{Q(B_r^c) \eins_{B_r^c}(x)}{\mu(B_r^c)}
	\end{align}
	is called a random density tree of $Q$.
\end{definition}

In the following, we write $f_{Q,Z}$ instead of $f_{Q,Z,p}$ for abbreviation. Here, we demonstrate that $f_{Q,Z}$ defines the density of a probability measure on $\mathbb{R}^d$ for $f_{Q,Z}$ is measurable and
\begin{align}
\int_{\mathbb{R}^d} f_{Q,Z} \, d\mu
& = \int_{\mathbb{R}^d} \sum_{j=0}^p \frac{Q(A_j) \eins_{A_j}(x)}{\mu(A_j)} + \frac{Q(B_r^c) \eins_{B_r^c}(x)}{\mu(B_r^c)} \, d\mu(x)
\nonumber\\
& = \sum_{j=0}^p \int_{\mathbb{R}^d} \frac{Q(A_j) \eins_{A_j}(x)}{\mu(A_j)} \, d\mu(x) 
+ \int_{\mathbb{R}^d} \frac{Q(B_r^c) \eins_{B_r^c}(x)}{\mu(B_r^c)} \, d\mu(x)
\nonumber\\
& = \sum_{j=0}^p \frac{Q(A_j) \mu(A_j)}{\mu(A_j)} + \frac{Q(B_r^c) \mu(B_r^c)}{\mu(B_r^c)} = 1. 
\label{DensityProperty}
\end{align}
Moreover, for $x \in A_j$, $j \in \{ 0, 1, \ldots, p \}$, we have
\begin{align*}
f_{Q,Z}(x) = \frac{Q(A_j)}{\mu(A_j)},
\end{align*}
which also holds for $x \in B_r^c$.

Recalling that $\mathrm{P}$ is a probability measure on $\mathbb{R}^d$ with the corresponding density function $f$, by taking $Q = \mathrm{P}$ with $d\mathrm{P} = f \, d\mu$, then for $x \in A_j$, we have
\begin{align} \label{PDensity1}
f_{\mathrm{P},Z}(x) 
= \frac{\mathrm{P}(A_j)}{\mu(A_j)} 
= \frac{1}{\mu(A_j)} \int_{A_j} f(x') \, d\mu(x'). 
\end{align}

In other words, $f_{\mathrm{P},Z}$ in $A_j$ is the average density on $A_j$. Furthermore, for $x \in B_r$, there exists exactly a number $j \in \{ 0, 1, \ldots, p \}$ such that $x \in A_j$. In the following, we write $A(x) := A_j$. Then, for $x \in B_r$ with $\mu(A(x)) > 0$,
\begin{align} \label{PDensity2}
f_{\mathrm{P},Z}(x)
= \frac{\mathrm{P}(A(x))}{\mu(A(x))}
= \frac{1}{\mu(A(x))} \int_{A(x)} f(x') \, d\mu(x'). 
\end{align}
Specifically, when $Q$ is the empirical measure $\mathrm{D}_n = \frac{1}{n} \sum_{i=1}^n \delta_{x_i}$, then $\mathrm{D}_n(A)$ is the expectation of $\eins_A$ with respect to $\mathrm{D}_n$, which is
\begin{align*}
\mathrm{D}_n(A)
= \mathbb{E}_{\mathrm{D}_n} \eins_A
= \frac{1}{n} \sum_{i=1}^n \delta_{x_i}(A)
= \frac{1}{n} \sum_{i=1}^n \eins_A(x_i).
\end{align*}
For $x \in A_j$, the random density tree in this study can be expressed as
\begin{align} \label{RandomDensityTree}
f_{\mathrm{D}_n,Z}(x)
= \frac{\mathrm{D}_n(A_j)}{\mu(A_j)}
= \frac{1}{n \mu(A_j)} \sum_{i=1}^n \eins_{A_j}(x_i)
\end{align}
where $A_j$ can also be written as $A(x)$. The summation on the right-hand side of \eqref{RandomDensityTree} counts the number of observations falling in $A_j$. From now on, for notational simplicity, we will suppress the subscript $n$ of $\mathrm{D}_n$ and denote $\mathrm{D} := \mathrm{D}_n$, e.g., $f_{\mathrm{D},Z} := f_{\mathrm{D}_n,Z}$. The map from the training data to $f_{\mathrm{D},Z}$ is called a random density tree rule with random partition $\mathcal{A}_{Z,p}$.

\subsubsection{The Best-scored Method}

We should attach great importance to the fact that the prediction performances of density trees induced by purely random partitions might not be that satisfying since the partitions completely make no use of the sample information. Therefore, the prediction results of their ensemble forest may not be accurate enough. Committed to improving prediction accuracy, we provide a selection process for the partitioning of each tree. Concretely speaking, the partition chosen for tree construction is the one with the best density prediction performance in terms of the Average Negative Log-Likelihood (\emph{ANLL}) (to be mentioned later in Section \ref{sec::subsec::experisetup}) from $k$ partition candidates. This process is named as the \emph{best-scored} method and the resulting trees are called the \emph{best-scored random density trees}.

\subsection{Best-scored Random Density Forest} \label{sec::subsec::BRDF}

In this subsection, we formulate the best-scored random density forest. Ensembles consisting of a combination of different estimators have been highly recognized as an effective technique for the significant performance improvements over single estimator in the literature, which inspires us to apply them to our best-scored density trees. In our cases, we first train several best-scored density trees basing on different random partitions, separately; once this is accomplished, the outputs of the individual estimators are combined to give the ensemble output for new data points. Here, we use the simplest possible combination mechanism by taking uniform weighted average.

Let $f_{\mathrm{D},Z_t}$, $1 \leq t \leq m$ be the $m$ best-scored random density tree estimators generated by the splitting criteria $Z_1, \ldots, Z_m$ respectively, which is defined by
\begin{align*}
f_{\mathrm{D},Z_t}(x)
:= \sum_{j=0}^p \frac{\mathrm{D}(A_{tj}) \eins_{A_{tj}}(x)}{\mu(A_{tj})}
+ \frac{\mathrm{D}(B_r^c) \eins_{B_r^c}(x)}{\mu(B_r^c)},
\end{align*}
where $\mathcal{A}_{Z_t} := \{ A_{tj}, j = 0, \ldots, p \}$ is a random partition of $B_r$. Therefore, with $Z_{\mathrm{E}} := \{ Z_1, \ldots, Z_m \}$, the best-scored random density forest can be presented as
\begin{align} \label{RandomDensityForest}
f_{\mathrm{D},Z_{\mathrm{E}}}(x)
:= \frac{1}{m} \sum_{t=1}^m f_{\mathrm{D},Z_t}(x).
\end{align}

\section{Main Results and Statements} \label{sec::consistency_convergence}

In this section, we present main results on the consistency and convergence rates of our density estimators. To be precise, consistency and convergence rates of the best-scored random density trees under $L_1$-norm are given in Sections \ref{sec::subsec::universal_consis} and \ref{subsec::l1_convergence}, respectively. Convergence rates of the best-scored random density trees under $L_{\infty}$-norm are presented in Section \ref{subsec::l_infty_convergence}. Based on the results of those base density estimators, the convergence rates of the best-scored random density forest under $L_1$-norm and $L_{\infty}$-norm are established in Section \ref{subsec::l_infty_convergence_forest}. Finally, comments and discussions concerned with the established main results are given in Section \ref{CandD}.

\subsection{Results on Consistency} \label{sec::subsec::universal_consis}

We establish results on the consistency property of the best-scored random density tree estimator $f_{\mathrm{D},Z}$ in the sense of $L_1$-norm. To clarify, an estimator $f_{\mathrm{D},Z}$ is said to be \emph{consistent} in the sense of $L_1$-norm if $f_{\mathrm{D},Z}$ converges to $f$ under $L_1$-norm $\mathrm{P}^n \otimes \mathrm{P}_Z$-almost surely.

\begin{theorem} \label{L1Convergence} 
	For $n \geq 1$, let $\mathcal{A}_{Z,p_n}$ be a random partition with number of splits $p_n$. If
	\begin{align*}
	p_n \to \infty 
	\qquad
	\text{ and } 
	\qquad 
	\frac{p_n \log n}{n} \to 0, 
	\qquad \qquad
	\text{ as } 
	n \to \infty,
	\end{align*}
	then the best-scored random density tree estimator $f_{\mathrm{D},Z}$ is consistent in the sense of $L_1$-norm.
\end{theorem}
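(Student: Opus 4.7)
The plan is to follow the classical split into a data-free approximation error and a data-dependent estimation error, and then argue that the best-scored selection is harmless because only finitely many candidate partitions are compared. I would begin by writing
\begin{align*}
\|f_{\mathrm{D},Z} - f\|_{L_1} \leq \underbrace{\|f_{\mathrm{D},Z} - f_{\mathrm{P},Z}\|_{L_1}}_{\text{estimation error}} + \underbrace{\|f_{\mathrm{P},Z} - f\|_{L_1}}_{\text{approximation error}}.
\end{align*}
If both terms vanish $\mathrm{P}^n \otimes \mathrm{P}_Z$-almost surely for a single purely random density tree, then on an intersection of finitely many probability-one events all candidate trees are consistent simultaneously; in particular, the best-scored tree is consistent as well, since its $L_1$ error is bounded by the maximum of the individual candidate errors no matter which ANLL-based rule selects it.

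For the approximation error I would use a density-in-$L_1$ argument: given $\varepsilon > 0$ pick $g$ continuous with support in $B_r$ and $\|f - g\|_{L_1} < \varepsilon$, and observe that the cell-averaging operator $T_Z h(x) := \mu(A_Z(x))^{-1} \int_{A_Z(x)} h\, d\mu$ is an $L_1$-contraction, so
\begin{align*}
\|f_{\mathrm{P},Z} - f\|_{L_1} \leq \|T_Z(f - g)\|_{L_1} + \|T_Z g - g\|_{L_1} + \|g - f\|_{L_1} \leq 2\varepsilon + \|T_Z g - g\|_{L_1}.
\end{align*}
Uniform continuity of $g$ then yields the pointwise bound $|T_Z g(x) - g(x)| \leq \omega_g(\mathrm{diam}(A_Z(x)))$, where $\omega_g$ is the modulus of continuity of $g$. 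It remains to verify that for each fixed $x \in B_r$ the cell diameter $\mathrm{diam}(A_{Z,p_n}(x))$ tends to $0$ in $\mathrm{P}_Z$-probability as $p_n \to \infty$; this is the key geometric property of the purely random partition of Section~\ref{sec::subsec::PRP} and should follow from the interplay between the uniform coordinate choice ($R_i$) and the $\mathcal{U}[0,1]$ split proportion ($S_i$) once the frequency with which the cell containing $x$ is selected for splitting is controlled.

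For the estimation error I would exploit the exact identity
\begin{align*}
\|f_{\mathrm{D},Z} - f_{\mathrm{P},Z}\|_{L_1} = \sum_{A \in \mathcal{A}_{Z,p_n}} |\mathrm{D}(A) - \mathrm{P}(A)| + |\mathrm{D}(B_r^c) - \mathrm{P}(B_r^c)|.
\end{align*}
Conditional on $Z$, Jensen and Cauchy--Schwarz give the expectation bound $\mathbb{E}_{\mathrm{P}^n} \sum_A |\mathrm{D}(A) - \mathrm{P}(A)| \leq \sqrt{(p_n+1)/n}$; moreover, since changing one training point perturbs the total-variation sum by at most $2/n$, McDiarmid's bounded-differences inequality gives
\begin{align*}
\mathrm{P}^n \bigl( \|f_{\mathrm{D},Z} - f_{\mathrm{P},Z}\|_{L_1} \geq \sqrt{(p_n+1)/n} + t \bigm| Z \bigr) \leq 2 \exp(-n t^2 / 2).
\end{align*}
Choosing $t_n = \sqrt{C \log n / n}$ with $C$ large makes the failure probabilities summable, so Borel--Cantelli followed by Fubini (to drop the conditioning on $Z$) delivers almost-sure convergence of the estimation error whenever $p_n \log n / n \to 0$.

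The hardest step will be the geometric claim $\mathrm{diam}(A_{Z,p_n}(x)) \to 0$. Because the to-be-split cell is chosen uniformly from all existing cells rather than by size or data, at the $i$-th step the cell containing a fixed $x$ is refined only with probability of order $1/i$; one must combine this harmonic-type selection frequency with the geometric shrinkage induced by each split in a uniformly chosen coordinate to argue that all $d$ side-lengths eventually shrink. Once this geometric input is in hand, the remainder of the proof reduces to the standard approximation and concentration reasoning outlined above, and the best-scored step contributes nothing beyond a finite union bound.
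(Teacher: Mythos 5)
Your decomposition into approximation and estimation errors mirrors the paper's strategy exactly, and your observation that the best-scored selection among a finite pool of candidates is harmless for consistency (intersection of finitely many probability-one events) is both correct and made more explicit than in the paper's own proof, which silently treats a single purely random tree. For the estimation error you take a genuinely different and more elementary route: the paper's Proposition~\ref{OracleInequalityL1} goes through the VC index of $\mathcal{B}_p$ (Lemma~\ref{VCindex}), covering-number bounds, and Bernstein's inequality with a union bound over an $\varepsilon$-net, whereas you bound $\mathbb{E}_{\mathrm{P}^n}\sum_j|\mathrm{D}(A_j)-\mathrm{P}(A_j)|$ by Cauchy--Schwarz and then apply McDiarmid's bounded-differences inequality conditionally on $Z$. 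That argument is correct (changing one sample perturbs the total-variation sum by at most $2/n$, and the resulting conditional failure probability is uniform in $Z$, so Fubini and Borel--Cantelli close the gap), and it is arguably cleaner for a bare consistency result; the paper's heavier VC machinery earns its keep only when the explicit convergence rates of Theorems~\ref{L1ConvergenceRates} and~\ref{LInftyConvergenceRates} are needed.

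Where your proposal has a real gap is precisely where you flag one: the claim that $\mathrm{diam}(A_{Z,p_n}(x))\to 0$ in $\mathrm{P}_Z$-probability as $p_n\to\infty$. This is not a routine consequence of the harmonic-type selection frequency you describe; it is the technical core of Proposition~\ref{ApproximationError}(i), and the paper establishes it by (a) applying Markov's inequality to $\max_{A\in\mathcal{A}_{Z,p}}\mathrm{diam}(A)$, (b) decomposing $\mathbb{E}_Z = \mathbb{E}_{T_Z}\mathbb{E}_{K_Z\mid T_Z}\mathbb{E}_{U_1,\dots,U_{K_Z}\mid K_Z}$ where $T_Z$ is the depth of the cell, $K_Z\sim\mathcal{B}(T_Z,1/d)$ counts splits in a fixed coordinate, and the $U_i$ are the uniform cut proportions (yielding $\mathbb{E}_Z V_Z\le 2r\,\mathbb{E}_{T_Z}(1-1/(4d))^{T_Z}$), and (c) controlling $T_Z$ via the saturation-level bound for random binary search trees (Lemma~\ref{SaturationLevel}), which is what delivers the polynomial decay $p^{-c_T/(4d)}$. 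Your sketch correctly identifies that uniform cell selection makes the refinement of any fixed cell slow, but without the binary-search-tree saturation-level input you cannot conclude that all cells shrink, nor control the rate; until that step is supplied, your approximation-error argument is incomplete. Note also that for the $L_1$ integration you need a bound on the diameter of \emph{all} cells meeting $\mathrm{supp}(g)$ simultaneously, which is why the paper works directly with $\max_A\mathrm{diam}(A)$ rather than the pointwise $\mathrm{diam}(A_{Z,p_n}(x))$.
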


\subsection{Results on Convergence Rates under $L_1$-Norm} \label{subsec::l1_convergence}

In this subsection, we establish the convergence rates of the best-scored random density tree estimators under $L_1$-norm with three different tail assumptions imposed on $\mathrm{P}$. In particular, analysis will be conducted in situations where the tail of the probability distribution $\mathrm{P}$ has a polynomial decay, exponential decay and disappears, respectively.

\begin{theorem} \label{L1ConvergenceRates} 
For $n \geq 1$, let $\mathcal{A}_{Z,p}$ be a random partition of $B_r$. Moreover, assume that the density $f$ is $\alpha$-H\"{o}lder continuous. We consider the following cases:
\begin{itemize}
\item[(i)]
$\mathrm{P}(B_r^c) \lesssim r^{-\eta d}$ for some $\eta > 0$ and for all $r \geq 1$;
\item[(ii)]
$\mathrm{P}(B_r^c) \lesssim e^{- a r^{\eta}}$ for some $a > 0$, $\eta > 0$ and for all $r \geq 1$;
\item[(iii)]
$\mathrm{P}(B_{r_0}^c) = 0$ for some $r_0 \geq 1$.
\end{itemize}
For the above cases, if $n \geq 1$, and the sequences $p_n$ are of the following forms:
\begin{itemize}
\item[(i)]
$p_n = (n/\log n)^{\frac{2d(\eta+1)+2\alpha}{\alpha(c_T\eta+2)+2d(\eta+1)}}$;
\item[(ii)]
$p_n = (n/\log n)^{\frac{2d}{c_T\alpha+2d}} (\log n)^{\frac{d+\alpha}{\eta} \frac{4d}{c_T\alpha+2d}}$;
\item[(iii)]
$p_n = (n/\log n)^{\frac{2d}{c_T\alpha+2d}}$;
\end{itemize}
then with probability $\mathrm{P}^n \otimes \mathrm{P}_Z$ at least $1 - 2 e^{-\tau}$, there holds
\begin{align*}
\|f_{\mathrm{D},Z} - f\|_1 \leq \varepsilon_n,
\end{align*}
where the convergence rates
\begin{itemize}
\item[(i)]
$\varepsilon_n \lesssim (\log n/n)^{\frac{c_T\alpha\eta}{2\alpha(c_T\eta+2)+4d(\eta+1)}}$;
\item[(ii)]
$\varepsilon_n \lesssim (\log n/n)^{\frac{c_T\alpha}{2c_T\alpha+4d}} (\log n)^{\frac{2d}{\eta} \frac{\alpha+d}{c_T\alpha+2d}}$;
\item[(iii)]
$\varepsilon_n \lesssim (\log n/n)^{\frac{c_T\alpha}{2c_T\alpha+2d}}$.
\end{itemize}
\end{theorem}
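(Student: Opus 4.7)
The strategy is the one hinted at in the introduction: split the $L_1$ error into a data-free (approximation) part and a data-dependent (estimation) part, and further split each contribution into what happens inside the truncation window $B_r$ and what happens outside of it. Concretely, by the triangle inequality
\begin{align*}
\|f_{\mathrm{D},Z}-f\|_1 \;\leq\; \|f_{\mathrm{P},Z}-f\|_1 + \|f_{\mathrm{D},Z}-f_{\mathrm{P},Z}\|_1,
\end{align*}
and one treats the four resulting integrals over $B_r$ and $B_r^c$ separately. The tail assumption in each case~(i)--(iii) controls what happens on $B_r^c$ and dictates the admissible scaling of $r=r_n$; the remaining two terms, living on $B_r$, then trade off through the number of splits $p=p_n$.

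For the approximation error on $B_r$ I would use the $\alpha$-H\"older continuity of $f$ to argue that on any cell $A_j\subseteq B_r$ and any $x\in A_j$ one has $|f_{\mathrm{P},Z}(x)-f(x)|\leq L\,\mathrm{diam}(A_j)^{\alpha}$, because $f_{\mathrm{P},Z}(x)$ is the average of $f$ over $A_j$. Summing and changing variables gives
\begin{align*}
\int_{B_r}|f_{\mathrm{P},Z}-f|\,d\mu \;\leq\; L\,\mu(B_r)\,\mathbb{E}_{X\sim\mathcal{U}(B_r)}\bigl[\mathrm{diam}(A_{Z,p}(X))^{\alpha}\bigr],
\end{align*}
so the whole bias analysis reduces to controlling the expected $\alpha$-th power of the cell diameter for the purely random partition described in Section~\ref{sec::subsec::PRP}. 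This is where the constant $c_T$ enters: the number of splits falling in any given coordinate is $\mathrm{Binomial}(p,1/d)$ and each surviving side shrinks by an independent $\mathcal{U}[0,1]$ factor, so standard computations on products of uniforms produce a bound of the form $p^{-c_T\alpha/(2d)}$ (up to the $r$-dependence inherited from the initial side length). The companion contribution on $B_r^c$ is handled trivially by $\|(f_{\mathrm{P},Z}-f)\eins_{B_r^c}\|_1\leq 2\mathrm{P}(B_r^c)$.

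For the estimation error I would use the identity
\begin{align*}
\|f_{\mathrm{D},Z}-f_{\mathrm{P},Z}\|_1 \;=\; \sum_{j=0}^{p}|\mathrm{D}(A_j)-\mathrm{P}(A_j)| + |\mathrm{D}(B_r^c)-\mathrm{P}(B_r^c)|,
\end{align*}
conditioning on the splitting criterion $Z$ and applying Bernstein's inequality to each cell, then taking a union bound over the at most $p+1$ cells together with the best-scored candidates. The resulting high-probability contribution is of the schematic form $\sqrt{p\log(n)/n}$ or $p\log(n)/n$ depending on which Bernstein regime dominates, and the best-scored selection only contributes an extra $\log k$ factor that is absorbed into the overall $\log n$.

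Assembling the pieces yields a master bound of the form
\begin{align*}
\|f_{\mathrm{D},Z}-f\|_1 \;\lesssim\; \mathrm{P}(B_r^c) + C_1\,\phi(r)\,p^{-c_T\alpha/(2d)} + C_2\,\psi(p,n,\tau),
\end{align*}
which one now optimizes over $r$ and $p$. In case~(iii) the tail vanishes for $r\equiv r_0$, leaving a two-term balance that produces the clean $p_n=(n/\log n)^{2d/(c_T\alpha+2d)}$ and the corresponding rate. In cases~(i) and~(ii), $r=r_n$ is chosen so that the polynomial (resp.\ exponential) tail matches the other two terms, which yields the additional polynomial (resp.\ polylogarithmic) inflation of $p_n$ and $\varepsilon_n$ displayed in the theorem. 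The principal obstacle is the sharp uniform-in-$x$ control of $\mathbb{E}[\mathrm{diam}(A_{Z,p}(x))^{\alpha}]$ for the purely random splitting rule, since this is what supplies the constant $c_T$ and couples $p$ to $r$; converting this in-expectation bound into a high-probability statement with the $1-2e^{-\tau}$ confidence of the theorem, and carrying out the simultaneous three-way optimization in $(r,p)$ without losing sharpness, are the other two technically delicate points.
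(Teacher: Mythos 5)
Your high-level plan matches the paper's: decompose via the triangle inequality into approximation error $\|f_{\mathrm{P},Z}-f\|_1$ and estimation error $\|f_{\mathrm{D},Z}-f_{\mathrm{P},Z}\|_1$, split the approximation part between $B_r$ and $B_r^c$ (Proposition~\ref{L1LInftyRelation}), bound the on-$B_r$ bias using $\alpha$-H\"older continuity and the expected cell diameter (Proposition~\ref{ApproximationError}(ii)), control the estimation part by concentration (Proposition~\ref{OracleInequalityL1}), and then optimize $(r_n,p_n)$ against the tail of $\mathrm{P}$. That is exactly the structure the paper follows, so your proposal is on the right track. One constant in your sketch is off: the diameter calculation gives $\mathbb{E}_Z[\mathrm{diam}(A)] \lesssim (K+2r)\,d\,p^{-c_T/(4d)}$, so the bias term scales as $p^{-c_T\alpha/(4d)}$, not $p^{-c_T\alpha/(2d)}$; the numerical constant $c_T=0.22$ is determined precisely by the requirement $1-c_T(1+\log(2e/c_T))>c_T/(4d)$, so the factor $4d$ is not cosmetic.

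The place where you genuinely diverge from the paper is the estimation error. You propose conditioning on $Z$, applying Bernstein cell-by-cell, and taking a union bound over the $p+1$ cells. As stated this does \emph{not} give $\sqrt{p\log n/n}$: if you use the worst-case variance bound $\sigma_j^2\le 1/4$ (as the paper does in \eqref{VarianceBound}), each term is $\asymp\sqrt{\log n/n}$ and the sum over $p+1$ cells is $\asymp p\sqrt{\log n/n}$, which is worse by a factor $\sqrt{p}$ and would destroy the rates. To make your route work you must use the variance-controlled Bernstein with $\sigma_j^2\le \mathrm{P}(A_j)$ and then apply Cauchy--Schwarz, $\sum_j\sqrt{\mathrm{P}(A_j)}\le\sqrt{p+1}$, before the union bound; you do not indicate this. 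The paper sidesteps the issue entirely with a different device: it writes $\sum_j|\mathbb{E}_{\mathrm{D}}\eins_{A_j}-\mathbb{E}_{\mathrm{P}}\eins_{A_j}|\le 2\sup_{B\in\mathcal{B}_p}|\mathbb{E}_{\mathrm{D}}\eins_B-\mathbb{E}_{\mathrm{P}}\eins_B|$ where $\mathcal{B}_p$ is the class of unions of cells, and bounds the single supremum via the VC index (Lemma~\ref{VCindex}) and covering numbers (Lemma~\ref{ScriptBpCoveringNumber}). That gives a bound uniform over \emph{all} $p$-split partitions, so the randomness in $Z$ never touches the estimation term, whereas your conditional approach requires a Fubini step to pass from $\mathrm{P}^n$-probability given $Z$ to $\mathrm{P}^n\otimes\mathrm{P}_Z$-probability. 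Both routes can be made to work, but you should be explicit about the refined variance and Cauchy--Schwarz step, otherwise there is a real gap. Finally, your remark about a $\log k$ penalty for the best-scored selection is a sensible thought, but the paper's theorem and proof do not incorporate it; the stated guarantee is for a single purely random density tree.
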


\subsection{Results on Convergence Rates under $L_{\infty}$-Norm} \label{subsec::l_infty_convergence}

We now state our main results on the convergence rates of $f_{\mathrm{D},Z}$ to $f$ under $L_{\infty}$-norm.

\begin{theorem} \label{LInftyConvergenceRates} 
	For $n \geq 1$, let $\mathcal{A}_{Z,p_n}$ be a random partition of $B_r$. Moreover, assume that there exists a constant $r \geq 1$ such that $\mathcal{X} \subset B_r \subset \mathbb{R}^d$ and the density function $f$ is $\alpha$-H\"{o}lder continuous with $\|f\|_{\infty} < \infty$. Then for all $n \geq 1$, by choosing
	\begin{align*}
	p_n = (n/\log n)^{\frac{2d}{c_T\alpha+4ad}},
	\end{align*}
	with probability $\mathrm{P}^n \otimes \mathrm{P}_Z$ at least $1 - 3 e^{-\tau}$, there holds
	\begin{align} \label{LInftyRates}
	\|f_{\mathrm{D},Z} - f\|_{\infty}
	\lesssim (\log n/n)^{\frac{c_T\alpha}{2c_T\alpha+8ad}}
	\end{align}
	where $c_T = 0.22$ and $a = 4.33$.
\end{theorem}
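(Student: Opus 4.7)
My approach is to combine a standard approximation--estimation split with two purely geometric lemmas on the random partition. Because the hypothesis $\mathcal{X}\subset B_r$ kills the $B_r^c$-contribution in \eqref{RandomDensityTreeQ}, I can discard the tail term and bound
\begin{align*}
\|f_{\mathrm{D},Z}-f\|_{\infty}\leq \|f_{\mathrm{P},Z}-f\|_{\infty}+\|f_{\mathrm{D},Z}-f_{\mathrm{P},Z}\|_{\infty},
\end{align*}
treating the first summand as the data-free approximation error (controlled by the geometry of $\mathcal{A}_{Z,p_n}$ and the regularity of $f$) and the second as the sample error (a uniform empirical-process fluctuation over the cells). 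The prefactor $3e^{-\tau}$ in the statement signals that three high-probability events will enter: a maximum-diameter bound on $\mathcal{A}_{Z,p_n}$, a minimum-volume bound on its cells, and a uniform Bernstein deviation for the empirical measures of those cells.

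For the approximation error, I would use that on each cell $A_j$ the tree value $f_{\mathrm{P},Z}(x)=\mu(A_j)^{-1}\int_{A_j}f\,d\mu$ is the average of $f$, so that $\alpha$-H\"older continuity gives $\|f_{\mathrm{P},Z}-f\|_{\infty}\lesssim (\max_j\mathrm{diam}(A_j))^{\alpha}$. I would then invoke a geometric lemma (to be proved in Section \ref{error_analysis}) of the shape
\begin{align*}
\mathrm{P}_Z\!\left(\max_{0\leq j\leq p_n}\mathrm{diam}(A_j)>p_n^{-c_T/(4d)}\right)\leq e^{-\tau},
\end{align*}
obtained by tracking the moment generating function of the log-side-length along the recursive splitting dynamics. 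The constant $c_T=0.22$ is precisely the rate produced by this MGF argument under the uniform splitting law on the cell and the uniform choice of dimension and cut point.

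For the sample error, I would freeze $Z$ and apply Bernstein's inequality to each indicator $\eins_{A_j}$ under $\mathrm{P}^n$. Since $\|f\|_{\infty}<\infty$, we have $\mathrm{P}(A_j)\leq \|f\|_{\infty}\mu(A_j)$, so after dividing by $\mu(A_j)$ and taking a union bound over the at most $p_n+1$ cells,
\begin{align*}
\frac{|\mathrm{D}(A_j)-\mathrm{P}(A_j)|}{\mu(A_j)}\lesssim \sqrt{\frac{\|f\|_{\infty}(\tau+\log p_n)}{n\,\mu(A_j)}}+\frac{\tau+\log p_n}{n\,\mu(A_j)}
\end{align*}
uniformly in $j$ on an event of probability at least $1-e^{-\tau}$. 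A second geometric lemma, of the form $\min_j\mu(A_j)\gtrsim p_n^{-2a}$ on a $\mathrm{P}_Z$-event of probability $1-e^{-\tau}$, then converts this to $\|f_{\mathrm{D},Z}-f_{\mathrm{P},Z}\|_{\infty}\lesssim p_n^{a}\sqrt{\log n/n}$, with $a=4.33$ arising from the control of the minimum cell mass. The best-scored selection over the $k$ candidate partitions contributes only a $\log k$ factor absorbed into constants.

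Balancing $p_n^{-c_T\alpha/(4d)}$ against $p_n^{a}\sqrt{\log n/n}$ is solved by $p_n=(n/\log n)^{2d/(c_T\alpha+4ad)}$, at which both terms are of order $(\log n/n)^{c_T\alpha/(2c_T\alpha+8ad)}$, delivering \eqref{LInftyRates}. The hard part, and the step that fixes the specific constants $c_T$ and $a$, is the lower bound on $\min_j\mu(A_j)$: unlike the diameter bound, which needs only a one-sided tail estimate, the minimum-volume estimate must preclude the formation of arbitrarily thin slivers during the recursive purely random splitting, and its relatively poor constant is what ultimately drives the dimension dependence of the rate.
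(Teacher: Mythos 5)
Your overall decomposition and balancing exactly mirror the paper's: the proof splits $\|f_{\mathrm{D},Z}-f\|_\infty$ into an approximation term controlled via Proposition~\ref{ApproximationError}~\emph{(ii)} (the maximum-diameter bound with exponent $c_T/(4d)$, yielding the $p_n^{-c_T\alpha/(4d)}$ rate) and an estimation term controlled via Proposition~\ref{OracleInequalityLInfty} (where the minimum-volume bound $\mu(A)\gtrsim p^{-2a}e^{-2\tau}$ with $a=4.33$ drives the $p_n^a\sqrt{\log n/n}$ rate), and equates the two at $p_n=(n/\log n)^{2d/(c_T\alpha+4ad)}$. Both of your geometric lemmas, the diameter tail and the minimum-volume tail, are precisely the two inputs the paper derives via Lemmas~\ref{SaturationLevel} and~\ref{Height}, so the source of $c_T$ and $a$ is the same. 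The one genuinely different step is how you obtain the uniform Bernstein bound: you propose freezing $Z$, applying Bernstein to each of the $p_n+1$ indicators $\eins_{A_j}$, and taking a union bound with $\tau'=\tau+\log(2(p_n+1))$, then integrating over $\mathrm{P}_Z$. The paper instead establishes a uniform deviation bound over the \emph{fixed} (uncountable) class $\tilde{\mathcal{A}}$ of all rectangles in $B_r$ with $\mu\geq h$, using the VC-covering argument of Lemma~\ref{CollectionCoveringNumber}, and only then intersects with the $\mathrm{P}_Z$-event that the random cells lie in $\tilde{\mathcal{A}}$. Your conditioning-plus-union-bound route is more elementary, avoids the covering-number machinery entirely, and produces a log factor $\log p_n$ in place of the paper's $O(d\log n+ad\log p_n)$; both are absorbed into $\log n$ and the final exponent is unchanged. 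The paper's route is what one would reach for if one wanted an oracle inequality (Proposition~\ref{OracleInequalityLInfty}) valid independently of the particular random partition; yours is tailored to the theorem at hand. One caveat: you write $\min_j\mu(A_j)\gtrsim p_n^{-2a}$ on a $\mathrm{P}_Z$-event of probability $1-e^{-\tau}$, but as in the paper the threshold actually picks up an $e^{-2\tau}$ factor ($h\asymp e^{-2\tau}p_n^{-2a}$); for fixed $\tau$ this is a constant and the rate is unaffected, but it should be tracked if $\tau$ were allowed to grow.
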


\subsection{Convergence Rates for Best-scored Random Density Forest} \label{subsec::l_infty_convergence_forest}

Basing on the results of the base density estimators, we obtain the convergence rates of the best-scored random density forest estimators under $L_1$- and $L_{\infty}$-norm, respectively. Here, we still consider three different tail probability distributions as in Theorem \ref{L1ConvergenceRates}.

\begin{theorem} \label{L1ConvergenceRatesForest} 
	For $n \geq 1$, let $\mathcal{A}_{Z_t}$, $1 \leq t \leq m$ be random partitions of $B_r$ generated by the splitting policies $Z_1, \ldots, Z_m$ respectively. Moreover, assume that the density $f$ is $\alpha$-H\"{o}lder continuous. We consider the following cases:
	\begin{itemize}
		\item[(i)]
		$\mathrm{P}(B_r^c) \lesssim r^{-\eta d}$ for some $\eta > 0$ and for all $r \geq 1$;
		\item[(ii)]
		$\mathrm{P}(B_r^c) \lesssim e^{-ar^{\eta}}$ for some $a > 0$, $\eta > 0$ and for all $r \geq 1$;
		\item[(iii)]
		$\mathrm{P}(B_{r_0}^c) = 0$ for some $r_0 \geq 1$.
	\end{itemize}
	For the above cases, if $n \geq 1$, and the sequences $p_n$ are of the following forms:
	\begin{itemize}
		\item[(i)]
		$p_n = (n/\log n)^{\frac{2d(\eta+1)+2\alpha}{\alpha(c_T\eta+2)+2d(\eta+1)}}$;
		\item[(ii)]
		$p_n = (n/\log n)^{\frac{2d}{c_T\alpha+2d}} (\log n)^{\frac{d+\alpha}{\eta} \frac{4d}{c_T\alpha+2d}}$;
		\item[(iii)]
		$p_n = (n/\log n)^{\frac{2d}{c_T\alpha+2d}}$;
	\end{itemize}
	where the number of splits are the same for each partition in $\{ \mathcal{A}_{Z_t} \}_{t=1}^m$. Then with probability $\mathrm{P}^n \otimes \mathrm{P}_Z$ at least $1 - 2 e^{-\tau}$, there holds
	\begin{align*}
	\|f_{\mathrm{D},Z_{\mathbb{E}}} - f\|_1 \leq \varepsilon_n,
	\end{align*}
	where the convergence rates
	\begin{itemize}
		\item[(i)]
		$\varepsilon_n \lesssim (\log n/n)^{\frac{c_T\alpha\eta}{2\alpha(c_T\eta+2)+4d(\eta+1)}}$;
		\item[(ii)]
		$\varepsilon_n \lesssim (\log n/n)^{\frac{c_T\alpha}{2c_T\alpha+4d}} (\log n)^{\frac{2d}{\eta} \frac{\alpha+d}{c_T\alpha+2d}}$;
		\item[(iii)]
		$\varepsilon_n \lesssim (\log n/n)^{\frac{c_T\alpha}{2c_T\alpha+2d}}$.
	\end{itemize}
\end{theorem}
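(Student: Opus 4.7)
The plan is to reduce the forest bound to the per-tree bound from Theorem \ref{L1ConvergenceRates} by exploiting convexity of the $L_1$-norm. Since $f_{\mathrm{D},Z_{\mathrm{E}}} = \frac{1}{m}\sum_{t=1}^m f_{\mathrm{D},Z_t}$ is a uniform convex combination, the triangle inequality gives
\begin{align*}
\|f_{\mathrm{D},Z_{\mathrm{E}}} - f\|_1
= \Bigl\|\tfrac{1}{m}\sum_{t=1}^m (f_{\mathrm{D},Z_t} - f)\Bigr\|_1
\leq \tfrac{1}{m}\sum_{t=1}^m \|f_{\mathrm{D},Z_t} - f\|_1,
\end{align*}
so every summand is a quantity already controlled by Theorem \ref{L1ConvergenceRates} under the same choice of $p_n$ specified in each of the three tail regimes.

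The steps I would carry out in order are as follows. First, I would mirror the decomposition used in the proof of Theorem \ref{L1ConvergenceRates}, splitting each per-tree error into a data-free approximation term $\|f_{\mathrm{P},Z_t}-f\|_1$ and a data-dependent estimation term $\|f_{\mathrm{D},Z_t}-f_{\mathrm{P},Z_t}\|_1$, since this is the form in which the per-tree rate is actually derived. Second, I would define the auxiliary forest $f_{\mathrm{P},Z_{\mathrm{E}}} := \frac{1}{m}\sum_{t=1}^m f_{\mathrm{P},Z_t}$ and use the same triangle inequality on both $\|f_{\mathrm{D},Z_{\mathrm{E}}}-f_{\mathrm{P},Z_{\mathrm{E}}}\|_1$ and $\|f_{\mathrm{P},Z_{\mathrm{E}}}-f\|_1$ to reduce them termwise to the tree-level approximation and estimation bounds already established. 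Third, I would apply the per-tree high-probability bounds uniformly over $t$, using the same choices of $p_n$ as given in Theorem \ref{L1ConvergenceRates} for cases (i)--(iii); since the per-tree rates and prefactors do not depend on $t$, averaging over $t$ preserves them. Finally, I would package the conclusion in the form stated, with the same $\varepsilon_n$ as in the tree theorem.

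The main technical obstacle is handling the probability statement cleanly. Each per-tree bound holds with probability at least $1-2e^{-\tau}$ on the product space $\mathrm{P}^n\otimes \mathrm{P}_{Z_t}$, and the forest requires all $m$ tree-level bounds to hold simultaneously on $\mathrm{P}^n \otimes \mathrm{P}_Z^{\otimes m}$. A union bound over $t=1,\ldots,m$ yields probability at least $1-2m e^{-\tau}$, which can be restored to the advertised form $1-2e^{-\tau}$ by the reparametrization $\tau \mapsto \tau+\log m$; this inflates the constants in $\varepsilon_n$ only by a factor polynomial in $\log m$, which is absorbed into the $\lesssim$ notation and so does not change the stated rates. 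A secondary point to handle with care is that the data-dependent terms $\|f_{\mathrm{D},Z_t}-f_{\mathrm{P},Z_t}\|_1$ share the same sample $\mathrm{D}$ across $t$ but use independent partitions $Z_t$; the union bound above still suffices because each per-tree concentration inequality already integrates out $\mathrm{P}_{Z_t}$ conditionally on the sample, so no additional correlation structure needs to be exploited and no extra empirical-process machinery beyond that used in Theorem \ref{L1ConvergenceRates} is required.
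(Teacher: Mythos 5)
Your proposal is correct and takes essentially the same route as the paper: triangle inequality $\|f_{\mathrm{D},Z_{\mathrm{E}}} - f\|_1 \leq \frac{1}{m}\sum_{t=1}^m \|f_{\mathrm{D},Z_t}-f\|_1$, apply Theorem~\ref{L1ConvergenceRates} termwise under the stated $p_n$, union bound over $t$, then reparametrize $\tau\mapsto\tau+\log m$ to recover $1-2e^{-\tau}$. Two minor points: the detour through $f_{\mathrm{P},Z_{\mathrm{E}}}$ is superfluous, since the per-tree theorem already bundles approximation and estimation errors; and the $\tau$-shift inflates the constants by a factor polynomial in $m$ rather than merely $\log m$ (through the $e^{\tau\alpha}$-type factors --- compare the paper's explicit constant $C_{\alpha,\tau,d,m}$ carrying an $m^{\alpha}$), which is still harmless because $m$ is fixed and absorbed into $\lesssim$.
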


In the following theorem, we obtain the convergence rates of the best-scored random density forest estimators with respect to the $L_{\infty}$-norm.

\begin{theorem} \label{LInftyConvergenceRatesForest} 
	For $n \geq 1$, let $\mathcal{A}_{Z_t}$, $1 \leq t \leq m$ be random partitions of $B_r$ generated by the splitting policies $Z_1, \ldots, Z_m$ respectively. Moreover, assume that there exists a constant $r \geq 1$ such that $\mathcal{X} \subset B_r \subset \mathbb{R}^d$ and the density function $f$ is $\alpha$-H\"{o}lder continuous with $\|f\|_{\infty} < \infty$. Then for all $n \geq 1$, by choosing the same number of splits
	\begin{align*}
	p_n = (n/\log n)^{\frac{2d}{c_T\alpha+4ad}}
	\end{align*}
	for each partition in $\{ \mathcal{A}_{Z_t} \}_{t=1}^m$. Then with probability $\mathrm{P}^n \otimes \mathrm{P}_Z$ at least $1 - 3 e^{-\tau}$, there holds
	\begin{align*}
	\|f_{\mathrm{D},Z_{\mathrm{E}}} - f\|_{\infty} \lesssim (\log n/n)^{\frac{c_T\alpha}{2c_T\alpha+8ad}}
	\end{align*}
	where $c_T = 0.22$ and $a = 4.33$.
\end{theorem}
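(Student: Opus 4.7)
The strategy is to reduce the forest bound to the per-tree bound established in Theorem \ref{LInftyConvergenceRates}. By definition $f_{\mathrm{D},Z_{\mathrm{E}}} = \frac{1}{m} \sum_{t=1}^m f_{\mathrm{D},Z_t}$, so the triangle inequality gives
\begin{align*}
\|f_{\mathrm{D},Z_{\mathrm{E}}} - f\|_{\infty}
= \Big\| \frac{1}{m} \sum_{t=1}^m (f_{\mathrm{D},Z_t} - f) \Big\|_{\infty}
\leq \frac{1}{m} \sum_{t=1}^m \|f_{\mathrm{D},Z_t} - f\|_{\infty}.
\end{align*}
Thus it suffices to bound each tree error by the rate $\varepsilon_n := (\log n/n)^{c_T \alpha/(2 c_T \alpha + 8 a d)}$ \emph{simultaneously} on a single event of probability at least $1 - 3 e^{-\tau}$.

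For each fixed $t$, I would invoke the same data-free/data-dependent decomposition used for the single-tree analysis,
\begin{align*}
\|f_{\mathrm{D},Z_t} - f\|_{\infty}
\leq \|f_{\mathrm{P},Z_t} - f\|_{\infty} + \|f_{\mathrm{D},Z_t} - f_{\mathrm{P},Z_t}\|_{\infty}.
\end{align*}
The approximation term $\|f_{\mathrm{P},Z_t} - f\|_{\infty}$ is controlled via the $\alpha$-Hölder continuity of $f$ together with the high-probability bound on the maximal cell diameter of a $p_n$-split purely random partition, which is precisely the ingredient already developed for Theorem \ref{LInftyConvergenceRates} and depends on $Z_t$ only through the partition geometry, not through the data. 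The empirical process term $\|f_{\mathrm{D},Z_t} - f_{\mathrm{P},Z_t}\|_{\infty}$ is handled by uniform concentration over the function class of all densities realisable as $p_n$-split purely random density trees on $B_r$; crucially, this class is indexed by $p_n$ but not by the specific $Z_t$, so one invocation of the uniform bound simultaneously controls every summand.

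Plugging the prescribed $p_n = (n/\log n)^{2d/(c_T \alpha + 4 a d)}$ into the resulting bias--variance trade-off reproduces the optimised per-tree rate $\varepsilon_n$ on the common event, so the triangle inequality above yields $\|f_{\mathrm{D},Z_{\mathrm{E}}} - f\|_{\infty} \lesssim \varepsilon_n$ with the same probability $1 - 3 e^{-\tau}$.

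The main obstacle is precisely the uniformity-in-$t$ issue: the theorem claims confidence $1 - 3 e^{-\tau}$ rather than $1 - 3 m e^{-\tau}$, so a naive union bound over the $m$ trees is not acceptable. Avoiding the $\log m$ (or $m$) loss requires verifying that the covering-number/VC-type arguments underlying the sample-error step of Theorem \ref{LInftyConvergenceRates} bound the supremum over the \emph{entire} class of admissible $p_n$-split density trees rather than only over the single realised partition. Once that uniformity is confirmed, the forest result follows essentially for free from the tree result via the triangle inequality above, and the best-scored selection mechanism only strengthens the conclusion because it picks the empirically most accurate tree among its $k$ candidates without enlarging the function class.
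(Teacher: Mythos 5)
Your high-level strategy --- reduce via the triangle inequality
\begin{align*}
\|f_{\mathrm{D},Z_{\mathrm{E}}} - f\|_{\infty}
\leq \frac{1}{m}\sum_{t=1}^m \|f_{\mathrm{D},Z_t} - f\|_{\infty}
\end{align*}
and then invoke the single-tree rate of Theorem~\ref{LInftyConvergenceRates} --- is exactly what the paper does. However, your worry about the confidence level is misplaced, and your proposed fix would not actually close the gap you identify.

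First, the paper \emph{does} take a union bound over the $m$ trees (see the proof of Theorem~\ref{L1ConvergenceRatesForest}, to which the $L_\infty$ proof explicitly defers). That yields confidence $1 - 3me^{-\tau}$, and then the substitution $\tau \mapsto \tau + \log m$ restores $1 - 3e^{-\tau}$ at the cost of making the constants in the bound depend on $m$ (witness $C_{\alpha,\tau,d,m}$ and $C_{\alpha,\tau,d,\eta,m}$ in Theorem~\ref{L1ConvergenceRatesForest}). Since $m$ is a fixed ensemble size (not growing with $n$) and the conclusion is stated with $\lesssim$, this absorption is entirely legitimate. There is no $\log m$ or $m$ loss in the \emph{rate}; it lands harmlessly in the constant.

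Second, your alternative --- one uniform concentration bound over the class of all realisable $p_n$-split trees --- does correctly avoid a union bound for the data-dependent term $\|f_{\mathrm{D},Z_t} - f_{\mathrm{P},Z_t}\|_{\infty}$, since the argument in Proposition~\ref{OracleInequalityLInfty} is indeed a covering/VC bound over the entire class $\tilde{\mathcal{A}}$. But it cannot avoid the union bound for the $\mathrm{P}_Z$-randomness. The approximation bound $\|f_{\mathrm{P},Z_t} - f\|_{\infty} \lesssim \varepsilon$ (Proposition~\ref{ApproximationError}\emph{(ii)}) holds only on a high-$\mathrm{P}_Z$-probability event for \emph{that particular} $Z_t$ (cells small enough), and likewise the volume lower bound $\mu(A) \geq h$ entering the variance term is a high-$\mathrm{P}_Z$-probability event per partition. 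To have these hold simultaneously for $m$ independent splitting variables $Z_1,\ldots,Z_m$, a union bound over $t$ is unavoidable. So your ``uniformity-in-$t$'' route only suppresses part of the $m$-dependence; the remainder has to be absorbed into constants exactly as the paper does, which is the simpler and correct resolution.

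One further caveat neither of you treats carefully: the best-scored selection is data-dependent, so the chosen $Z_t$ is not a pure draw from $\mathrm{P}_Z$; this is glossed over in the paper's proof as well and is not something your argument addresses beyond the informal remark in your last sentence.
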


\subsection{Comments and Discussions} \label{CandD}

In this section, we present some comments on the obtained theoretical results concerning with the consistency and convergence rates of the best-scored random density tree estimators and the best-scored random density forest estimators.

Trying to alleviate the disadvantages of traditional histogram density estimators, such as their heavy dependence on fixed bin widths and their inevitable discontinuity, we propose to establish the best-scored random density forest estimators based on random partitions and integrate several base estimators to give a smoothed density estimator. Recall that all the estimators presented in this paper are nonparametric density estimators, the criterion measuring their goodness-of-fit does matter. Commonly used measures include $L_1$-distance, $L_2$-distance, $L_{\infty}$-distance. In \cite{devroye1985nonparametric}, authors provide a especially lucid statement of the mathematical attractions of $L_1$ distance: it is always well-defined as a metric on the space of density functions; it is invariant under monotone transformations; and it is proportional to the total variation metric. As for the $L_1$-distance, if we regard $L_1$-distance as the measure of the overall performance, then the $L_1$-distance measures the goodness-of-fit at each point in the feature space, thus it is stronger. We highlight that in our analysis, the convergence rates of the base estimators and the ensemble estimators have all been considered under $L_1$-norm and $L_{\infty}$-norm, respectively.

On the other hand, due to the fact that these best-scored random density tree estimators are all based on random partitions, we should combine the probability distribution of $X$ with the probability distribution of the partition space, which leads to the use of $\mathrm{P}^n \otimes 
\mathrm{P}_Z$ in the analysis of consistency and convergence rates. In virtue of the randomness resided in the partitions, the effective number of splits is smaller than that of the deterministic partitions. As a result, in order to obtain the consistency of a best-scored random density tree estimator, the number of splits should be larger so that the resulting cell sizes can be smaller. Moreover, we establish the convergence rates in the sense of $L_{\infty}$-norm of the best-scored random density forest estimators, namely, $\mathcal{O} ((\log n / n)^{(c_T \alpha)/(2 c_T \alpha + 8 a d)})$, where $c_T = 0.22$ and $a = 4.33$. It is noteworthy that the assumptions needed to establish convergence rate under $L_{\infty}$-norm are not that stronger than assumptions under $L_1$-norm. To be specific, we only need to add two mild assumptions to the original one, which are that the density $f$ should be compactly supported and bounded.

As is mentioned in the introduction, there are a flurry of studies in the literature that address the density estimation problem. Specifically, there are other theoretical studies of histogram density estimations. For example, \cite{lugosi1996consistency} conducts histogram density estimations based on data-dependent partitions where a strong consistency in the sense of $L_1$-norm is obtained under general sufficient conditions. \cite{klemela2009multivariate} presents a multivariate histograms based on data-dependent partitions which are obtained by minimizing a complexity-penalized error criterion. The convergence rates obtained in his study are of the type $\mathcal{O} (\log n/n)$ with respect to the $L_2$-norm under the assumption that function belongs to an anisotropic Besov class. Moreover, it can be regarded as a particular case of our proposal considering its partition process. For kernel density estimation, \cite{jiang2017the} presents that under the assumption of $\alpha$-Hölder continuity, the convergence rates obtained are of the type $\mathcal{O} ((\log n/n)^{\alpha/(2\alpha+d)})$.

\section{Error Analysis} \label{error_analysis}

\subsection{Bounding the Approximation Error Term}

\begin{proposition} \label{ApproximationError}
	For $n \geq 1$, let $\mathcal{A}_{Z,p}$ be a random partition of $B_r$.
	\begin{itemize}
		\item[(i)]
		For any $\varepsilon > 0$, there exists $p_{\varepsilon,\xi} > 0$ such that for any $p \geq p_{\varepsilon,\xi}$, we have
		\begin{align*}
		\|f_{\mathrm{P},Z} - f\|_1 \leq \varepsilon
		\end{align*}
		with probability $\mathrm{P}_Z$ at least $1 - \xi$.
		\item[(ii)]
		If $f$ is $\alpha$-H\"{o}lder continuous, then for all $p \geq (3 r d \xi^{-1} (c/\varepsilon)^{1/\alpha})^{4d/c_T}$ where $c$ is the constant of the $\alpha$-H\"{o}lder continuity, there holds
		\begin{align*}
		\|(f_{\mathrm{P},Z} - f) \eins_{B_r}\|_{\infty} \leq \varepsilon
		\end{align*}
		with probability $\mathrm{P}_Z$ at least $1 - \xi$.
	\end{itemize}
\end{proposition}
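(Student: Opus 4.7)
Both parts of the proposition reduce to a single geometric fact: in a $p$-split random partition, with probability at least $1-\xi$, every cell $A_j\subset B_r$ has small diameter. For part (ii) this reduction is immediate, because for every $x\in B_r$ the cell average rewrites as
$$
f_{\mathrm{P},Z}(x) - f(x)
= \frac{1}{\mu(A(x))}\int_{A(x)}\bigl(f(y)-f(x)\bigr)\,d\mu(y),
$$
and $\alpha$-Hölder continuity of $f$ gives $|f_{\mathrm{P},Z}(x)-f(x)|\leq c\,\mathrm{diam}(A(x))^{\alpha}$. Hence it is enough to guarantee $\max_j\mathrm{diam}(A_j)\leq (\varepsilon/c)^{1/\alpha}$, and since $\mathrm{diam}(A_j)\leq \sum_{k=1}^d L_k(A_j)$ with $L_k(A_j)$ the $k$-th coordinate side length, it suffices to control each $L_k(A_j)$ by $(\varepsilon/c)^{1/\alpha}/d$.

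The core technical step is then a moment bound of the form $\mathbb{E}[L_k(A_j)]\lesssim r\, p^{-c_T/(4d)}$ for an explicit positive constant $c_T$. To prove it I would follow the splitting mechanism of Section \ref{sec::subsec::PRP}: at step $i$ a cell is drawn uniformly among the $i$ existing cells, a coordinate is drawn uniformly from $\{1,\ldots,d\}$, and the corresponding side length is multiplied by a fresh $S_i\sim\mathcal{U}[0,1]$. The side length $L_k(A_j)$ is then a product of the $S_i$ associated with those ancestors of $A_j$ that were cut in coordinate $k$. Combining $\mathbb{E}\log S = -1$, the probability $1/d$ of cutting in coordinate $k$, and the depth distribution of the uniform-leaf-split binary tree (a typical cell sits at depth of order $\log p$) gives the claimed rate with the constant $c_T$. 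Markov's inequality then yields $\Pr(L_k(A_j)>t)\lesssim r\, p^{-c_T/(4d)}/t$, and a union bound over the $p+1$ cells and $d$ coordinates at threshold $t=(\varepsilon/c)^{1/\alpha}/d$ produces the statement of part (ii) as soon as $p\geq (3rd\,\xi^{-1}(c/\varepsilon)^{1/\alpha})^{4d/c_T}$.

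For part (i) I would argue by approximation, reducing to part (ii) applied to a smooth proxy. Fix $\varepsilon>0$, and begin by choosing $r$ large enough that $\mathrm{P}(B_r^c)\leq \varepsilon/3$; this handles the $B_r^c$ contribution to $\|f_{\mathrm{P},Z}-f\|_1$ since $f_{\mathrm{P},Z}$ vanishes pointwise outside $B_r$. By density of $C_c(\mathbb{R}^d)$ in $L_1$, pick a continuous $g\in C_c(B_r)$ with $\|f-g\|_1\leq \varepsilon/3$. The map $Q\mapsto f_{Q,Z}$ is an $L_1$-contraction: writing $Q_g$ for the measure with density $g$,
$$
\bigl\|f_{\mathrm{P},Z}-f_{Q_g,Z}\bigr\|_1
= \sum_j \bigl|(\mathrm{P}-Q_g)(A_j)\bigr|
\leq \|f-g\|_1\leq \varepsilon/3.
$$
For the continuous compactly supported $g$, uniform continuity gives $\|f_{Q_g,Z}-g\|_{L^\infty(B_r)}\leq \varepsilon/(3\mu(B_r))$ whenever every cell inside $B_r$ has diameter below the appropriate modulus-of-continuity threshold, which happens with probability at least $1-\xi$ for $p$ large enough by the same diameter tail bound as in part (ii) (with the modulus of continuity of $g$ replacing $c\delta^{\alpha}$). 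Turning this into an $L_1(B_r)$ bound and combining the three contributions by the triangle inequality gives $\|f_{\mathrm{P},Z}-f\|_1\leq \varepsilon$.

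The main obstacle is the sharp moment/tail estimate on the side lengths $L_k(A_j)$ with the explicit constant $c_T$. Because the cell to be split is drawn uniformly among existing cells rather than in a size-proportional fashion, the cell depths obey the law of the uniform-leaf-split binary tree and the side-length product develops nontrivial dependence across coordinates through the shared history of cuts; pushing the calculation far enough to extract a clean constant (the paper later quotes $c_T=0.22$) is the technical heart of the proposition. Once that bound is in hand, the remainder of the argument is organizational: Markov's inequality, a union bound over cells and coordinates, the $L_1$-contraction identity, and standard approximation by continuous compactly supported functions.
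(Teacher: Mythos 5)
Your overall skeleton matches the paper's: reduce both parts to a high-probability bound on cell diameters, bound $|f_{\mathrm{P},Z}(x)-f(x)|$ by the cell-average rewrite and H\"older continuity for part (ii), and prove part (i) by approximating $f$ with a continuous compactly supported proxy, using the $L_1$-contraction identity $\|f_{\mathrm{P},Z}-f_{Q_g,Z}\|_1\le\|f-g\|_1$ together with uniform continuity of $g$. That is exactly the paper's decomposition (their $\bar f$ is your $f_{Q_g,Z}$).

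The gap is in how you turn the side-length moment bound into a uniform diameter bound. You propose a per-cell estimate $\mathbb{E}[L_k(A_j)]\lesssim r\,p^{-c_T/(4d)}$, then Markov, then a union bound over the $p+1$ cells and $d$ coordinates. That union bound over cells introduces an extra factor of $p$, and the resulting failure probability scales like $d\,r\,p^{1-c_T/(4d)}/t$, which \emph{diverges} as $p\to\infty$ because $c_T/(4d)<1$ (indeed $c_T=0.22$ while $4d\ge 4$). No choice of threshold $p_{\varepsilon,\xi}$ can then make the probability small, so the stated quantitative conclusion of part (ii) with exponent $4d/c_T$ does not follow from your scheme. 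The paper avoids this entirely by bounding $\mathbb{E}_Z\bigl[\max_{A\in\mathcal{A}_{Z,p}}V_i(A)\bigr]$ \emph{directly} — the expectation of the maximum side length over all cells — and then applying Markov once per coordinate, so the only union bound is over the $d$ dimensions. The tool that makes the direct max bound possible is Lemma~\ref{SaturationLevel} on the saturation level of a random binary search tree: it lower-bounds the \emph{minimum} depth over all leaves simultaneously (not the depth of a typical leaf, which is the heuristic you invoke). That uniform minimum-depth control is the essential ingredient your proposal is missing; "a typical cell sits at depth of order $\log p$" is true but insufficient, because the cell with the largest side tends to be an atypical, shallow one, and one must still certify that even the shallowest cell has been cut enough times. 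Once you replace the per-cell bound plus union bound by the max-expectation bound driven by the saturation-level lemma, the remaining steps (Markov, $h=\delta$, H\"older, the $C_c$ approximation, the contraction identity) go through exactly as you sketched. One further cosmetic difference: you invoke $\mathbb{E}\log S=-1$ as the driver of the side-length decay, whereas the paper works with the pointwise bound $V_i(A)\le 2r\prod_j\max\{U_j,1-U_j\}$ and the first-moment identity $\mathbb{E}\max\{U,1-U\}=3/4$, which is what produces the $(1-1/(4d))^{T}$ factor and ultimately the constant $c_T$; the log route could be made to work but would require a separate concentration step for the sum of $\log S_j$, while the paper's argument stays at the level of first moments.
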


\begin{proposition} \label{L1LInftyRelation}
	For $n \geq 1$, let $\mathcal{A}_{Z,p}$ be a random partition of $B_r$. Then, for $r \geq 1$, we have
	\begin{align*}
	\|f_{\mathrm{P},Z} - f\|_1 
	\leq 2^d r^d \|(f_{\mathrm{P},Z} - f) \eins_{B_r}\|_{\infty} + 2 \mathrm{P}(B_r^c).
	\end{align*}
\end{proposition}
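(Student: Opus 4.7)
The plan is to split the $L_1$-norm into integrals over $B_r$ and its complement, then bound each piece. Starting from
\begin{align*}
\|f_{\mathrm{P},Z} - f\|_1 = \int_{B_r} |f_{\mathrm{P},Z} - f| \, d\mu + \int_{B_r^c} |f_{\mathrm{P},Z} - f| \, d\mu,
\end{align*}
I would control the first integral by pulling out the sup-norm on $B_r$ and multiplying by the Lebesgue volume of $B_r$, which equals $(2r)^d = 2^d r^d$. This gives exactly the first term on the right-hand side of the claimed inequality.

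For the second integral, I would use the triangle inequality
\begin{align*}
\int_{B_r^c} |f_{\mathrm{P},Z} - f| \, d\mu \leq \int_{B_r^c} f_{\mathrm{P},Z} \, d\mu + \int_{B_r^c} f \, d\mu.
\end{align*}
The second summand is $\mathrm{P}(B_r^c)$ by definition of $f$ as the density of $\mathrm{P}$. For the first summand, I would invoke the definition \eqref{RandomDensityTreeQ} of the random density tree with $Q = \mathrm{P}$: on $B_r^c$ the only contributing term is $\mathrm{P}(B_r^c)\eins_{B_r^c}(x)/\mu(B_r^c)$, so integration over $B_r^c$ yields $\mathrm{P}(B_r^c)$ as well. (This is essentially the bookkeeping already used in \eqref{DensityProperty}, specialized to the outer region.) Summing gives $2\mathrm{P}(B_r^c)$.

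Adding the two bounds produces the claimed inequality. The argument is almost entirely bookkeeping; the only point to be careful about is that the definition of $f_{\mathrm{P},Z}$ assigns uniform mass $\mathrm{P}(B_r^c)/\mu(B_r^c)$ on the whole of $B_r^c$, so that its integral over $B_r^c$ is finite and equal to $\mathrm{P}(B_r^c)$ even though $\mu(B_r^c) = \infty$. Apart from this minor subtlety, there is no substantial obstacle: the proposition is a clean deterministic relation that holds pointwise in $Z$ and requires no probabilistic or approximation-theoretic input.
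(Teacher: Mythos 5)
Your proposal is correct and follows essentially the same route as the paper: split the $L_1$-norm over $B_r$ and $B_r^c$, bound the inner part by $\mu(B_r)\|\cdot\|_\infty = 2^d r^d\|\cdot\|_\infty$, and observe that $\int_{B_r^c} f_{\mathrm{P},Z}\,d\mu = \int_{B_r^c} f\,d\mu = \mathrm{P}(B_r^c)$ from the definition of the tree density on the outer region.
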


\subsection{Bounding the Estimation Error Term}

\subsubsection{A Fundamental Lemma}

The following lemma shows that both of the $\|\cdot\|_1$ and $\|\cdot\|_{\infty}$-distance between $f_{\mathrm{D},Z}$ and $f_{\mathrm{P},Z}$ can be estimated by the quantities $|\mathbb{E}_{\mathrm{D}} \eins_{A_j} - \mathbb{E}_{\mathrm{P}} \eins_{A_j}|$.

\begin{lemma} \label{FundamentalLemma}
	Let $\mathcal{A}_{Z,p}$ be a random partition of $B_r$. Then the following equalities hold:
	\begin{itemize}
		\item[(i)]
		$\displaystyle \|f_{\mathrm{D},Z} - f_{\mathrm{P},Z}\|_1 = \sum_{j=0}^p |\mathbb{E}_{\mathrm{D}} \eins_{A_j} - \mathbb{E}_{\mathrm{P}} \eins_{A_j}| + |\mathbb{E}_{\mathrm{D}} \eins_{B_r^c} - \mathbb{E}_{\mathrm{P}} \eins_{B_r^c}|$.
		\item[(ii)]
		$\displaystyle \|f_{\mathrm{D},Z} - f_{\mathrm{P},Z}\|_{\infty} = \sup_{j \in \{ 0, \ldots, p+1 \}} \frac{1}{\mu(A_j)} |\mathbb{E}_{\mathrm{D}} \eins_{A_j} - \mathbb{E}_{\mathrm{P}} \eins_{A_j}|$, where we denote $A_{p+1} := B_r^c$.
	\end{itemize}
\end{lemma}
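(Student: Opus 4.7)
The plan is to exploit the fact that $f_{\mathrm{D},Z} - f_{\mathrm{P},Z}$ is a simple function with respect to the partition $\{A_0, \ldots, A_p, B_r^c\}$ of $\mathbb{R}^d$. Substituting the definition of a random density tree of a measure, first with $Q = \mathrm{D}$ and then with $Q = \mathrm{P}$, and subtracting gives
\begin{align*}
f_{\mathrm{D},Z}(x) - f_{\mathrm{P},Z}(x) = \sum_{j=0}^p \frac{\mathrm{D}(A_j) - \mathrm{P}(A_j)}{\mu(A_j)} \eins_{A_j}(x) + \frac{\mathrm{D}(B_r^c) - \mathrm{P}(B_r^c)}{\mu(B_r^c)} \eins_{B_r^c}(x),
\end{align*}
so the difference is a constant on each cell and on $B_r^c$, and the pieces are mutually disjoint with union $\mathbb{R}^d$. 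Together with the trivial identifications $\mathbb{E}_{\mathrm{D}} \eins_A = \mathrm{D}(A)$ and $\mathbb{E}_{\mathrm{P}} \eins_A = \mathrm{P}(A)$, this reduces both claims to a direct computation.

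For part (i), I would integrate $|f_{\mathrm{D},Z} - f_{\mathrm{P},Z}|$ against $\mu$ and pull the absolute value inside the sum thanks to the disjointness of the pieces. Each cell $A_j$ contributes
\begin{align*}
\int_{A_j} \frac{|\mathrm{D}(A_j) - \mathrm{P}(A_j)|}{\mu(A_j)} \, d\mu = |\mathrm{D}(A_j) - \mathrm{P}(A_j)|,
\end{align*}
and the $B_r^c$-piece contributes $|\mathrm{D}(B_r^c) - \mathrm{P}(B_r^c)|$ via the same probabilistic cancellation already used when checking that $f_{\mathrm{P},Z}$ integrates to one. Summing yields exactly the stated identity.

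For part (ii), because the difference is constant on each piece, its supremum equals the maximum over the pieces of the absolute value of those constants. On $A_j$ this is $|\mathrm{D}(A_j) - \mathrm{P}(A_j)|/\mu(A_j)$, while on $A_{p+1} := B_r^c$ the constant vanishes since $\mu(B_r^c) = \infty$. Including the index $p+1$ in the supremum is therefore harmless, and the formula in the statement follows immediately.

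The only delicate point is handling the unbounded-measure cell $B_r^c$: the ratio $Q(B_r^c)/\mu(B_r^c)$ has to be interpreted in the same formal manner used in the verification that $f_{Q,Z}$ is a probability density, treating it as zero pointwise but letting it contribute $Q(B_r^c)$ upon integration of $\eins_{B_r^c}$ against $\mu$. Once this convention is applied consistently, both parts reduce to integration or maximization of a simple function over a finite disjoint partition and there is no real obstacle beyond this bookkeeping.
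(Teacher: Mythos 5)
Your argument is the paper's own: write $f_{\mathrm{D},Z}-f_{\mathrm{P},Z}$ as a simple function constant on each piece of the disjoint partition $\{A_0,\dots,A_p,B_r^c\}$, integrate for (i), and take the sup of the piecewise constants for (ii). The only thing I would push back on is the closing paragraph about the ``delicate point'': the interpretation you propose is not internally consistent, because if $Q(B_r^c)/\mu(B_r^c)$ is taken to be zero pointwise (which is what you invoke so the $j=p+1$ term is harmless in (ii)), then the integrand in (i) is identically zero on $B_r^c$ and
\begin{align*}
\int_{B_r^c} \bigl|f_{\mathrm{D},Z}-f_{\mathrm{P},Z}\bigr| \, d\mu = 0,
\end{align*}
so the stated $L_1$ identity would lose its $|\mathrm{D}(B_r^c)-\mathrm{P}(B_r^c)|$ term. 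One cannot simultaneously treat the ratio as zero pointwise and have it contribute $Q(B_r^c)$ upon integration. The paper sidesteps this by carrying out the formal cancellation $\frac{c}{\mu(B_r^c)}\cdot\mu(B_r^c)=c$ even though $\mu(B_r^c)=\infty$, exactly as it does in \eqref{DensityProperty}; you have faithfully reproduced that convention, but your attempt to justify it makes the inconsistency visible rather than resolving it. Everything else is fine.
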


\subsubsection{Bounding the Capacity of the Function Set $\mathcal{T}$}

\begin{definition}[Covering Numbers]
	Let $(X, d)$ be a metric space, $A \subset X$ and $\varepsilon > 0$. We call $A' \subset A$ an $\varepsilon$-net of $A$ if for all $x \in A$ there exists an $x' \in A'$ such that $d(x, x') \leq \varepsilon$. Moreover, the $\varepsilon$-covering number of $A$ is defined as
	\begin{align*}
	\mathcal{N}(A, d, \varepsilon)
	= \inf \biggl\{ n \geq 1 : \exists x_1, \ldots, x_n \in X \text{ such that } A \subset \bigcup_{i=1}^n B_d(x_i, \varepsilon) \biggr\},
	\end{align*}
	where $B_d(x, \varepsilon)$ denotes the closed ball in $X$ centered at $x$ with radius $\varepsilon$.
\end{definition}

Let $p \in \mathbb{N}$ be fixed. Let $\pi_p$ be a partition of $\mathcal{X}$ with number of splits $p$ and $\pi_{(p)}$ denote the collection of all partitions $\pi_p$. Further, we define
\begin{align} \label{Bp}
\mathcal{B}_p := \biggl\{ B : B = \bigcup_{j \in J} A_j, J \subset \{ 0, 1, \ldots, p \}, A_j \in \pi_p \subset \pi_{(p)} \biggr\}.
\end{align}

Let $B$ be a class of subsets of $\mathcal{X}$ and $A \subset \mathcal{X}$ be a finite set. The trace of $B$ on $A$ is defined by $\{ B \cap A : B \in \mathcal{B} \}$. Its cardinality is denoted by $\triangle^{\mathcal{B}}(A)$. We say that $\mathcal{B}$ shatters $A$ if $\triangle^{\mathcal{B}}(A) = 2^{\#(A)}$, that is, if for every $A' \subset A$, there exists a $B \subset \mathcal{B}$ such that $A' = B \cap A.$ For $k \in \mathbb{N}$, let
\begin{align*}
m^{\mathcal{B}}(k) := \sup_{A \subset \mathcal{X}, \, \#(A) = k} \triangle^{\mathcal{B}}(A).
\end{align*}
Then, the set $\mathcal{B}$ is a Vapnik-Cervonenkis (VC) class if there exists $k < \infty$ such that $m^{\mathcal{B}}(k) < 2^k$ and the minimal of such $k$ is called the $VC$ index of $\mathcal{B}$, and abbreviated as $\mathrm{VC}(\mathcal{B})$.

\begin{lemma} \label{VCindex}
	The VC index of $\mathcal{B}_p$ can be upper bounded by $d p + 2$.
\end{lemma}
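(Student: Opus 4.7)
The plan is to bound the growth function $m^{\mathcal{B}_p}(k)$ directly and exhibit a threshold $k = dp + 2$ at which $m^{\mathcal{B}_p}(k) < 2^k$; by the definition of VC index given just before the lemma, this immediately yields $\mathrm{VC}(\mathcal{B}_p) \leq dp + 2$.

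First I would reformulate $\mathcal{B}_p$ combinatorially. Every element $B \in \mathcal{B}_p$ is the union of the leaf-cells of some labeled axis-aligned binary decision tree with $p$ internal nodes and $p+1$ leaves: each internal node carries a coordinate $r \in \{1, \ldots, d\}$ and a real threshold, and each leaf carries a label in $\{0,1\}$ recording its membership in the index set $J$. Restricted to a finite $A \subset \mathcal{X}$ of size $k$, the trace $B \cap A$ is determined entirely by (i) which leaf each point of $A$ routes to and (ii) the leaf-labeling, so counting distinct traces reduces to counting distinct pairs (routing, labeling).

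Next I would induct on $p$. The base case $p=0$ is immediate because $\mathcal{B}_0 = \{\emptyset,\mathcal{X}\}$ has VC index at most $2$. For the inductive step, condition on the root cut of the tree. Because the root is an axis-aligned hyperplane, it realizes at most $d(k+1)$ combinatorially distinct partitions of $A$ into $(A_L,A_R)$ — one per coordinate direction and per gap between consecutive coordinate-values of $A$. The left and right subtrees are themselves $p_L$- and $p_R$-split trees with $p_L + p_R = p - 1$, and their traces on $A_L$, $A_R$ lie in $\mathcal{B}_{p_L}$, $\mathcal{B}_{p_R}$ respectively. This yields a recursion of the shape
\begin{align*}
m^{\mathcal{B}_p}(k) \;\leq\; d(k+1) \sum_{p_L + p_R = p - 1}\, \max_{k_L + k_R = k}\, m^{\mathcal{B}_{p_L}}(k_L)\, m^{\mathcal{B}_{p_R}}(k_R),
\end{align*}
which, once combined with the inductive VC estimates on the subtree classes via the Sauer-Shelah lemma, can be unwound into an explicit bound on $m^{\mathcal{B}_p}(k)$.

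The hard part will be coaxing the recursion into yielding precisely the linear constant $dp + 2$ rather than the weaker $O(dp\log p)$ one gets from the naive count $(d(k+1))^p$ of distinct partition-trees times $2^{p+1}$ of leaf-labelings. The tightening uses the nested structure of the tree: the $p$ threshold-choices along any root-to-leaf path are constrained to lie in intervals successively shrunk by the earlier cuts, so each additional split contributes only an additive $d$ to the VC index rather than a multiplicative $d(k{+}1)$ factor. The trailing $+2$ then tracks back to the base case $\mathrm{VC}(\mathcal{B}_0) \leq 2$, and a direct algebraic verification that the resulting bound on $m^{\mathcal{B}_p}(dp+2)$ is strictly less than $2^{dp+2}$ closes the argument.
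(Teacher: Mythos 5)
Your approach --- reformulating $\mathcal{B}_p$ as leaf-labeled binary partition trees and bounding the growth function $m^{\mathcal{B}_p}(k)$ by a recursion conditioned on the root cut --- is genuinely different from the paper's argument, which instead proceeds by a geometric construction: the paper exhibits, by induction on $p$, a configuration of $dp+2$ points arranged on $p$ mutually parallel hyperplanes with alternating class labels and argues that no $p$-split partition can realize that dichotomy. Your combinatorial route is in principle cleaner and more standard, but as written it does not actually close.

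The gap is the one you flag yourself, and it is real, not cosmetic. The recursion
\begin{align*}
m^{\mathcal{B}_p}(k) \;\leq\; d(k+1) \sum_{p_L + p_R = p - 1}\, \max_{k_L + k_R = k}\, m^{\mathcal{B}_{p_L}}(k_L)\, m^{\mathcal{B}_{p_R}}(k_R)
\end{align*}
unwinds to a bound of order $(d(k+1))^{p}\,2^{p+1}$, and evaluating this at $k = dp+2$ gives roughly $2^{p\log_2(d(dp+3)) + p + 1}$, whose exponent grows like $p\log p$ and therefore exceeds $dp+2$ for all but the smallest $p$. So the threshold test $m^{\mathcal{B}_p}(dp+2) < 2^{dp+2}$ fails under this estimate. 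The claimed tightening --- that the nested threshold constraints along root-to-leaf paths convert the multiplicative $d(k+1)$ factor per split into an additive $d$ contribution to the VC index --- is exactly the nontrivial content of the lemma and is asserted rather than proved. Without an explicit inductive hypothesis on $m^{\mathcal{B}_p}(k)$ (or directly on $\mathrm{VC}(\mathcal{B}_p)$) and a verification that the root-split recursion preserves it, the argument is a plan for a proof, not a proof. Until that step is supplied, the proposal does not establish the bound $dp+2$.
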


Let $\mathcal{B}$ be a class of subsets of $\mathcal{X}$, denote $\eins_{\mathcal{B}}$ as the collection of the indicator functions of all $B \in \mathcal{B}$, that is, $\eins_{\mathcal{B}} := \{ \eins_B : B \in \mathcal{B} \}$. Moreover, as usual, for any probability measure $Q$, $L_1(Q)$ is denoted as the $L_1$ space with respect to $Q$ equipped with the norm $\|\cdot\|_{L_1(Q)}$.

\begin{lemma} \label{ScriptBpCoveringNumber}
	Let $\mathcal{B}_p$ be defined as in \eqref{Bp}. Then, for all $0 < \varepsilon < 1$, there exists a universal constant $K$ such that
	\begin{align} \label{BpCoveringNumber}
	\mathcal{N}(\eins_{\mathcal{B}_p}, \|\cdot\|_{L_1(Q)}, \varepsilon)
	\leq K (d p + 2) (4 e)^{d p + 2} \biggl( \frac{1}{\varepsilon} \biggr)^{d p + 1}
	\end{align}
	holds for any probability measure $Q$.
\end{lemma}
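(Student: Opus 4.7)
The plan is to combine Lemma \ref{VCindex}, which controls $\mathrm{VC}(\mathcal{B}_p)$, with a standard covering number bound for indicator classes of VC sets. Specifically, the target inequality is exactly of the shape produced by the classical Haussler-type estimate: for any VC class $\mathcal{B}$ of subsets of $\mathcal{X}$ with VC index $V := \mathrm{VC}(\mathcal{B})$, there exists a universal constant $K$ such that for any probability measure $Q$ and every $0 < \varepsilon < 1$,
\begin{align*}
\mathcal{N}(\eins_{\mathcal{B}}, \|\cdot\|_{L_1(Q)}, \varepsilon)
\leq K V (4 e)^V \Bigl(\frac{1}{\varepsilon}\Bigr)^{V-1}.
\end{align*}
This is the content of, e.g., Theorem 2.6.4 in van der Vaart and Wellner (or the Haussler 1995 refinement of Dudley's bound), which I would quote as a black box.

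First I would recall the VC dimension estimate from the previous lemma, namely $\mathrm{VC}(\mathcal{B}_p) \leq d p + 2$. Second, I would apply the black-box bound above to the class $\mathcal{B} = \mathcal{B}_p$ with $V \leq d p + 2$. Since the map $V \mapsto K V (4 e)^V (1/\varepsilon)^{V-1}$ is monotone increasing in $V$ (for $\varepsilon < 1$), substituting the upper bound $d p + 2$ in place of $V$ yields
\begin{align*}
\mathcal{N}(\eins_{\mathcal{B}_p}, \|\cdot\|_{L_1(Q)}, \varepsilon)
\leq K (d p + 2) (4 e)^{d p + 2} \Bigl(\frac{1}{\varepsilon}\Bigr)^{d p + 1},
\end{align*}
which is the claimed inequality. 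Since the bound from van der Vaart--Wellner holds uniformly over all probability measures $Q$, the same is true here.

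There is no real analytic obstacle in this argument: the nontrivial combinatorial content has already been extracted in Lemma \ref{VCindex}, and the translation from a VC index bound to a uniform $L_1(Q)$-covering estimate is a standard empirical-process fact. The only minor point of care is to make sure the cited universal covering-number bound is stated with the same exponent structure (the $V-1$ in the exponent of $1/\varepsilon$ and the $V$ in the exponent of $4e$), since small variants exist in the literature; if the preferred reference instead gives a bound of the form $K (16 e)^V (1/\varepsilon)^{2(V-1)}$ or the like, one would either re-derive the tighter Haussler version or absorb the extra factors into the constant $K$ and the exponent of $4e$, which does not affect any downstream use of the lemma.
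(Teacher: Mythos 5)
Your proof is correct and matches the paper's approach exactly: the paper also simply combines Lemma~\ref{VCindex} (the VC index bound $dp+2$) with a black-box Haussler-type covering number bound for VC classes, citing Theorem~9.2 of \cite{Kosorok2008introduction}, which is the same result as the van der Vaart--Wellner/Haussler estimate you invoke. The monotonicity observation you add to justify substituting the upper bound for $V$ is a nice touch that the paper leaves implicit.
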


\subsubsection{Oracle Inequalities under $L_1$-Norm, and $L_{\infty}$-Norm}

\begin{proposition} \label{OracleInequalityL1}
	Let $\mathcal{A}_{Z,p}$ be a random partition of $B_r$. Then, for all $r \geq 1$, $n \geq 1$, and $\tau > 0$, with probability $\mathrm{P}^n$ at least $1 - e^{-\tau}$, there holds
	\begin{align*}
	\|f_{\mathrm{D},Z} - f_{\mathrm{P},Z}\|_1
	\leq \sqrt{\frac{9(\tau+15dp\log n)}{2n}} + \frac{2(\tau+15dp\log n)}{n} + \frac{4}{n}.
	\end{align*}
\end{proposition}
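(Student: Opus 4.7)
The plan is to reduce $\|f_{\mathrm{D},Z}-f_{\mathrm{P},Z}\|_1$ to a uniform empirical-process deviation over the VC class $\mathcal{B}_p$ of unions of partition cells, and then combine the covering bound of Lemma \ref{ScriptBpCoveringNumber} with Bernstein's inequality.

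First I would apply Lemma \ref{FundamentalLemma}(i) to write
\begin{align*}
\|f_{\mathrm{D},Z}-f_{\mathrm{P},Z}\|_1 = \sum_{j=0}^p |\mathrm{D}(A_j)-\mathrm{P}(A_j)| + |\mathrm{D}(B_r^c)-\mathrm{P}(B_r^c)|,
\end{align*}
and then invoke a Scheff\'e-type identity. Setting $B^\star := \bigcup_{\{j : \mathrm{D}(A_j) \geq \mathrm{P}(A_j)\}} A_j$, which lies in $\mathcal{B}_p$ by the definition \eqref{Bp}, and using that the signed measure $\mathrm{D}-\mathrm{P}$ has total mass zero on $\mathbb{R}^d$, one obtains
\begin{align*}
\sum_{j=0}^p |\mathrm{D}(A_j)-\mathrm{P}(A_j)| = 2(\mathrm{D}-\mathrm{P})(B^\star) + (\mathrm{D}-\mathrm{P})(B_r^c),
\end{align*}
so that $\|f_{\mathrm{D},Z}-f_{\mathrm{P},Z}\|_1 \leq 2\sup_{B\in\mathcal{B}_p}|(\mathrm{D}-\mathrm{P})(B)| + 2|(\mathrm{D}-\mathrm{P})(B_r^c)|$. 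The isolated tail term can be bounded by a single Bernstein application and absorbed into the main estimate, so the task reduces to controlling the uniform supremum $\sup_{B\in\mathcal{B}_p}|\mathrm{D}(B)-\mathrm{P}(B)|$.

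For that supremum I would discretize by an $L_1(\mathrm{P})$ $\varepsilon$-net $\mathcal{B}_p^\varepsilon$ of $\mathcal{B}_p$ with $\varepsilon \asymp 1/n$. Lemma \ref{ScriptBpCoveringNumber} bounds its log-cardinality by $(dp+1)\log(1/\varepsilon) + (dp+2)\log(4e) + \log(K(dp+2))$, which is majorized by $15 d p \log n$ for all $n \geq 1$. For each fixed $B$ in the net, Bernstein's inequality applied to $\eins_B$, bounded by $1$ with variance at most $1$, yields a deviation of the form $\sqrt{2\tau/n} + \tau/(3n)$ with probability at least $1 - 2 e^{-\tau}$. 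A union bound over $\mathcal{B}_p^\varepsilon$ replaces $\tau$ by the effective budget $\tau' := \tau + 15 d p \log n$, and the $\varepsilon$-discretization (together with an auxiliary small-scale concentration upgrading $L_1(\mathrm{P})$-closeness to empirical closeness) contributes an additive $\mathcal{O}(1/n)$ remainder. Multiplying by the factor $2$ from the Scheff\'e reduction and collecting the variance, martingale, and discretization pieces reproduces the three-term bound $\sqrt{9\tau'/(2n)} + 2\tau'/n + 4/n$ stated in the proposition.

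The main obstacle I anticipate is bookkeeping rather than conceptual. One must absorb the auxiliary constants $K$, $(4e)^{dp+2}$, and $\log(dp+2)$ from the covering bound cleanly into the coefficient $15$ in front of $dp\log n$, and ensure that the various Bernstein prefactors combine with the Scheff\'e factor $2$ to yield exactly the constants $9/2$, $2$, and $4$. A secondary subtlety is that $\|\eins_B - \eins_{B'}\|_{L_1(\mathrm{P})} \leq \varepsilon$ does not immediately transfer into $|\mathrm{D}(B) - \mathrm{D}(B')|$ being small, so the discretization step must be paired either with an empirical $L_1$-cover (again controlled by Lemma \ref{ScriptBpCoveringNumber} with $Q = \mathrm{D}$) or with a second Bernstein bound at scale $\varepsilon$ to upgrade the approximation to the empirical measure. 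Once these pieces are in place, the claimed inequality follows directly.
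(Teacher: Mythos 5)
Your plan is correct and follows essentially the same route as the paper: the same Scheff\'e reduction of $\|f_{\mathrm{D},Z}-f_{\mathrm{P},Z}\|_1$ to $2\sup_{B\in\mathcal{B}_p}|(\mathrm{D}-\mathrm{P})(B)|$ plus a complement term, the same covering-number bound via Lemma~\ref{ScriptBpCoveringNumber}, and the same Bernstein-plus-union-bound-plus-$\varepsilon$-net discretization at scale $\varepsilon \asymp 1/n$. The one tactical difference is that you first propose an $L_1(\mathrm{P})$-net and then flag the transfer to $L_1(\mathrm{D})$ as a subtlety to be repaired; the paper sidesteps this by working with the $L_1(\mathrm{D})$-net from the start (Lemma~\ref{ScriptBpCoveringNumber} holds for any $Q$), which is exactly the fallback you identify, so the two arguments coincide once that choice is made.
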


\begin{proposition} \label{OracleInequalityLInfty}
	Let $\mathcal{A}_{Z,p}$ be a random partition of $B_r$. Assume that there exists a constant $r \geq 1$ such that $\mathcal{X} \subset B_r \subset \mathbb{R}^d$ and the density function $f$ satisfies $\|f\|_{\infty} < \infty$. Then for all $\tau > 0$ and $n \geq 1$, with probability $\mathrm{P}^n \otimes \mathrm{P}_Z$ at least $1 - 2 e^{-\tau}$, there holds
	\begin{align}
	\|f_{\mathrm{D},Z} - f_{\mathrm{P},Z}\|_{\infty}
	& \leq \sqrt{\frac{8 \|f\|_{\infty} p^{2a} ((8d+1) \tau + 23 \log n + 8 a d \log p)}{n \mu(B_r) e^{-2\tau}}}
	\nonumber\\
	& \phantom{=}
	+ \frac{8 p^{2a}((8d+1) \tau + 23 \log n + 8 a d \log p)}{3 n \mu(B_r) e^{-2\tau}} + \frac{2}{n}.
	\label{SampleErrorLInftyBound}
	\end{align}
	\end{proposition}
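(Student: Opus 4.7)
The plan is to combine the exact cell-wise identity from Lemma \ref{FundamentalLemma}(ii) with two ingredients: (a) a high-probability lower bound on the minimum cell Lebesgue measure under $\mathrm{P}_Z$, and (b) a uniform Bernstein-type concentration under $\mathrm{P}^n$ for the indicator empirical process indexed by cells of the partition. A union bound over the two failure events will deliver the stated oracle inequality.

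First I would invoke Lemma \ref{FundamentalLemma}(ii); since $f$ is supported in $B_r$, the cell $B_r^c$ contributes nothing and
\begin{align*}
\|f_{\mathrm{D},Z} - f_{\mathrm{P},Z}\|_\infty = \sup_{0 \leq j \leq p} \frac{|\mathbb{E}_{\mathrm{D}} \eins_{A_j} - \mathbb{E}_{\mathrm{P}} \eins_{A_j}|}{\mu(A_j)}.
\end{align*}
This cleanly separates the two difficulties: the numerator is an empirical process over the random $(p+1)$-cell partition, while the denominator can be small when cells shrink.

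Next I would control the denominator. Along each root-to-leaf path the Lebesgue volume of a cell is $\mu(B_r)$ times the product of the proportional factors $S_i \sim \mathcal{U}[0,1]$ used on that branch, so $\log(\mu(A_j)/\mu(B_r))$ is a sum of i.i.d.\ $\log \mathcal{U}[0,1]$ random variables whose length is the depth of the leaf. Combining a tail bound on the depth profile of a purely random binary tree with $p$ internal nodes together with Bennett-type control on these log-sums, I would obtain, with $\mathrm{P}_Z$-probability at least $1 - e^{-\tau}$, a lower bound of the form $\min_j \mu(A_j) \gtrsim \mu(B_r)\, p^{-2a}\, e^{-2\tau}$; the constant $a = 4.33$ would emerge from optimizing this argument.

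For the numerator I would condition on the partition and apply Bernstein's inequality cell-by-cell: since $\operatorname{Var}_\mathrm{P}(\eins_{A_j}) \leq \mathrm{P}(A_j) \leq \|f\|_\infty \mu(A_j)$, this yields
\begin{align*}
|\mathbb{E}_{\mathrm{D}} \eins_{A_j} - \mathbb{E}_{\mathrm{P}} \eins_{A_j}| \leq \sqrt{\frac{2 \|f\|_\infty \mu(A_j)\, \tau_n}{n}} + \frac{2 \tau_n}{3 n}
\end{align*}
with a confidence parameter $\tau_n$ chosen large enough to absorb the union bound over the random choice of partition. That union bound is exactly the role played by Lemmas \ref{VCindex} and \ref{ScriptBpCoveringNumber}: since $\mathcal{B}_p$ has VC index at most $dp+2$, the covering-number estimate \eqref{BpCoveringNumber} inflates $\tau_n$ by only $\mathcal{O}(dp \log n + \log p)$, matching the $(8d+1)\tau + 23 \log n + 8 a d \log p$ term in the statement. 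Dividing by $\mu(A_j) \gtrsim \mu(B_r)\, p^{-2a}\, e^{-2\tau}$ inside the square root and the additive term, and collecting the two failure probabilities via a union bound, then produces the claimed inequality.

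The main obstacle will be the sharp minimum-cell-volume lower bound of step two: the geometry of a purely random partition is tractable only through the product structure of the splits, and pinning down the correct exponent $2a$ with $a = 4.33$ (and the $e^{-2\tau}$ factor in the denominator) requires careful simultaneous control of (i) the random depths of the leaves of a purely random binary tree and (ii) the log-uniform sums along each branch. The remaining ingredients---Bernstein's inequality together with the VC and covering estimates---are routine once this geometric lemma is in place.
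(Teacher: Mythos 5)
Your overall plan is sound in structure—split via Lemma \ref{FundamentalLemma}(ii), control the numerator by a uniform Bernstein bound and the denominator by a high-probability lower bound on the minimum cell volume, then union-bound—but there is a concrete gap in the middle step that would prevent you from reaching the stated bound.

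The gap is the choice of function class for the covering/union argument. You propose to invoke Lemmas \ref{VCindex} and \ref{ScriptBpCoveringNumber}, i.e.\ to cover $\mathcal{B}_p$ (all unions of partition cells), whose VC index is $dp+2$. Plugging that covering number into the Bernstein union bound inflates the confidence parameter by $\mathcal{O}(dp\log n)$, exactly as in the $L_1$ oracle inequality (compare the $15dp\log n$ term in Proposition~\ref{OracleInequalityL1}). That does \emph{not} match the target $(8d+1)\tau + 23\log n + 8ad\log p$, whose $\log n$ coefficient is independent of $p$. The point is that for an $L_\infty$ bound you only need to control a supremum over \emph{individual} cells, not a sum over cells, so you never need to cover unions. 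The paper instead covers the class $\tilde{\mathcal{A}}$ of all axis-parallel hyperrectangles $\bigtimes_{i=1}^d[a_i,b_i]\subset B_r$ with $\mu(A)\geq h$, whose VC index is $2d+1$ by Lemma~\ref{CollectionCoveringNumber}—crucially independent of $p$—and only then does the $p$-dependence enter, via $\log(1/h)$ once $h$ is chosen of order $p^{-2a}e^{-2\tau}\mu(B_r)$. This is what produces the $23\log n + 8ad\log p$ scaling. With your $\mathcal{B}_p$ covering you would instead obtain a strictly larger complexity term and hence a worse rate when this proposition is later fed into Theorem~\ref{LInftyConvergenceRates}.

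Two smaller points. First, because the target is $\sup_j \frac{1}{\mu(A_j)}|\mathbb{E}_{\mathrm{D}}\eins_{A_j}-\mathbb{E}_{\mathrm{P}}\eins_{A_j}|$, the covering has to respect the $1/\mu(A)$ rescaling: the paper applies Bernstein directly to $\xi_i := \frac{1}{\mu(A)}(\eins_A\pi_i - \mathbb{E}_{\mathrm{P}}\eins_A)$ and builds an $\tilde{\varepsilon}$-net with $\tilde{\varepsilon} = \frac{h^2}{h+\mu(B_r)}\varepsilon$ so that closeness in $L_1(Q)$ of the \emph{normalized} indicators is controlled; your sketch (bound the numerator first, divide afterwards) can be made to work but you would have to redo this rescaling step, since a small $\|\eins_A-\eins_{A_j}\|_{L_1}$ does not by itself control $|\frac{1}{\mu(A)}-\frac{1}{\mu(A_j)}|$. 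Second, your description of the minimum-cell-volume step is on the right track: the paper combines the binary-search-tree height tail (Lemma~\ref{Height}) with a Chernoff bound on the product of uniforms to get $\mathrm{P}_Z(\mu(A)\geq (1/(2e^\tau p^a))^2\mu(B_r))\geq 1-e^{-\tau}$, which is where $a=4.33$ and the $e^{-2\tau}$ and $p^{2a}$ factors in the denominator come from. So that part of your proposal is essentially the paper's argument and just needs the details filled in.
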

	
	\section{Numerical Experiments} \label{sec::numerical_experiments}

	In this section, we present the computational experiments that we have carried out. Aiming at obtaining more efficient partition performance, we improve the purely random splitting criteria to new ones named as the adaptive random splitting criteria in Section \ref{sec::subsec::adaptivemethod}. Concerning with the fact that the partitions currently discussed are all axis-parallel and their induced density estimators may not be accurate enough for some cases, we extend them to the oblique random partitions in Section \ref{sec::subsec::adaptiveobliquepartition}. Based on the adaptive random partitions, we construct our best-scored random density forest in Section \ref{sec::subsec::BDE}. Then we compare our approach with other proposals illustrated in Section \ref{sec::subsec::experisetup} both on synthetic data in Section \ref{sec::subsec::syntheticdata} and real data in Section \ref{sec::subsec::realdata}.

\subsection{Improvement with Adaptive Method} \label{sec::subsec::adaptivemethod}

It is worth pointing out one crucial fact that the purely random trees may face the dilemma, the effective number of splits being relatively small. The reason for this phenomenon is that the purely random splitting criteria make no use of the sample information. Therefore, we propose an adaptive splitting method efficiently taking sample information into consideration. Since we have only discussed on partitions that are axis-parallel, the corresponding new criterion is called the \emph{adaptive axis-parallel random splitting criterion}.

Recall that for the axis-parallel purely random partition, $L_i$ in the random vector $Q_i := (L_i, R_i, S_i)$ denotes the randomly chosen cell to be split at the $i$th step of the tree construction. However, on account that this choice of $L_i$ does not make any use of the sample information, it may suffer from over-splitting in sample-sparse areas and under-splitting in sample-dense areas. Hence, we propose that when choosing a to-be-split cell, we first randomly select $t$ samples from the training data set and find out which cell most of these samples fall in. Then, we pick up this cell as $L_i$. This idea comes from the fact that when randomly picking sample points from the whole training data set, cells with more samples are more possible to be selected while cells with fewer samples are less likely to be chosen. To mention, $t$ as a hyperparameter can be tuned according to specific conditions.

We mention that the ``adaptive'' here can be illustrated from the perspective of partition results. Splits are conducted according to the sample distribution where sample-dense areas will be split more, while sample-sparse areas will be split less. In this way, we develop an adaptive axis-parallel random partition.

\subsection{An Adaptive Oblique Partition} \label{sec::subsec::adaptiveobliquepartition}

So far we have considered cases where each split in the partition process is conducted from only one dimension of the feature space, i.e. all splits are axis-parallel. However, in order to achieve better experiments results, we now extend our adaptive axis-parallel random splitting criterion in Section \ref{sec::subsec::adaptivemethod} to a more advanced criterion where splits can be oblique and the locations of them are data-driven. This new criterion is called the \emph{adaptive oblique random splitting criterion}. To mention, the ``adaptive'' implies that each step in the construction procedure is adaptive to the sample distribution. The competitiveness of this new splitting criterion can be verified by the experimental analysis on real data.

Here, we illustrate a possible construction approach of one tree by following the adaptive oblique random splitting criterion. Still, some randomizing variables are needed for a clear description. The oblique partition process at the $i$th step can be represented by $O_i := (L_i, G_i, W_i)$. When conducting the adaptive partition method, we first randomly select $t$ samples from the training data set and each of these sample points is certain to fall into one of the cells formed at the $(i - 1)$th step. Therefore, we can find out which cell most of these samples fall in and choose this cell as the to-be-split cell $L_i$. Secondly, the coordinates of samples falling into cell $L_i$ in $t$ samples are recorded and thus the barycenter $G_i$ of cell $L_i$ can be substituted by the centroid $X_i^c$ of these samples in experiments. Thirdly, since we follow an oblique splitting rule, the split performed on $L_i$ is actually a part of the chosen hyperplane $W_i^T x + b_i = 0$, $x \in L_i$. For the experimental convenience, we set normal vectors of hyperplanes $\{ W_i, i \in \mathbb{N}_+ \}$ to be independent and identically distributed from $\mathcal{U}[-1, 1]^d$ and $b_i := - W_i^T X_i^c$. Till now, we finish the construction of the $i$th step, and by following this procedure recursively, we are able to establish a random tree with oblique partitions. 

It can be clearly observed from the establishment of one tree estimator \eqref{RandomDensityTreeQ} in the best-scored random density forest that after obliquely partitioning the feature space into non-overlapping cells which are irregular polyhedrons, we are in need of the volume of each cell. In general, the method of computing the volume of an irregular polyhedron is mainly to decompose the polyhedron into a plurality of solvable polyhedrons. However, this approach is not a wise choice for its high computational complexity especially when the dimension of the space where the polyhedron is embedded is high. Take the volume computation of one polyhedron for example, we need to determine the specific coordinates of each vertex of the polyhedron to select an appropriate polyhedron decomposition method. Moreover, this approach provides an exact value for the polyhedral volume. Exact value of volume is not a must since our density estimator is itself an approximate to the ground-truth density. In fact, good approximations of polyhedron volumes are enough for our algorithm when carrying out experiments. Therefore, we employ another well-known volume estimation method which is the Monte Carlo method relying on repeated random sampling to obtain numerical results. The specific procedure can be stated as follows: First of all, we find the smallest hypercube in the feature space that contains all the training data. The side length of this hypercube in each dimension is the difference between the maximum and minimum values of the training sample's coordinates in this dimension, so that the volume of the hypercube can be easily obtained. Secondly, we generate $2,000$ points by default according to the uniform distribution on the area where the hypercube is located, and record the ratio of the number of points falling into each cell to the number of all points, or called frequency. Lastly, the volume of each cell is calculated by the volume of the hypercube multiplied by the corresponding frequency. On account that the Monte Carlo method is easy to operate and gives a good estimate of the volume of any irregular polyhedron in any dimension, we employ this approach for our algorithm under oblique partitions.

\subsection{Best-scored Density Estimators} \label{sec::subsec::BDE}

Having demonstrated how to perform an adaptive splitting criterion for both axis-parallel and oblique partitions, we now come to the discussion on details of the construction of one best-scored random tree. First of all, we generate $k$ $p$-split adaptive random partitions and our main purpose is to select one partition with the best density estimation performance out of the $k$ candidates by a $10$-fold cross-validation. Now take the first round of the $10$-fold cross-validation as an example. For each of the $k$ partition candidates, training set of the cross-validation is used to give weight to each cell of that partition according to \eqref{RandomDensityTreeQ} and the corresponding validation error of the partition based on the validation set is then computed. After traversing all ten rounds, we are able to obtain the average validation error for each of the $k$ partition candidates, and the one with the smallest average validation error is the exact partition for one tree. Based on this selected partition, we give weight to each cell in accordance with \eqref{RandomDensityTreeQ} based on the whole training set. By repeating the above establishment procedure for $m$ times, we are able to obtain a best-scored random forest for density estimation containing $m$ trees.

\subsection{Experimental Setup} \label{sec::subsec::experisetup}

In our experiments, comparisons are conducted among our axis-parallel best-scored random density forest (BRDF-AP), oblique best-scored random density forest (BRDF-OB) and other effective density estimation methods, which are
\begin{itemize}
\item
KDE: the kernel density estimators \cite{parzen1962on,rosenblatt1956remarks} where we take the Gaussian kernel.
\item
DHT: the discrete histogram transform \cite{lopezrubio2014a} is a nonparametric approach based on the integration of several multivariate histograms which are computed over affine transformations of the training data.
\item
rNADE: the real-valued neural autoregressive distribution estimation models \cite{uria2013rnade,uria2016neural}, which are neural network architectures applied to the problem of unsupervised distribution and density estimation.
\end{itemize}
It is worth pointing out that for KDE, we utilize the Python-package SciPy with default settings, for DHT, \cite{lopezrubio2014a} provides the codes in Matlab and for rNADE, \cite{uria2016neural} also provides codes in Python. All the following experiments are performed on computer equipped with 2.7 GHz Intel Core i7 processor, 16 GB RAM.

In order to provide a quantitative comparisons of options, we adopt the following Mean Absolute Error:
\begin{align} \label{MAE}
\textit{MAE}(\hat{f}) = \frac{1}{M} \sum_{j=1}^M |\hat{f}(x_j) - f(x_j)|,
\end{align}
where $x_1, \ldots, x_M$ are test samples. It is mainly used in cases where the true density function is known. Though \emph{MAE} can not be used in tests of real data whose true density is unknown, it is still especially suitable for synthetic data experiments.

Another effective measure of estimation accuracy, especially when facing real data, is measured over $M$ test samples, which is given by the Average Negative Log-Likelihood:
\begin{align} \label{ANLL}
\textit{ANLL}(\hat{f}) = - \frac{1}{M} \sum_{j=1}^M \log \hat{f}(x_j),
\end{align}
where $\hat{f}(x_j)$ represents the estimated probability density for the test sample $x_j$ and the lower the \emph{ANLL} is, the better estimation we obtain. To mention, we employ $10$-fold cross validated \emph{ANLL} as our test error. One thing that has been attached great importance is that whenever the estimation function $\hat{f}(x)$ of any samples returns zero weight, \emph{ANLL} will go to infinity, which is undesirable. Therefore, we substitute all density estimation $\hat{f}(x)$ with $\hat{f}(x) + \epsilon$, where $\epsilon$ is a infinitesimal number that can be recognized by the computer which can be obtained by function {\tt numpy.spacing(1)} in Python, or constant {\tt eps} in Matlab. Consequently, we come to a desirable state where one bad sample point (with zero estimated probability) will not harm the whole good \emph{ANLL} much.

\subsection{Synthetic Data} \label{sec::subsec::syntheticdata}

In this subsection, we start by applying our BRDF and other above mentioned density estimation methods on several artificial examples. In order to give a more comprehensive understanding of our algorithm architecture illustrated in Section \ref{sec::methodology}, we first consider BRDF with axis-parallel partition (BRDF-AP) here. To be specific, we base the simulations on two different types of distribution construction approaches with each type generating four toy examples with dimension $d = 1, 2, 3, 5$, respectively. To notify, the premise of constructing data sets is that we assume that the components $X_1, \ldots, X_d$ of the random vector $X = (X_1, \ldots, X_d) \in [0, 1]^d$ are independent of each other with identical margin distribution. Therefore, we only present the margin density in the following descriptions.
\begin{itemize}
	\item 
	Type I: $f_{\mathrm{margin}} = 0.3 * \mathrm{uniform}(0.7, 1) + 0.7 * \mathrm{uniform}(0, 0.4)$,
	\item 
	Type II: $f_{\mathrm{margin}} = 0.3 * \mathrm{beta}(11, 20) + 0.7 * \mathrm{uniform}(0.5, 1)$.
\end{itemize}
It can be apparently seen that the construction of example of Type I is piecewise constant and example of Type II is based on mixture models with beta distributions and uniform distributions. We emphasize that the densities of both Types I and II datasets are compactly supported and bounded. In order to give clear descriptions of the distributions, we give the 3D plots of the above two types of distributions with dimension $d = 2$ shown in Figure \ref{SynthDataPlot}.

\begin{figure}[htbp]
	\centering
	\captionsetup{justification=centering}
	\hspace{-10mm}
	\begin{minipage}[t]{0.5\linewidth}
		\centering 
		\includegraphics[height=6cm,width=9.5cm]{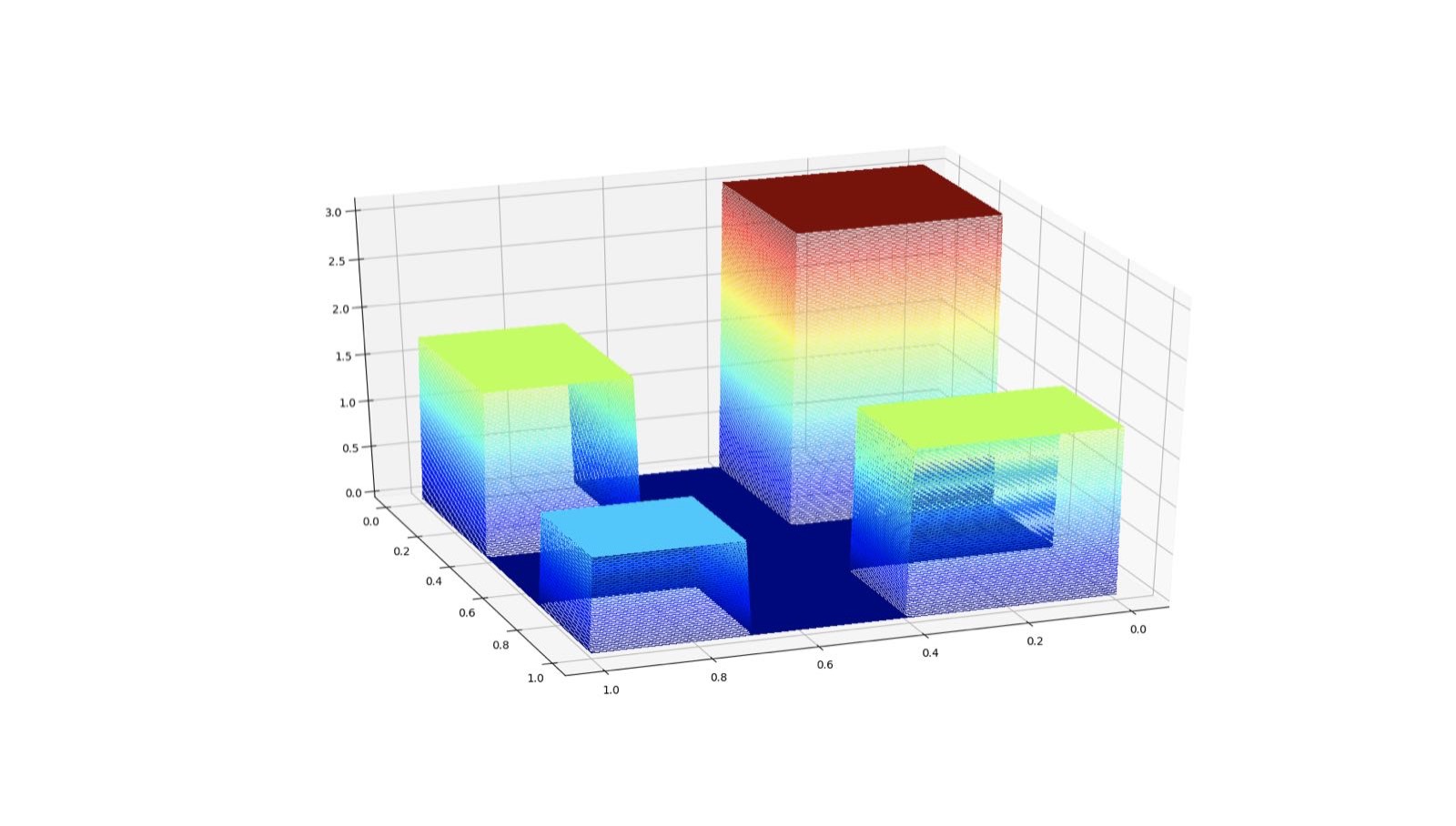} 
		%		\caption*{} 
		\label{fig:side:a} 
	\end{minipage} 
	\begin{minipage}[t]{0.5\linewidth}
		\centering 
		\includegraphics[height=6cm,width=9.5cm]{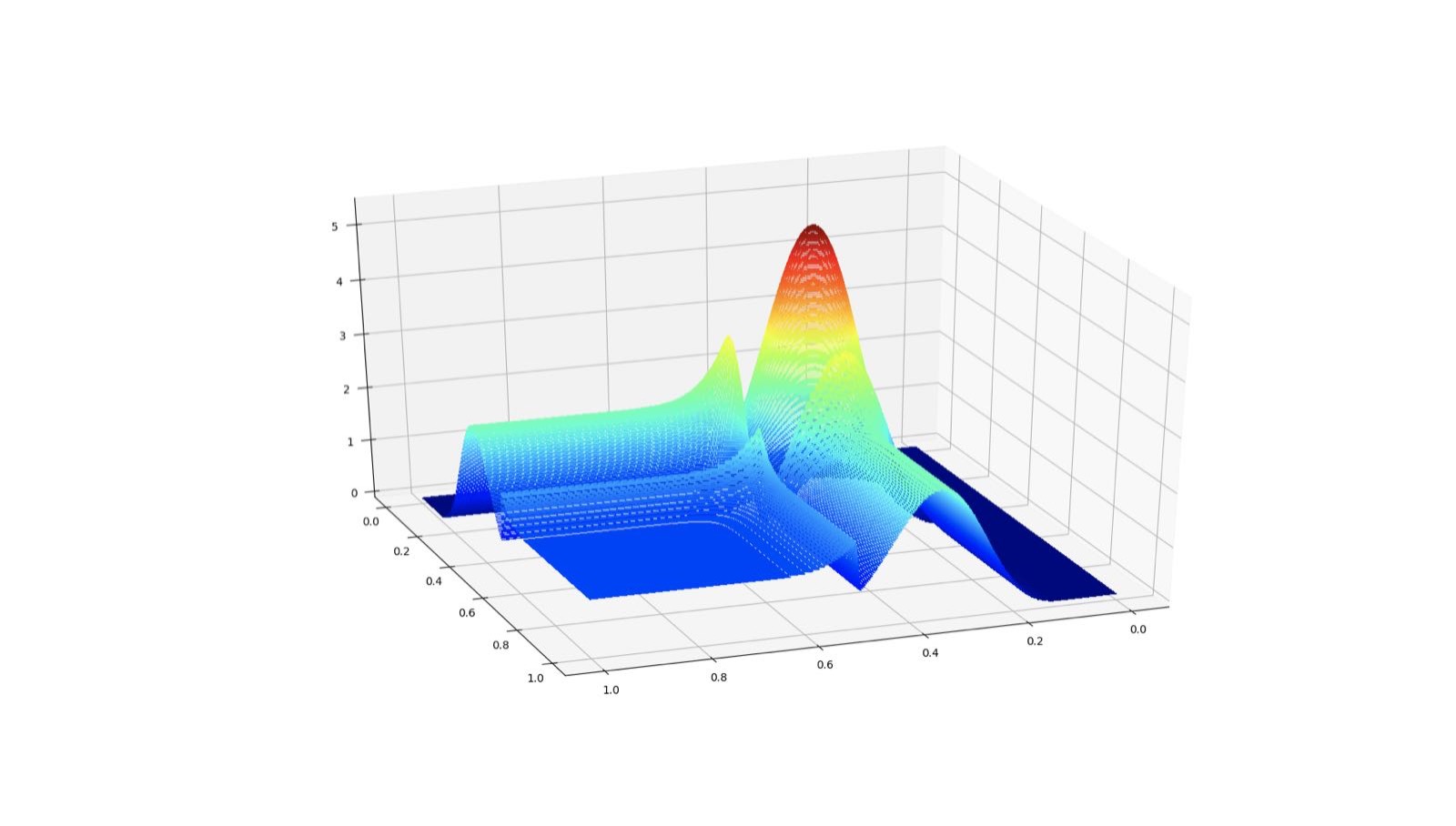} 
		%		\caption*{} 
		\label{fig:side:b} 
	\end{minipage} 
	\vspace{-12mm}
	\caption{3D plots of the true probability densities with $d = 2$\\ of datasets of ''Type I" (left), ''Type II" (right).}
	\label{SynthDataPlot}
\end{figure}

Table \ref{tab::1} summarizes the average \emph{MAE} performances of our model, KDE, DHT and rNADE. All experiments presented are repeated for $20$ times, and we present the average results here. It can be apparently observed from the Table 1 that our BRDF-AP has the best performances w.r.t. \emph{MAE} on almost all data sets, which further demonstrates the effectiveness of the algorithm.

\begin{table*}
	\setlength{\tabcolsep}{11pt}
	\centering
	\captionsetup{justification=centering}
	\caption{Average \textit{MAE} over the Two Types of Synthetic Data Sets}
	\label{RMSETable}
	%\resizebox{\textwidth}{32mm}{
	\begin{tabular}{cccccc}
		\toprule
		Data Set  & \textit{d} & BRDF-AP & KDE & DHT & rNADE  \\
		\hline 
		\hline
		\multirow{4}*{\text{Type I}} & 1 & \textbf{0.11} & 0.26 & 0.16 & 0.48\\
		\cline{2-6}
		\multicolumn{1}{c}{}& 2 & \textbf{0.39} & 0.88 & 0.51 & 1.07\\
		\cline{2-6}
		\multicolumn{1}{c}{}& 3 & \textbf{1.19} & 2.05 & 1.25 & 1.95\\
		\cline{2-6}
		\multicolumn{1}{c}{}& 5 & \textbf{5.14} & 6.95 & 5.27 & 5.72\\
		\hline
		\multirow{4}*{\text{Type II}} & 1 & 0.28 & 0.17 & \textbf{0.14} & 0.57\\
		\cline{2-6}
		\multicolumn{1}{c}{}& 2 & \textbf{0.35} & 0.47 & 0.45 & 1.86\\
		\cline{2-6}
		\multicolumn{1}{c}{}& 3 & \textbf{0.76} & 0.96 & 1.08 & 1.87\\
		\cline{2-6}
		\multicolumn{1}{c}{}& 5 & \textbf{2.79} & 2.90 & 3.98 & 4.41\\
		\bottomrule	
	\end{tabular}
	%}
	\begin{tablenotes}
		\footnotesize
		\item ~~~~~~~~~~~~~~~~~{*} The best results are marked in \textbf{bold}.
	\end{tablenotes}
	\label{tab::1}
\end{table*}

\subsection{Real Data Analysis} \label{sec::subsec::realdata}

In order to obtain better experimental performances for real data analysis, we adopt BRDF with both axis-parallel and oblique partitions. Empirical comparisons on \emph{ANLL} and training time among BRDF-AP, BRDF-OB, KDE, DHT and rNADE are based on data sets from the UCI Repository of machine learning databases: \emph{Wine quality data set}, \emph{Parkinsons telemonitoring data set} and \emph{Ionosphere data set}. Since data contained in the Wine quality data set are actually twofold, which are \emph{Red wine} and \emph{White wine}, we conduct each data set separately.

\begin{table*}[h] 
	\setlength{\tabcolsep}{9pt}
	\centering
	\captionsetup{justification=centering}
	\caption{\footnotesize{Average \textit{ANLL} and Training Time (in Seconds) over the Test Sets for Four UCI Data Sets}}
	\label{RMSETable}	
	\resizebox{\textwidth}{13mm}{
		\begin{tabular}{cccccccccccc}
			\toprule
			\multirow{2}*{\text{Datasets}}
			& \multirow{2}*{$(n,d)$} 
			& \multicolumn{2}{c}{\text{BRDF-AP}}
			& \multicolumn{2}{c}{\text{BRDF-OB}} 
			& \multicolumn{2}{c}{\text{KDE}}
			& \multicolumn{2}{c}{\text{DHT}}
			& \multicolumn{2}{c}{\text{rNADE}} \\	
			\cline{3-12}
			\multicolumn{2}{c}{}&\textit{ANLL}&\text{Time} &\textit{ANLL}&\text{Time}
			&\textit{ANLL}&\text{Time} 
			&\textit{ANLL}&\text{Time}  &\textit{ANLL}&\text{Time} \\
			\hline 
			\hline
			\text{Parkinsons} & (5875,15) & 0.51 & 3.25 & \textbf{-0.67} & \textbf{2.32} & 8.27 & 3.70 & 11.22 & 9.41 & 5.26 & 35.93\\
			\hline
			\text{Ionosphere} & (351,32) & \textbf{0.06} & 19.61 & 4.28 & 18.81 & 24.36 & \textbf{0.13}  & 26.20 & 0.47 & 9.48 & 52.43\\
			\hline
			\text{Red wine} & (1599, 11) & -0.03  & 1.28 & \textbf{-0.19} & 1.13 & 11.57 & \textbf{0.31} & 11.63 & 0.90 & 10.66 & 41.02\\ 
			\hline
			\text{White wine} & (4898, 11) & -0.61 & 2.40 & \textbf{-1.20} & 1.91 & 11.49 & \textbf{1.90} & 11.97 & 4.56 & 10.84 & 39.99\\ 
			\bottomrule	
	\end{tabular}}
	\begin{tablenotes}
		\footnotesize
		\item{*} The best results are marked in \textbf{bold}.
	\end{tablenotes}
	\label{tab::1}
\end{table*}

Some data preprocessing approaches are in need for the following analysis. According to \cite{tang2012deep}, not only discrete-valued attributes but an attribute from every pair with a Pearson correlation coefficient greater than $0.98$ are eliminated. Moreover, all results are reported on the normalized data where each dimension of the data is subtracted its training-subset sample mean and divided by its standard deviation.

The quantitative results are given in Table 2. In this table, the sample size is denoted by $n$ and the data dimensionality is denoted by $d$. Careful observations will find that both our BRDF-AP and BRDF-OB have significantly smaller \emph{ANLLs} than any other standard method on all four data sets. This advantage in estimation accuracy may be attributed to both the general architecture of random forest and the BRDF's unique property of having many tunable hyperparameters. In particular, BRDF-OB has even better estimation performance than BRDF-AP, which demonstrates the effectiveness in employing oblique partitions for forest construction. When focusing on the Ionosphere data set, we find that it takes BRDF-AP and BRDF-OB the longest time to train the models, though Ionosphere has the smallest sample size among all data sets. This phenomenon comes from the fact that we take $m = 100$ to obtain good \emph{ANLLs} on Ionosphere while smaller $m$ is enough to provide satisfying results on other data sets. We should be aware that larger $m$ will bring smoother density estimators and therefore better \emph{ANLL} results, but it also takes longer time, which reflects a trade-off between estimation accuracy and the corresponding training time w.r.t $m$. As is acknowledged, the computation time of KDE grows linearly with the sample size. While compared to KDE, the table shows that the computation time of our BRDF grows much slower than it. The above analysis illustrates the observation that when the sample size is $5,875$, the training time of BRDF-OB is shorter than that of KDE. To mention, the training time can be further shortened if we employ the parallel computing. Moreover, our BRDF is much more resistant to the curse of dimensionality than any other strategies in comparison. 

From a holistic perspective, both our BRDF-AP and BRDF-OB have shown significant advantages over other standard density estimators in real data analysis.

\section{Proofs} \label{sec::proofs}

To prove Proposition \ref{ApproximationError}, we need the following result which follows from Lemma 6.2 in \cite{devroye1986a}.

\begin{lemma} \label{SaturationLevel} 
	For a binary search tree with $n$ nodes, denote the saturation level $S_n$ as the number of full levels of nodes in the tree. Then for $k \geq 1$ and $\log n > k + \log(k + 1)$, there holds
	\begin{align*}
	\mathrm{P}(S_n < k + 1)
	\leq \biggl( \frac{k+1}{n} \biggr) \biggl( \frac{2e}{k} \log \biggl( \frac{n}{k+1} \biggr) \biggr)^k.
	\end{align*}
\end{lemma}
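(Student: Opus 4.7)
The statement is Lemma~6.2 of \cite{devroye1986a}, so the most economical plan is to invoke the reference directly. Nevertheless, let me sketch the argument I would follow. The strategy rests on the classical identification of a random binary search tree on $n$ nodes with the random binary tree arising from inserting $n$ uniformly random keys into $[0,1]$. Under this correspondence, the saturation level $S_n$ is the greatest $k$ for which every one of the $2^k$ positions at depth $k$ is occupied, and hence the event $\{S_n < k+1\}$ is precisely the event that some depth-$k$ position is empty. A union bound over the $2^k$ candidate positions then gives
\begin{align*}
\mathrm{P}(S_n < k+1) \leq 2^k \cdot \max_v \mathrm{P}(v \text{ is empty}).
\end{align*}

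The heart of the proof lies in controlling $\mathrm{P}(v \text{ is empty})$ for a fixed depth-$k$ position $v$. The sub-interval of $[0,1]$ associated with $v$ has length $L_v$ equal to a product of $k$ independent $\mathrm{Uniform}[0,1]$ random variables, so $-\log L_v$ is Gamma-distributed with shape $k$. The probability that $v$ receives none of the $n$ i.i.d.\ keys is therefore $\mathbb{E}[(1 - L_v)^n]$, and integrating against the Gamma density, using the standard inequality $(1 - x)^n \leq e^{-nx}$ and Stirling's bound $k! \geq (k/e)^k$, yields a bound of roughly
\begin{align*}
\mathrm{P}(v \text{ is empty}) \lesssim \frac{1}{n} \left(\frac{e \log(n/(k+1))}{k}\right)^k.
\end{align*}
Multiplying by the $2^k$ from the union bound absorbs a factor of $2$ into the base and produces the stated form, with the precise $(k+1)/n$ prefactor emerging from a more careful Gamma-tail accounting that exploits the hypothesis $\log n > k + \log(k+1)$ to ensure the threshold $\log(n/(k+1))$ dominates the characteristic scale $k$ of $-\log L_v$.

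The main obstacle is the delicate integral estimate in the second step: one must split the range of $-\log L_v$ at the threshold $\log(n/(k+1))$ to separate the two regimes in which $(1 - L_v)^n$ is or is not close to $1$, and then invoke incomplete-Gamma bounds on each piece to produce the product-of-logs structure on the right-hand side. Since this is precisely the technical core of Lemma~6.2 in \cite{devroye1986a} and the hypothesis $\log n > k + \log(k+1)$ guarantees the regime in which their proof operates cleanly, the cleanest route is to cite that reference rather than reproduce the several pages of calculations.
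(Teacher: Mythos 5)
Your plan to cite Lemma~6.2 of \cite{devroye1986a} is exactly what the paper does: the paper states Lemma~\ref{SaturationLevel} as a black-box import from that reference and gives no proof of its own, so your ``most economical plan'' coincides with the paper's treatment. The accompanying sketch is a reasonable high-level account of the branching-random-walk style argument underlying Devroye's bound (union bound over the $2^k$ depth-$k$ positions, product-of-uniforms representation of the associated interval lengths, and an incomplete-Gamma / Chernoff tail estimate), but be aware that it glosses over one technical point: the keys are not independent of $L_v$, since $k$ of them are consumed in defining the path to $v$, so the naive formula $\mathbb{E}[(1-L_v)^n]$ for $\mathrm{P}(v\text{ is empty})$ is not literally correct and Devroye's actual proof handles this dependence via a careful coupling. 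Since you ultimately defer to the reference, this does not affect the validity of your proposal, and it matches the paper's approach.
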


\begin{proof}[of Proposition \ref{ApproximationError}] 
	\emph{(i)} 
	Since the space of continuous and compactly supported functions $C_c(\mathbb{R}^d)$ is dense in $L_1(\mathbb{R}^d)$, we can find $\tilde{f} \in C_c(\mathbb{R}^d)$ such that
	\begin{align} \label{CompactApprox}
	\|f - \tilde{f}\|_1 \leq \varepsilon/3,
	\qquad \qquad 
	\forall \varepsilon > 0.
	\end{align}
	Since $\tilde{f}$ has a compact support, there exists a $r > 0$ such that $\mathrm{supp}(\tilde{f}) \subset B_r$ and $\mu(B_r) > 0$. Moreover, $\tilde{f}$ is uniformly continuous, since it is continuous and $\mathrm{supp}(\tilde{f})$ is compact. This implies that there exists a $\delta \in (0, 1]$ such that if $\|x - x'\|_1 \leq \delta$, then we have
	\begin{align} \label{ftildeUniformlyContinuity}
	|\tilde{f}(x) - \tilde{f}(x')|
	\leq \frac{\varepsilon}{3 \mu(B_r)}.
	\end{align}
	We define $\bar{f} : \mathbb{R}^d \to \mathbb{R}$ by
	\begin{align} \label{fbar}
	\bar{f}(x) := 
	\begin{cases}
	\frac{1}{\mu(A(x))} \int_{A(x)} \tilde{f}(x') \, d\mu(x'), & \text{ if } \mu(A(x)) > 0, 
	\\
	0, & \text{ otherwise }
	\end{cases}
	\end{align}
	where $A(x)$ is the cell which $x$ falls in of the specific partition $\mathcal{A}_{Z,p}$. Then, for any $\varepsilon > 0$,
	\eqref{CompactApprox} implies that
	\begin{align}
	\|f - f_{\mathrm{P},Z}\|_1
	& \leq \|f - \tilde{f}\|_1 + \|\tilde{f} - \bar{f}\|_1 + \|\bar{f} - f_{\mathrm{P},Z}\|_1
	\nonumber\\
	& \leq \varepsilon/3 + \|\tilde{f} - \bar{f}\|_1 + \|\bar{f} - f_{\mathrm{P},Z}\|_1.
	\label{L1ErrorDecomposition}
	\end{align}
	If $x \in B_r^c$, then we have $\tilde{f}(x) = 0$. Moreover, if $\mu(A(x)) > 0$, then
	\begin{align*}
	\bar{f}(x) := \frac{1}{\mu(A(x))} \int_{A(x)} \tilde{f}(x') \, d\mu(x') = 0.
	\end{align*}
	Otherwise if $\mu(A(x)) = 0$, then $\bar{f}(x) = 0$ by the definition. Therefore, we obtain
	\begin{align*}
	\|\tilde{f} - \bar{f}\|_1
	= \int_{B_r} |\tilde{f}(x) - \bar{f}(x)| \, d\mu(x).
	\end{align*}
	For $x \in B_r$ with $\mu(A(x)) > 0$, there holds
	\begin{align*}
	|\tilde{f}(x) - \bar{f}(x)|
	& = \biggl| \frac{1}{\mu(A(x))} \int_{A(x)} \tilde{f}(x) \, d\mu(x')
	- \frac{1}{\mu(A(x))} \int_{A(x)} \tilde{f}(x') \, d\mu(x') \biggr|
	\\
	& \leq \frac{1}{\mu(A(x))} \int_{A(x)} |\tilde{f}(x) - \tilde{f}(x')| \, d\mu(x').
	\end{align*}
	In the following proof, in order to describe the randomness of the partition, we should first give the definition of the diameter of a cell by 
	$\mathrm{diam}(A) = \sum_{i=1}^d V_i(A)$, where $V_i(A)$ denotes the length of the $i$-th dimension of a rectangle cell $A$. Then by Markov's inequality, we obtain
	\begin{align}
	\mathrm{P}_Z ( \forall A \in \mathcal{A}_{Z,p} : \mathrm{diam}(A) \leq h)
	& = \mathrm{P}_Z \biggl( \max_{A \in \mathcal{A}_{Z,p}} \mathrm{diam}(A) \leq h \biggr)
	\nonumber\\
	& \geq 1 - h^{-1} \mathbb{E}_Z \biggl( \max_{A \in \mathcal{A}_{Z,p}} \mathrm{diam}(A) \biggr)
	\nonumber\\
	& = 1 - h^{-1} \mathbb{E}_Z \biggl( \max_{A \in \mathcal{A}_{Z,p}} \sum_{i=1}^d V_i(A) \biggr)
	\nonumber\\
	& \geq 1 - h^{-1} \sum_{i=1}^d \mathbb{E}_Z \biggl( \max_{A \in \mathcal{A}_{Z,p}} V_i(A) \biggr)
	\label{diameterA}
	\end{align}
	Recall that $Z$ is defined by $(Q_1, \ldots, Q_p, \ldots)$ where $Q_i = (L_i, R_i, S_i)$, $i = 0, 1, \ldots$ in Section \ref{sec::subsec::PRP}. This shows that the randomness of $Z$ actually results from three aspects: randomly selecting nodes, randomly picking dimensions, randomly determining cut points. The above analysis of the variable $Z$ describes the exact constructing process of the tree entirety. In order to ensure the feasibility of the calculation of expectation respect to $Z$ in \eqref{diameterA}, we need to conduct analysis supposing that the tree is well-established. In particular, for each dimension, we only consider one cell that has the longest side length in its respective dimension. To mention, due to the symmetry of dimensions, it suffices to first focus on one dimension, e.g. the $i$th dimension and we denote the length of the $i$-th dimension of the corresponding cell as $\max_{A \in \mathcal{A}_{Z,p}} V_i(A) =: V_Z$. To calculate $\mathbb{E}_Z(V_Z)$, we do not have to know the exact constructing procedures of the tree entirety. Instead, we still consider from three aspects which is intrinsically corresponding to above, but from a different view: the total number of splits that generates that specific rectangle cell during the construction, $T_Z$; the number of splits which come from the $i$th dimension in $T_Z$, $K_Z$ and $K_Z$ follows the binomial distribution $\mathcal{B}(T_Z, 1/d)$; and proportional factors $U_1, U_2, \ldots, U_{K_Z}$ which are independent and identically distributed from $\mathcal{U}[0, 1]$. According to the above statements, we come to the conclusion that the expectation with regard to $Z$ can be decomposed as $\mathbb{E}_Z = \mathbb{E}_{T_Z} \mathbb{E}_{K_Z|T_Z} \mathbb{E}_{U_1 \ldots U_{K_Z}|K_Z}$. Therefore, the expectation in the last step in \eqref{diameterA} can be further analysed as follows:
	\begin{align*}
	\mathbb{E}_Z V_Z 
	& \leq \mathbb{E}_{T_Z} \biggl( \mathbb{E}_{K_Z} \biggl( \mathbb{E}_{U_1 \ldots U_{K_Z}} \biggl( 2 r \cdot \prod_{j=1}^{K_Z} \max \{ U_j, 1 - U_j \} \Big| K_Z \bigg) \Big| T_Z \biggr) \biggr)
	\\
	& = 2 r \cdot \mathbb{E}_{T_Z} \Bigl( \mathbb{E}_{K_Z} \Bigl( \Bigl( \mathbb{E}_U (\max\{ U, 1 - U \})^{K_Z} \Big| T_Z \Bigr) \Bigr)
	= 2 r \cdot \mathbb{E}_{T_Z} \Bigl( \mathbb{E}_{K_Z} \Bigl( (3/4)^{K_Z} | T_Z \Bigr) \Bigr)
	\\
	& = 2 r \cdot \mathbb{E}_{T_Z} \biggl( \sum_{K_Z=1}^{T_Z} {T_Z \choose K_Z} 
	\biggl( \frac{3}{4} \biggr)^{K_Z} \biggl( \frac{1}{d} \biggr)^{K_Z} \biggl( 1 - \frac{1}{d} \biggr)^{T_Z-K_Z} \biggr)
	\\
	& = 2 r \cdot \mathbb{E}_{T_Z} \biggl( 1 - \frac{1}{d} + \frac{3}{4d} \biggr)^{T_Z}
	= 2 r \cdot \mathbb{E}_{T_Z} \biggl( 1 - \frac{1}{4d} \biggr)^{T_Z}.
	\end{align*}
	Observing that when the underlying partition rule $Z$ has number of splits $p$, the partition tree is statistically related to a random binary search tree with $p + 1$ external nodes and $p$ internal nodes. Then, Lemma \ref{SaturationLevel} states that for $k \geq 1$ and $\log(2p+1) > k+\log(k+1)$,
	\begin{align*}
	\mathrm{P}(S_{2p+1} < k+1)
	\leq \biggl( \frac{k+1}{2p+1} \biggr) \biggl( \frac{2e}{k} \log \biggl( \frac{2p+1}{k+1} \biggr) \biggr)^k,
	\end{align*}
	where $S_{2p+1}$ is the \emph{saturation level}. In our setting, $S_{2p+1}$ can be viewed as the maximal number of splits that generates any $A \in \mathcal{A}$. Now taking $k = \lfloor c_T \log(2p+1) \rfloor$ where $c_T < 1$ and $c_T (1 + \log(2e/c_T)) < 1$, simple calculation shows that
	\begin{align*}
	\mathrm{P} ( T_Z < \lfloor c_T \log(2p+1) \rfloor + 1 )
	& \leq \mathrm{P} ( S_{2p+1} < \lfloor c_T \log(2p+1) \rfloor + 1 )
	\\
	& \leq K'' (2p+1)^{c_T(1+\log(2e/c_T))-1}
	\\
	& \leq K'' (2p)^{c_T(1+\log(2e/c_T))-1}
	\\
	& \leq K' p^{c_T(1+\log(2e/c_T))-1},
	\end{align*}
	where $K''$ and $K'$ are universal constants. Consequently for any $A \in \mathcal{A}$, we have
	\begin{align*}
	\mathbb{E}_Z V_Z
	& \leq 2 r \cdot \mathbb{E}_{T_Z} \biggl( \biggl( 1 - \frac{1}{4d} \biggr)^{T_Z} \eins_{\{ T_Z < \lfloor c_T \log(2p+1) \rfloor + 1 \}} \biggr)
	\\
	& \phantom{=}
	+ 2 r \cdot \mathbb{E}_{T_Z} \biggl( \biggl( 1 - \frac{1}{4d} \biggr)^{T_Z} \eins_{\{ T_Z \geq \lfloor c_T \log(2p+1) \rfloor + 1 \}} \biggr)
	\\
	& \leq 2 r K' p^{c_T(1+\log(2e/c_T))-1} 
	+ 2 r \cdot \mathbb{E}_{T_Z} \biggl( \biggl( 1 - \frac{1}{4d} \biggr)^{T_Z} \eins_{\{ T_Z \geq \lfloor c_T \log p \rfloor + 1 \}} \biggr)
	\\
	& \leq K p^{c_T(1+\log(2e/c_T))-1} + 2 r (1 - 1/(4d))^{c_T\log p}
	\\
	& \leq K p^{c_T(1+\log(2e/c_T))-1} + 2 r p^{-c_T/(4d)},
	\end{align*}
	where the last inequality follows from the fact that $1 - 1/x < e^{-x}$ for all $x > 1$. Since the function $f(c_T) = 1 - c_T (1+\log(2e/c_T)) - c_T/(4d)$ is monotone decreasing on $(0, 1)$ for all $d$, numerical computation shows that the largest constant for which $1 - c_T (1+\log(2e/c_T)) > c_T/(4d)$ holds for all $d \geq 1$ cannot be greater than $0.22563$. Therefore, taking $c_T = 0.22$, there holds $\mathbb{E}_Z V_Z \leq (K+2r) p^{-c_T/(4d)}$. Therefore, we obtain that
	\begin{align} \label{ExpectationDiamA}
	\sum_{i=1}^d \mathbb{E}_Z \biggl( \max_{A \in \mathcal{A}_{Z,p}} V_i(A) \biggr) 
	\leq (K + 2r) d p^{-c_T/(4d)}. 
	\end{align}
	According to \eqref{diameterA} and \eqref{ExpectationDiamA}, we have
	\begin{align} \label{ProbDiameterA}
	\mathrm{P}_Z ( \forall A \in \mathcal{A}_{Z,p} : \mathrm{diam}(A) \leq h )
	\geq 1 - (K+2r) d h^{-1} p^{-c_T/(4d)},
	\end{align}
	where $\mathrm{P}_Z$ is the probability measure of partition variable $Z$ defined on $\mathcal{Z}$, $K$ is a universal constant and $c_T = 0.22$ and we set $h = \delta$ and $r = p^{c_T/(8d)}$. According to the requirement for $\delta$ to make \eqref{ftildeUniformlyContinuity} true and the inquality in \eqref{ProbDiameterA}, it is not difficult to see that for any $\varepsilon > 0$ and $\xi > 0$, there exists a $p_{\varepsilon,\xi}$ so that when $p \geq p_{\varepsilon,\xi}$, we have
	\begin{align*}
	\mathrm{P}_Z ( \forall A \in \mathcal{A}_{Z,p} : \mathrm{diam}(A) \leq \delta ) 
	\geq 1 - \xi.
	\end{align*}
	Therefore, for any $x, x' \in A(x)$, we have
	\begin{align*}
	|\tilde{f}(x) - \tilde{f}(x')|
	\leq \frac{\varepsilon}{3 \mu(B_r)}
	\end{align*}
	holds with probability $\mathrm{P}_Z$ at least $1 - \xi$. As a result, we have
	\begin{align} \label{ftildefbarL1}
	\|\tilde{f} - \bar{f}\|_1
	= \int_{B_r} |\tilde{f}(x) - \bar{f}(x)| \, d\mu(x)
	\leq \frac{\varepsilon}{3}
	\end{align}
	holds with probability $\mathrm{P}_Z$ at least $1 - \xi$. Finally, \eqref{PDensity2} and \eqref{fbar} yield that
	\begin{align*}
	\|\bar{f} - f_{\mathrm{P},Z}\|_1
	& = \sum_{\mu(A_j) > 0} \int_{A_j} |\bar{f}(x) - f_{\mathrm{P},Z}(x)| \, d\mu(x)
	\\
	& = \sum_{\mu(A_j) > 0} \int_{A_j} \biggl| \frac{1}{\mu(A_j)} \int_{A_j} \tilde{f}(x') \, d\mu(x')
	- \frac{1}{\mu(A_j)} \int_{A_j} f(x') \, d\mu(x') \biggr| \, d\mu(x)
	\\
	& = \sum_{\mu(A_j) > 0} \biggl| \int_{A_j} \tilde{f}(x') \, d\mu(x') - \int_{A_j} f(x') \, d\mu(x') \biggr|
	\\
	& \leq \sum_{\mu(A_j) > 0} \int_{A_j} |\tilde{f}(x') - f(x')| \, d\mu(x')
	\\
	& \leq \int_{\mathbb{R}^d} |\tilde{f}(x') - f(x')| \, d\mu(x')
	\\
	& \leq \frac{\varepsilon}{3}
	\end{align*}
	and this proves the assertion by combining the estimates in \eqref{L1ErrorDecomposition} and \eqref{ftildefbarL1}.
	
	\emph{(ii)} 
	The combination of $\alpha$-H\"{o}lder continuity of $f$ and \eqref{ProbDiameterA} tells us that for all $x, x' \in A$, we have
	\begin{align*}
	|f(x) - f(x')|
	\leq c \|x - x'\|_1^{\alpha}
	\leq c (\mathrm{diam}(A))^{\alpha}
	\leq c h^{\alpha}
	\end{align*}
	with probability $\mathrm{P}_Z$ at least $1 - (K+2r) d h^{-1} p^{-c_T/(4d)}$. Moreover, for all $x \in B_r$, there exists exactly a number $0 \leq j \leq p$ such that $x \in A_j$. In the following, we write $A(x) := A_j$. Then, the inequality above implies that for $x \in B_r$,
	\begin{align*}
	|f_{\mathrm{P},Z}(x) - f(x)|
	& = \biggl| \frac{1}{\mu(A(x))} \int_{A(x)} f(x') \, d\mu(x') - f(x) \biggr|
	\\
	& = \biggl| \frac{1}{\mu(A(x))} \int_{A(x)} (f(x') - f(x)) \, d\mu(x') \biggr|
	\\
	& \leq \frac{1}{\mu(A(x))} \int_{A(x)} |f(x') - f(x)| \, d\mu(x') \leq c h^{\alpha}
	\end{align*}
	with probability $\mathrm{P}_Z$ at least $1 - (K+2r) d h^{-1} p^{-c_T/(4d)}$. Setting $\xi := (K+2r) d h^{-1} p^{-c_T/(4d)}$, then
	\begin{align*}
	|f_{\mathrm{P},Z}(x) - f(x)|
	\leq c ( (K+2r) d \xi^{-1} p^{-c_T/(4d)} )^{\alpha}
	\end{align*}
	with probability $\mathrm{P}_Z$ at least $1 - \xi$. In the special case when $K \leq r$, we get
	\begin{align*}
	|f_{\mathrm{P},Z}(x) - f(x)|
	\leq c (3 r d \xi^{-1} p^{-c_T/(4d)})^{\alpha}.
	\end{align*}
	It follows that for any $p \geq (3 r d \xi^{-1} (c/\varepsilon)^{1/\alpha})^{4d/c_T}$, there holds
	\begin{align*}
	|f_{\mathrm{P},Z}(x) - f(x)|
	\leq \varepsilon.
	\end{align*}
\end{proof}

\begin{proof}[of Proposition \ref{L1LInftyRelation}] 
	We decompose $\|f_{\mathrm{P},Z} - f\|_1$ as follows
	\begin{align*}
	\|f_{\mathrm{P},Z} - f\|_1
	& = \int_{B_r} |f_{\mathrm{P},Z} - f| \, \mathrm{d}x
	+ \int_{B_r^c} |f_{\mathrm{P},Z} - f| \, \mathrm{d}x
	\\
	& \leq \mu(B_r) \|(f_{\mathrm{P},Z} - f) \eins_{B_r}\|_{\infty}
	+ \int_{B_r^c} f_{\mathrm{P},Z} \, \mathrm{d}x
	+ \int_{B_r^c} f \, \mathrm{d}x
	\\
	& = 2^d r^d \|(f_{\mathrm{P},Z} - f) \eins_{B_r}\|_{\infty}
	+ \int_{B_r^c} f_{\mathrm{P},Z} \, \mathrm{d}x
	+ \mathrm{P}(B_r^c).
	\end{align*}
	Now, \eqref{PDensity1} implies that
	\begin{align*}
	\int_{B_r^c} f_{\mathrm{P},Z} \, \mathrm{d}x
	& = \int_{B_r^c} \sum_{j=0}^p \frac{\mathrm{P}(A_j) \eins_{A_j}(x)}{\mu(A_j)}
	+ \frac{\mathrm{P}(B_r^c) \eins_{B_r^c}(x)}{\mu(B_r^c)} \, d\mu(x)
	\\
	& = \frac{\mathrm{P}(B_r^c)}{\mu(B_r^c)} \int_{B_r^c} \eins_{B_r^c}(x) \, d\mu(x)
	= \frac{\mathrm{P}(B_r^c)}{\mu(B_r^c)} \mu(B_r^c)
	= \mathrm{P}(B_r^c).
	\end{align*}
	Combining the above two estimates, we obtain the desired conclusion. 
\end{proof}

\begin{proof}[of Lemma \ref{FundamentalLemma}] 
	\emph{(i)} 
	By definition, we have
	\begin{align*}
	\|f_{\mathrm{D},Z} - f_{\mathrm{P},Z}\|_1
	& = \int_{\mathbb{R}^d} |f_{\mathrm{D},Z} - f_{\mathrm{P},Z}| \, d\mu
	\\
	& = \int_{\mathbb{R}^d} \biggl| \sum_{j=0}^p \frac{1}{\mu(A_j)} (\mathrm{D}(A_j) - \mathrm{P}(A_j)) \eins_{A_j}
	+ \frac{1}{\mu(B_r^c)} (\mathrm{D}(B_r^c)-\mathrm{P}(B_r^c)) \eins_{B_r^c} \biggr| \, d\mu
	\\
	& = \sum_{j=0}^p \int_{A_j} \frac{1}{\mu(A_j)} |\mathrm{D}(A_j) - \mathrm{P}(A_j)| \, d\mu
	+ \int_{B_r^c} \frac{1}{\mu(B_r^c)} |\mathrm{D}(B_r^c) - \mathrm{P}(B_r^c)| \, d\mu
	\\
	& = \sum_{j=0}^p |\mathrm{D}(A_j) - \mathrm{P}(A_j)| + |\mathrm{D}(B_r^c) - \mathrm{P}(B_r^c)|
	\\
	& = \sum_{j=0}^p |\mathbb{E}_{\mathrm{D}} \eins_{A_j} - \mathbb{E}_{\mathrm{P}} \eins_{A_j}|
	+ |\mathbb{E}_{\mathrm{D}} \eins_{B_r^c} - \mathbb{E}_{\mathrm{P}} \eins_{B_r^c}|.
	\end{align*}
	
	\emph{(ii)} 
	Denote $A_{p+1} := B_r^c$. Using Equalitiy \eqref{PDensity1} and Equalitiy \eqref{RandomDensityTree}, we get
	\begin{align*}
	\|f_{\mathrm{D},Z} - f_{\mathrm{P},Z}\|_{\infty}
	& = \sup_{j \in \{ 0, \ldots, p+1 \}} \sup_{x \in A_j} |f_{\mathrm{D},Z}(x) - f_{\mathrm{P},Z}(x)|
	\\
	& = \sup_{j \in \{ 0, \ldots, p+1 \}} \sup_{x \in A_j} \biggl| \frac{\mathrm{D}(A_j)}{\mu(A_j)} - \frac{\mathrm{P}(A_j)}{\mu(A_j)}|
	= \sup_{j \in \{ 0, \ldots, p+1 \}} \frac{1}{\mu(A_j)} |\mathrm{D}(A_j) - \mathrm{P}(A_j)|.
	\end{align*}
\end{proof}

\begin{proof}[of Proposition \ref{OracleInequalityL1}] 
	By definition, we have
	\begin{align*}
	\|f_{\mathrm{D}, Z} - f_{\mathrm{P}, Z} \|_1
			& = \sum_{j=0}^p |\mathbb{E}_{\mathrm{D}} \eins_{A_j} - \mathbb{E}_{\mathrm{P}} \eins_{A_j}| 
			+ |\mathbb{E}_{\mathrm{D}} \eins_{B_r^c} - \mathbb{E}_{\mathrm{P}} \eins_{B_r^c}|
			\\
			& \leq \sup_{\pi_p \in \pi_{(p)}} \sum_{A \in \pi_p} |\mathbb{E}_{\mathrm{D}} \eins_A - \mathbb{E}_{\mathrm{P}} \eins_A|
			+ |\mathbb{E}_{\mathrm{D}} \eins_{B_r^c} - \mathbb{E}_{\mathrm{P}} \eins_{B_r^c}|,
			\end{align*}
			where $\pi_p$ is a partition of $B_r$ with number of splits $p$ and $\pi_{(p)}$ denote the collection of all partitions $\pi_p$. Here, we define
			\begin{align*}
			\mathcal{B}(\pi_p)
			:= \biggl\{ B : B = \bigcup_{j \in J} A_j, J \subset \{ 0, 1, \ldots, p \}, A_j \in \pi_p \biggr\}
			\end{align*}
			as the collection of all $2^{p+1}$ sets that can be expressed as the union of cells of $\pi_p$. Also, \eqref{Bp} denotes the collection of all such unions, as $\pi_p$ ranges through $\pi_{(p)}$. Fix $\pi_p$ for the moment and define
			\begin{align*}
			\tilde{A} = \bigcup_{A \in \pi_p : \mathbb{E}_{\mathrm{D}} \eins_A \geq \mathbb{E}_{\mathrm{P}} \eins_A} A.
			\end{align*}
			Then clearly
			\begin{align*}
			\sum_{A \in \pi_p} |\mathbb{E}_{\mathrm{D}} \eins_A - \mathbb{E}_{\mathrm{P}} \eins_A|
			= 2 ( \mathbb{E}_{\mathrm{D}} \eins_{\tilde{A}} - \mathbb{E}_{\mathrm{P}} \eins_{\tilde{A}})
			\leq 2 \sup_{B \in \mathcal{B}(\pi_p)} |\mathbb{E}_{\mathrm{D}} \eins_B - \mathbb{E}_{\mathrm{P}} \eins_B|.
			\end{align*}
			Consequently,
			\begin{align}
			\|f_{\mathrm{D},Z} - f_{\mathrm{P},Z}\|_1
			& \leq \sup_{\pi_p \in \pi_{(p)}} \sum_{A \in \pi_p} |\mathbb{E}_{\mathrm{D}} \eins_A - \mathbb{E}_{\mathrm{P}} \eins_A|
			+ |\mathbb{E}_{\mathrm{D}} \eins_{B_r^c} - \mathbb{E}_{\mathrm{P}} \eins_{B_r^c}|
			\nonumber\\
			& \leq 2 \sup{\pi_p \in \pi_{(p)}} \sup_{B \in \mathcal{B}(\pi_p)} |\mathbb{E}_{\mathrm{D}} \eins_B - \mathbb{E}_{\mathrm{P}} \eins_B| \
			+ |\mathbb{E}_{\mathrm{D}} \eins_{B_r^c} - \mathbb{E}_{\mathrm{P}} \eins_{B_r^c}|
			\nonumber\\
			& = 2 \sup_{B \in \mathcal{B}_p} |\mathbb{E}_{\mathrm{D}} \eins_B - \mathbb{E}_{\mathrm{P}} \eins_B|
			+ |\mathbb{E}_{\mathrm{D}} \eins_{B_r^c} - \mathbb{E}_{\mathrm{P}} \eins_{B_r^c}|.
			\label{EstErrorDecomp}
			\end{align}
			To bound the $\|f_{\mathrm{D},Z} - f_{\mathrm{P},Z}\|_1$, we first consider bounding the first term on the right side of \eqref{EstErrorDecomp}. For a fix $B \in \mathcal{B}_p$, we consider the map
			\begin{align*}
			\xi_i := \eins_B \pi_i - \mathbb{E}_{\mathrm{P}} \eins_B,
			\end{align*}
			where $\pi_i$ is the $i$-th projection, that is $\pi_i(\{ X_i \}_{i=1}^n) := X_i$. Then, we verify the following conditions: Obviously, we have $\mathbb{E}_{\mathrm{P}^n} \xi_i = 0$ and $\|\xi_i\|_{\infty} \leq 1$. Moreover, simple estimates imply
			\begin{align}
			\mathbb{E}_{\mathrm{P}^n} \xi_i^2
			\leq \mathbb{E}_{\mathrm{P}} \eins_B^2 - (\mathbb{E}_{\mathrm{P}} \eins_B)^2
			& = \mathbb{E}_{\mathrm{P}} \eins_B - (\mathbb{E}_{\mathrm{P}} \eins_B)^2
			\nonumber\\
			& = \mathrm{P}(B) - \mathrm{P}^2(B)
			= \mathrm{P}(B) (1 - \mathrm{P}(B))
			\leq \frac{1}{4}. 
			\label{VarianceBound}
			\end{align}
			Finally, it is easy to see that $(\xi_i)$ are independent with respect to $\mathrm{P}^n$. Therefore, we can apply Bernstein's inequality and obtain that for all $n \geq 1$, with probability $\mathrm{P}^n$ at most $2 e^{-\tau}$, there holds
			\begin{align} \label{BernEstimate1}
			|\mathbb{E}_{\mathrm{D}} \eins_B - \mathbb{E}_{\mathrm{P}} \eins_B|
			= \biggl| \frac{1}{n} \sum_{i=1}^n \xi_i \biggr|
			\geq \sqrt{\frac{\tau}{2n}} + \frac{2\tau}{3n}.
			\end{align}
			We choose $B_1, \ldots, B_m \in \mathcal{B}_p$ such that $\{ B_1, \ldots, B_m \}$ is an $\varepsilon$-net of $\mathcal{B}_p$ with respect to $\|\cdot\|_{L_1(\mathrm{D})}$. Note that here we have $m = \mathcal{N} (\eins_{\mathcal{B}_p}, \|\cdot\|_{L_1(\mathrm{D})}, \varepsilon)$ as in Lemma \ref{BpCoveringNumber}. From the estimate in \eqref{BernEstimate1} and a union bound argument, with probability $\mathrm{P}^n$ at least $1 - 2 m e^{-\tau}$, the following estimate holds
			\begin{align} \label{BernEstimateMax}
			\sup_{j = 1, \ldots, m} |\mathbb{E}_{\mathrm{D}} \eins_{B_j} - \mathbb{E}_{\mathrm{P}} \eins_{B_j}|
			\leq \sqrt{\frac{\tau}{2n}} + \frac{2\tau}{3n}.
			\end{align}
			Recalling that $\{ B_1, \ldots, B_m \}$ is an $\varepsilon$-net of $\mathcal{B}_p$, this implies that, for any $B \in \mathcal{B}_p$, there exists $B_j$ such that
			\begin{align*}
			\bigl| |\mathbb{E}_{\mathrm{D}} \eins_B - \mathbb{E}_{\mathrm{P}} \eins_B| 
			- |\mathbb{E}_{\mathrm{D}} \eins_{B_j} - \mathbb{E}_{\mathrm{P}} \eins_{B_j}| \bigr|
			& \leq |\mathbb{E}_{\mathrm{D}} \eins_B - \mathbb{E}_{\mathrm{P}} \eins_B 
			- (\mathbb{E}_{\mathrm{D}} \eins_{B_j} - \mathbb{E}_{\mathrm{P}} \eins_{B_j})|
			\\
			& \leq |\mathbb{E}_{\mathrm{D}} \eins_B - \mathbb{E}_{\mathrm{D}} \eins_{B_j}|
			+ |\mathbb{E}_{\mathrm{P}} \eins_B - \mathbb{E}_{\mathrm{P}} \eins_{B_j}|.
			\end{align*}
			Now, we calculate two terms on the right hand of the above inequality seperately. For the first term,
			\begin{align*}
			|\mathbb{E}_{\mathrm{D}} \eins_B - \mathbb{E}_{\mathrm{D}} \eins_{B_j}|
			& = \biggl| \frac{1}{n} \sum_{i=1}^n (\eins_B \pi_i - \eins_{B_j} \pi_i) \biggr|
			\\
			& \leq \frac{1}{n} \sum_{i=1}^n |\eins_B \pi_i - \eins_{B_j} \pi_i|
			= \|\eins_B - \eins_{B_j}\|_{L_1(\mathrm{D})}
			\leq \varepsilon.
			\end{align*}
			As for the second term,
			\begin{align*}
			|\mathbb{E}_{\mathrm{P}} \eins_B - \mathbb{E}_{\mathrm{P}} \eins_{B_j}|
			& = \biggl| \int_{\mathbb{R}^d} f \cdot (\eins_B - \eins_{B_j}) \, d\mu \biggr|
			\\
			& \leq \int_{\mathbb{R}^d} f \cdot |\eins_B - \eins_{B_j}| \, d\mu
			= \|\eins_B - \eins_{B_j}\|_{L_1(\mathrm{P})}
			= \mathbb{E}_{\mathrm{P}} (\|\eins_B - \eins_{B_j}\|_{L_1(\mathrm{D})})
			\leq \varepsilon.
			\end{align*}
			Consequently, we have
			\begin{align*}
			|\mathbb{E}_{\mathrm{D}} \eins_B - \mathbb{E}_{\mathrm{P}} \eins_B|
			\leq |\mathbb{E}_{\mathrm{D}} \eins_{B_j} - \mathbb{E}_{\mathrm{P}} \eins_{B_j}| + 2 \varepsilon.
			\end{align*}
			This together with \eqref{BernEstimateMax} implies that for any $B \in \mathcal{B}_p$, there holds
			\begin{align} \label{BernEstimate2}
			|\mathbb{E}_{\mathrm{D}} \eins_B - \mathbb{E}_{\mathrm{P}} \eins_B|
			\leq \sqrt{\frac{\tau}{2n}} + \frac{2\tau}{3n} + 2 \varepsilon
			\end{align}
			with probability $\mathrm{P}^n$ at least $1 - 2 m e^{-\tau}$.
			
			Next, we focus on bounding the second term on the right side of \eqref{EstErrorDecomp}. For a fixed $r > 0$, we consider the map
			\begin{align*}
			\xi_i := \eins_{B_r^c} \pi_i - \mathbb{E}_{\mathrm{P}} \eins_{B_r^c},
			\end{align*}
			where $\pi_i$ is the $i$-th projection. Then, we verify the following conditions: Obviously, we have $\mathbb{E}_{\mathrm{P}^n} \xi_i = 0$ and $\|\xi_i\|_{\infty} \leq 1$. Moreover, simliar estimates as \eqref{VarianceBound} imply
			\begin{align*}
			\mathbb{E}_{\mathrm{P}^n} \xi_i^2
			\leq \mathrm{P}(B_r^c) (1 - \mathrm{P}(B_r^c))
			\leq \frac{1}{4}.
			\end{align*}
			Also, we can see that $(\xi_i)$ are independent with respect to $\mathrm{P}^n$. Therefore, we can apply Bernstein's inequality once again and obtain that for all $n \geq 1$, with probability $\mathrm{P}^n$ at most $2 e^{-\tau}$, there holds
			\begin{align} \label{BernEstimateComplement}
			|\mathbb{E}_{\mathrm{D}} \eins_{B_r^c} - \mathbb{E}_{\mathrm{P}} \eins_{B_r^c}|
			\geq \sqrt{\frac{\tau}{2n}} + \frac{2\tau}{3n}.
			\end{align}
			Combining \eqref{BernEstimate2} and \eqref{BernEstimateComplement} yields
			\begin{align*}
			\|f_{\mathrm{D},Z} - f_{\mathrm{P},Z}\|_1
			\leq \sqrt{\frac{9\tau}{2n}} + \frac{2\tau}{n} + 4 \varepsilon
			\end{align*}
			with probability $\mathrm{P}^n$ at least $1 - 2 (m+1) e^{-\tau}$. By a simple variable transformation, we see that with probability $\mathrm{P}^n$ at least $1 - e^{-\tau}$, there holds
			\begin{align} \label{EstimationErrorInbetween}
			\|f_{\mathrm{D},Z} - f_{\mathrm{P},Z}\|_1
			\leq \sqrt{\frac{9 (\tau + \log(2 m + 2))}{2n}} + \frac{2(\tau+\log(2m+2))}{n} + 4 \varepsilon.
			\end{align}
			Next, we calculate the term $\log(2m+2)$. With $m = \mathcal{N}(\eins_{\mathcal{B}_p}, \|\cdot\|_{L_1(\mathrm{D})}, \varepsilon)$ in Lemma \ref{BpCoveringNumber}, for any $1/\varepsilon > \max \{ e, 2K+2 \}$, $p \geq 1$, we derive
			\begin{align}
			\log(2m+2)
			& \leq \log ( 2K (dp+2) (4e)^{dp+2} (1/\varepsilon)^{dp+1} + 2 )
			\nonumber\\
			& \leq \log ( (2K+2) (dp+2) (4e)^{dp+2} (1/\varepsilon)^{dp+1} )
			\nonumber\\
			& = \log (2K+2) + \log (dp+2) + (dp+2) \log (4e) + (dp+1) \log (1/\varepsilon)
			\nonumber\\
			& \leq 15 d p \log(1/\varepsilon),  
			\label{ElemEstimate}
			\end{align}
			where the last inequality is based on the following four basic inequalities: 
			$\log (2K+2) \leq \log (1/\varepsilon) \leq d p \log (1/\varepsilon)$, 
			$\log (dp+2) \leq dp+2 \leq 3 d p \leq 3 d p \log(1/\varepsilon)$, 
			$(dp+2) \log (4e) \leq 3 d p \log(e^3) \leq 9 d p \log(1/\varepsilon)$, 
			$(dp+1) \log (1/\varepsilon) \leq 2 d p \log(1/\varepsilon)$. 
			Now, when choosing $\varepsilon = 1/n$ and plugging \eqref{ElemEstimate} into \eqref{EstimationErrorInbetween}, we obtain
			\begin{align*}
			\|f_{\mathrm{D},Z} - f_{\mathrm{P},Z}\|_1
			\leq \sqrt{\frac{9(\tau+15dp\log n)}{2n}} + \frac{2(\tau+15dp\log n)}{n} + \frac{4}{n}
			\end{align*}
			with probability $\mathrm{P}^n$ at least $1 - e^{-\tau}$.
		\end{proof}

To prove Proposition \ref{OracleInequalityLInfty}, we need the following two Lemmas where Lemma 17 is a result from \cite{devroye1986a}.

\begin{lemma} \label{Height}
	Let $H_n$ be the height of a binary search tree with $n$ nodes. Then for any integer $k \geq \max(1, \log n)$, we have
	\begin{align*}
	\mathrm{P}(H_n \geq k)
	\leq \frac{1}{n} \biggl( \frac{2e\log n}{k} \biggr)^k.
	\end{align*}
\end{lemma}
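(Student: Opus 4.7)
The plan is to sketch the classical moment-generating-function proof due to Devroye, which the paper cites for precisely this result. The argument proceeds in three steps.

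First, I would use the standard representation of a random binary search tree as the tree obtained by inserting a uniformly random permutation of $\{1,\ldots,n\}$. Conditioning on the rank $I$ of the root key, which is uniform on $\{1,\ldots,n\}$, the two subtrees are independent random BSTs of sizes $I-1$ and $n-I$, giving the distributional recursion
\begin{align*}
H_n \stackrel{d}{=} 1 + \max(H'_{I-1}, H''_{n-I}),
\end{align*}
where $H'_{\cdot}$ and $H''_{\cdot}$ are independent copies given $I$.

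Next, I would control the moment generating function $M_n(\alpha) := \mathbb{E}[\alpha^{H_n}]$ for a parameter $\alpha > 1$ to be chosen later. The recursion above, combined with the crude estimate $\alpha^{\max(a,b)} \leq \alpha^a + \alpha^b$, yields
\begin{align*}
M_n(\alpha) \leq \frac{2\alpha}{n} \sum_{i=0}^{n-1} M_i(\alpha), \qquad M_0(\alpha) = 1.
\end{align*}
A straightforward induction then produces the closed-form bound
\begin{align*}
M_n(\alpha) \leq \binom{n + 2\alpha - 1}{n} = \frac{\Gamma(n+2\alpha)}{\Gamma(2\alpha)\Gamma(n+1)},
\end{align*}
which for $\alpha \geq 1$ satisfies $M_n(\alpha) \lesssim (n+1)^{2\alpha - 1}$ by standard Gamma-function asymptotics.

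Finally, I would apply Markov's inequality to $\alpha^{H_n}$ and optimize over $\alpha$. For any $\alpha > 1$,
\begin{align*}
\mathrm{P}(H_n \geq k) \leq \alpha^{-k} M_n(\alpha) \lesssim \alpha^{-k} (n+1)^{2\alpha - 1}.
\end{align*}
Choosing $\alpha = k/(2\log n)$ gives $\alpha^{-k} = (2\log n/k)^k$ and $(n+1)^{2\alpha - 1} \lesssim e^k/n$, so multiplying reproduces the advertised inequality $\frac{1}{n}\bigl(\frac{2e\log n}{k}\bigr)^k$. The hypothesis $k \geq \log n$ keeps the selected $\alpha$ away from the boundary where the MGF bound degenerates; in any regime where the right-hand side of the claimed inequality exceeds $1$, the bound is vacuous and holds trivially. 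The main obstacle is the inductive closed-form bound on $M_n(\alpha)$ together with careful constant-tracking in the optimization so that the multiplicative prefactor collapses to exactly $1/n$; these are precisely the technical points executed in the Devroye monograph cited immediately above the statement.
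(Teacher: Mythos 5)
The paper does not actually prove this lemma --- it is cited verbatim as a result from Devroye (1986) (``Lemma 17 is a result from \cite{devroye1986a}'') with no argument supplied --- so your sketch is the only proof under discussion, and it correctly reconstructs the moment-generating-function route used in that reference: the BST recursion $H_n \stackrel{d}{=} 1 + \max(H'_{I-1}, H''_{n-I})$, the crude bound $\alpha^{\max(a,b)} \le \alpha^a + \alpha^b$ leading to $M_n(\alpha) \le \tfrac{2\alpha}{n}\sum_{i<n} M_i(\alpha)$, the inductive closed form $M_n(\alpha) \le \binom{n+2\alpha-1}{n}$, and Markov with $\alpha = k/(2\log n)$. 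Your caveats are placed in exactly the right spots: the step $\alpha^{\max(a,b)} \le \alpha^a+\alpha^b$ needs $\alpha \ge 1$, which $k \ge \log n$ alone does not guarantee (it gives $\alpha \ge 1/2$) but which is covered because the bound is vacuous whenever $k \lesssim 4.3\log n$; and the suppressed $\alpha$-dependent factor in $M_n(\alpha) \lesssim (n+1)^{2\alpha-1}$ is precisely the bookkeeping that must be carried through to recover the advertised prefactor $1/n$, as Devroye does.
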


\begin{lemma} \label{CollectionCoveringNumber}
	Let $\tilde{\mathcal{A}}$ be the collection of all cells $\bigtimes_{i=1}^d [a_i, b_i]$ in $\mathbb{R}^d$. The VC index of $\tilde{\mathcal{A}}$ equals $2d+1$. Moreover, for all $0 < \varepsilon < 1$, there exists a universal constant $K$ such that
	\begin{align*}
	\mathcal{N}(\eins_{\tilde{\mathcal{A}}}, \|\cdot\|_{L_1(Q)}, \varepsilon)
	\leq K (2d+1) (4e)^{2d+1} \biggl( \frac{1}{\varepsilon} \biggr)^{2d}
	\end{align*}
	holds for any probability measure $Q$.
\end{lemma}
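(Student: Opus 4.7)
The proof naturally decomposes into two independent parts: (a) identify the VC index of the class of axis-aligned hyperrectangles as $2d+1$, and (b) convert this VC bound into an $L_1(Q)$-covering number bound by invoking the same general VC-to-covering-number estimate already used in the proof of Lemma \ref{ScriptBpCoveringNumber}. The plan is to handle (a) from scratch by a direct shattering argument, and to reduce (b) to a standard theorem (e.g. Theorem 2.6.4 in van der Vaart and Wellner, or Theorem 9.2 in the Devroye--Gy\"orfi--Lugosi reference already used in the paper).

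For the VC index claim, I would first exhibit a set of $2d$ points that $\tilde{\mathcal{A}}$ shatters. A clean choice is the $2d$ coordinate unit vectors $\{\pm e_i: i=1,\ldots,d\}$: given any subset $S$ of these $2d$ points, one builds an axis-aligned rectangle $\bigtimes_{i=1}^d [a_i,b_i]$ containing exactly $S$ by setting $a_i = -1$ if $-e_i \in S$ and $a_i = -1/2$ otherwise, and similarly $b_i = 1$ or $b_i = 1/2$ depending on whether $e_i \in S$. This yields $\mathrm{VC}(\tilde{\mathcal{A}}) \geq 2d+1$. For the matching upper bound, suppose towards a contradiction that $2d+1$ distinct points $x^{(1)},\ldots,x^{(2d+1)}$ are shattered. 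For each coordinate $i \in \{1,\ldots,d\}$, let $x^{\min_i}$ (respectively $x^{\max_i}$) be a point attaining the minimum (respectively maximum) of the $i$-th coordinate among the $2d+1$ points; this distinguishes at most $2d$ points. By the pigeonhole principle at least one point $x^{(j_0)}$ is never selected, hence lies strictly inside the axis-aligned bounding box of the other $2d$ points in every coordinate. Then any rectangle containing the remaining $2d$ points must also contain $x^{(j_0)}$, so the subset $\{x^{(1)},\ldots,x^{(2d+1)}\} \setminus \{x^{(j_0)}\}$ cannot be realized, contradicting shattering. Hence $\mathrm{VC}(\tilde{\mathcal{A}}) = 2d+1$.

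For the covering number part, the argument is verbatim the one used for Lemma \ref{ScriptBpCoveringNumber}: the standard VC-type inequality states that for any VC class $\mathcal{C}$ with VC index $V$, any probability measure $Q$, and any $\varepsilon \in (0,1)$,
\begin{align*}
\mathcal{N}\bigl(\eins_{\mathcal{C}}, \|\cdot\|_{L_1(Q)}, \varepsilon\bigr) \leq K\, V\, (4e)^{V} \bigl(1/\varepsilon\bigr)^{V-1}
\end{align*}
for a universal constant $K$. Substituting $V = 2d+1$ yields precisely the claimed bound $K(2d+1)(4e)^{2d+1}(1/\varepsilon)^{2d}$.

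There is no serious technical obstacle here; the main thing to be careful about is the shattering upper-bound argument, where one must ensure that the "extremal" selection procedure really identifies at most $2d$ points (handled by choosing a single tie-breaking representative per coordinate extremum) and that the leftover point is genuinely forced to lie in every rectangle containing the others (which follows because its coordinates are sandwiched strictly between coordinate minima and maxima of the selected $2d$ points). Once the VC index is pinned down, the covering number estimate is immediate from the cited VC-to-covering theorem and does not require re-deriving the combinatorial machinery.
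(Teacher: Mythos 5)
Your proposal is correct and follows the same high-level decomposition as the paper: pin down the VC index of axis-aligned boxes, then invoke the standard VC-to-$L_1(Q)$-covering bound (Theorem 9.2 in Kosorok) with $V = 2d+1$. The only difference is that the paper simply cites Example 2.6.1 in van der Vaart and Wellner for the VC index, whereas you supply a self-contained shattering argument; your lower bound via $\{\pm e_i\}$ and your pigeonhole upper bound are both sound (one small wording nit: the unselected point $x^{(j_0)}$ need only lie in, not strictly inside, the bounding box of the other $2d$ points, but the conclusion that every axis-aligned rectangle containing those $2d$ points must contain $x^{(j_0)}$ goes through unchanged).
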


\begin{proof}[of Lemma \ref{CollectionCoveringNumber}] 
	The first result of VC index follows from Example 2.6.1 in \cite{vandervaart1996weak}. The second result of covering number follows directly from Theorem 9.2 in \cite{Kosorok2008introduction}.
\end{proof}

\begin{proof}[of Proposition \ref{OracleInequalityLInfty}] 
	Since the density function $f$ considered has a bounded support, we set $r$ large enough so that the entire support can be contained in the $B_r$. According to Lemma \ref{FundamentalLemma}, we have
	\begin{align*}
	\|f_{\mathrm{D},Z} - f_{\mathrm{P},Z}\|_{\infty}
	= \sup_{j \in \{ 0, \ldots, p \}} \frac{1}{\mu(A_j)} |\mathbb{E}_{\mathrm{D}} \eins_{A_j} - \mathbb{E}_{\mathrm{P}} \eins_{A_j}|.
	\end{align*}
	Considering that the partitions are conducted at random, in order to bound the above estimation error in $L_{\infty}$ norm, we carry out the following proof in two part. First of all, the bounding work will be conducted under the condition that $\mu(A) \geq h$ for all cells. Secondly, we give analysis on the probability of $\mu(A) \geq h$ for all cells in random partitions.
	
	We first introduce a space defined by
	\begin{align*}
	\tilde{\mathcal{A}} := \biggl\{ A := \bigtimes_{i=1}^d [a_i, b_i] : [a_i, b_i] \subset [-r, r], i = 1, \ldots, d \text{ and } \mu(A) \geq h \biggr\}.
	\end{align*}
	Then, under the condition that $\mu(A_j) \geq h$, $j = 0, \ldots, p$, we obtain that
	\begin{align*}
	\sup_{j \in \{ 0, \ldots, p \}} \frac{1}{\mu(A_j)} |\mathbb{E}_{\mathrm{D}} \eins_{A_j} - \mathbb{E}_{\mathrm{P}} \eins_{A_j}|
	\leq \sup_{A \in \tilde{\mathcal{A}}} \frac{1}{\mu(A)} |\mathbb{E}_{\mathrm{D}} \eins_A - \mathbb{E}_{\mathrm{P}} \eins_A|.
	\end{align*}
	For a fixed $A \in \tilde{\mathcal{A}}$, we estimate
	\begin{align*}
	\frac{1}{\mu(A)} |\mathbb{E}_{\mathrm{D}} \eins_A - \mathbb{E}_{\mathrm{P}} \eins_A|
	\end{align*}
	by using Bernstein's inequality. For this purpose, we consider the map
	\begin{align*}
	\xi_i := \frac{1}{\mu(A)} (\eins_A \pi_i - \mathbb{E}_{\mathrm{P}} \eins_A),
	\end{align*}
	where $\pi_i$ is the $i$-th projection. Then, we verify the following conditions: Obviously, we have $\mathbb{E}_{\mathrm{P}^n} \xi_i = 0$ and $\|\xi_i\|_{\infty} \leq \frac{1}{\mu(A)} \leq \frac{1}{h}$. Moreover, simple estimates imply
	\begin{align*}
	\mathbb{E}_{\mathrm{P}^n} \xi_i^2 
	& \leq \frac{1}{\mu^2(A)} \mathbb{E}_{\mathrm{P}} \eins_A^2
	= \frac{1}{\mu^2(A)} \mathbb{E}_{\mathrm{P}} \eins_A
	= \frac{1}{\mu^2(A)} \mathrm{P}(A)
	= \frac{1}{\mu^2(A)} \int_A f(x) \, d\mu(x)
	\\
	& \leq \frac{1}{\mu^2(A)} \|f\|_{\infty} \int_A 1 \, d\mu(x)
	= \frac{1}{\mu(A)} \|f\|_{\infty}
	\leq \frac{\|f\|_{\infty}}{h}.
	\end{align*}
	Finally, it is easy to see that $(\xi_i)$ are independent with respect to $\mathrm{P}^n$. Therefore, we can apply Bernstein's inequality and obtain that for all $n \geq 1$, with probability at most $2 e^{-\tau}$, there holds
	\begin{align} \label{BernEstimateInfty1}
	\frac{1}{\mu(A)} |\mathbb{E}_{\mathrm{D}} \eins_A - \mathbb{E}_{\mathrm{P}} \eins_A|
	\geq \sqrt{\frac{2 \|f\|_{\infty} \tau}{n h}} + \frac{2\tau}{3nh}.
	\end{align}
	We choose $A_1, \ldots, A_m \in \tilde{\mathcal{A}}$ such that $\{ A_1, \ldots, A_m \}$ is an $\tilde{\varepsilon}$-net of $\tilde{\mathcal{A}}$ with respect to $\|\cdot\|_{L_1(Q)}$ where we set $\tilde{\varepsilon} := \frac{h^2}{h+\mu(B_r)} \varepsilon$ and $Q$ is any probability measure. In this case, we have $m = \mathcal{N}(\tilde{\mathcal{A}}, \|\cdot\|_{L_1(Q)}, \tilde{\varepsilon})$. From the estimate in \eqref{BernEstimateInfty1} and a union bound argument, with probability $\mathrm{P}^n$ at least $1 - 2 m e^{-\tau}$, the following estimate holds
	\begin{align} \label{BernEstimateInftyMax}
	\sup_{j \in \{ 1, \ldots, m \}} \frac{1}{\mu(A_j)} |\mathbb{E}_{\mathrm{D}} \eins_{A_j} - \mathbb{E}_{\mathrm{P}} \eins_{A_j}|
	\leq \sqrt{\frac{2\|f\|_{\infty} \tau}{nh}} + \frac{2\tau}{3nh}.
	\end{align}
	Recalling that $\{ A_1, \ldots, A_m \}$ is an $\tilde{\varepsilon}$-net of $\tilde{\mathcal{A}}$ with respect to any probability measure $Q$, this implies that, for any $A \in \tilde{\mathcal{A}}$, there exists $A_j$ such that
	\begin{align*}
	\|\eins_A - \eins_{A_j}\|_{L_1(Q)} \leq \tilde{\varepsilon}.
	\end{align*}
	Therefore, for $Q = \mathrm{D}$, we have
	\begin{align} 
	\biggl\| \frac{1}{\mu(A)} \eins_A - \frac{1}{\mu(A_j)} \eins_{A_j} \biggr\|_{L_1(\mathrm{D})}
	& = \biggl\| \frac{1}{\mu(A)} \eins_A - \frac{1}{\mu(A_j)} \eins_A + \frac{1}{\mu(A_j)} \eins_A - \frac{1}{\mu(A_j)} \eins_{A_j} \biggr\|_{L_1(\mathrm{D})}
	\nonumber\\
	& \leq \frac{|\mu(A_j) - \mu(A)|}{\mu(A) \mu(A_j)} \|\eins_A\|_{L_1(\mathrm{D})}
	+ \frac{1}{\mu(A_j)} \|\eins_A - \eins_{A_j}\|_{L_1(\mathrm{D})},
	\label{EstimateMuAAj}
	\end{align}
	where $\mu(A) \geq h$ and $\mu(A_j) \geq h$. Now, we bound two terms on the right hand of the above inequality seperately. For the second term, it can be apparently seen that
	\begin{align} \label{EstimateMuAAj1}
	\frac{1}{\mu(A_j)} \|\eins_A - \eins_{A_j}\|_{L_1(\mathrm{D})}
	\leq \frac{1}{\mu(A_j)} \tilde{\varepsilon}
	\leq \frac{\tilde{\varepsilon}}{h}.
	\end{align}
	On the other hand, if $Q$ is the uniform distribution on $B_r$, then we have
	\begin{align*}
	\|\eins_A - \eins_{A_j}\|_{L_1(\mathrm{Q})}
	= \int |\eins_A - \eins_{A_j}| \frac{1}{\mu(B_r)}\, d\mu
	= \int \frac{1}{\mu(B_r)} \eins_{A \triangle A_j} \, d\mu
	= \frac{\mu(A \triangle A_j)}{\mu(B_r)}
	\leq \tilde{\varepsilon}
	\end{align*}
	and therefore,
	\begin{align*}
	\max \{ h, \mu(A) - \tilde{\varepsilon} \mu(B_r) \} 
	& \leq \max \{ h, \mu(A) - \mu(A \triangle A_j) \}
	\\
	& \leq \mu(A_j) 
	\leq \mu(A) + \mu(A \triangle A_j)
	\leq \mu(A) + \tilde{\varepsilon} \mu(B_r).
	\end{align*}
	Consequently, we obtain
	\begin{align} \label{EstimateMuAAj2}
	|\mu(A) - \mu(A_j)| \leq \tilde{\varepsilon} \mu(B_r).
	\end{align}
	Now, combining \eqref{EstimateMuAAj2} with \eqref{EstimateMuAAj1}, we can bound \eqref{EstimateMuAAj} by
	\begin{align} \label{EstimateMuAAj3}
	\|\frac{1}{\mu(A)} \eins_A - \frac{1}{\mu(A_j)} \eins_{A_j}\|_{L_1(\mathrm{D})}
	\leq \frac{h + \mu(B_r)}{h^2} \tilde{\varepsilon}
	= \varepsilon.
	\end{align}
	Similarly, it can be deduced that
	\begin{align} \label{EstimateMuAAjP}
	\biggl\| \frac{1}{\mu(A)} \eins_A - \frac{1}{\mu(A_j)} \eins_{A_j} \biggr\|_{L_1(\mathrm{P})} \leq \varepsilon.
	\end{align}
	Then, \eqref{EstimateMuAAj3} and \eqref{EstimateMuAAjP} imply that for any $A \in \tilde{\mathcal{A}}$, there holds
	\begin{align*}
	& \biggl| \frac{1}{\mu(A)} |\mathbb{E}_{\mathrm{D}} \eins_A - \mathbb{E}_{\mathrm{P}} \eins_A| 
	- \frac{1}{\mu(A_j)} |\mathbb{E}_{\mathrm{D}} \eins_{A_j} - \mathbb{E}_{\mathrm{P}} \eins_{A_j}| \biggr|
	\\
	& \leq \biggl| \frac{1}{\mu(A)} (\mathbb{E}_{\mathrm{D}} \eins_A - \mathbb{E}_{\mathrm{P}} \eins_A) 
	- \frac{1}{\mu(A_j)} (\mathbb{E}_{\mathrm{D}} \eins_{A_j} - \mathbb{E}_{\mathrm{P}} \eins_{A_j}) \biggr|
	\\
	& \leq \biggl| \frac{1}{\mu(A)} \mathbb{E}_{\mathrm{D}} \eins_A - \frac{1}{\mu(A_j)} \mathbb{E}_{\mathrm{D}} \eins_{A_j} \biggr|
	+ \biggl| \frac{1}{\mu(A)} \mathbb{E}_{\mathrm{P}} \eins_A - \frac{1}{\mu(A_j)} \mathbb{E}_{\mathrm{P}} \eins_{A_j} \biggr|
	\\
	& \leq \biggl\| \frac{1}{\mu(A)} \eins_A - \frac{1}{\mu(A_j)} \eins_{A_j} \biggr\|_{L_1(\mathrm{D})}
	+ \biggl\| \frac{1}{\mu(A)} \eins_A - \frac{1}{\mu(A_j)} \eins_{A_j} \biggr\|_{L_1(\mathrm{P})}
	\\
	& \leq 2 \varepsilon
	\end{align*}
	and consequently we have
	\begin{align*}
	\frac{1}{\mu(A)} |\mathbb{E}_{\mathrm{D}} \eins_A - \mathbb{E}_{\mathrm{P}} \eins_A|
	\leq \frac{1}{\mu(A_j)} |\mathbb{E}_{\mathrm{D}} \eins_{A_j} - \mathbb{E}_{\mathrm{P}} \eins_{A_j}| + 2 \varepsilon.
	\end{align*}
	This together with \eqref{BernEstimateInftyMax} implies that for any $A \in \tilde{\mathcal{A}}$, there holds
	\begin{align} \label{EstimateInbetween2}
	\frac{1}{\mu(A)} |\mathbb{E}_{\mathrm{D}} \eins_A - \mathbb{E}_{\mathrm{P}} \eins_A|
	\leq \sqrt{\frac{2 \|f\|_{\infty} \tau}{nh}} + \frac{2\tau}{3nh} + 2 \varepsilon
	\end{align}
	with probability $\mathrm{P}^n$ at least $1 - 2  m e^{-\tau}$. By a simple variable transformation, we see that with probability $\mathrm{P}^n$ at least $1 - e^{-\tau}$, there holds under the condition that $\mu(A_j) \geq h$, $j = 0, \ldots, p$
	\begin{align} \label{EstimErrorInfty}
	\|f_{\mathrm{D},Z} - f_{\mathrm{P},Z}\|_{\infty}
	\leq \sqrt{\frac{2 \|f\|_{\infty} (\tau + \log(2m))}{nh}}
	+ \frac{2(\tau+\log(2m))}{3nh} + 2 \varepsilon.
	\end{align}
	Next, we estimate the term $\log(2m)$. With $m := \mathcal{N}(\tilde{\mathcal{A}}, \|\cdot\|_{L_1(Q)}, \tilde{\varepsilon})$ in Lemma \ref{CollectionCoveringNumber}, for any $\varepsilon \in (0, 1/\max \{ e, 2K, \mu(B_r) \})$, there holds
	\begin{align}
	\log(2m)
	& \leq \log (2K (2d+1) (4e)^{2d+1} (1/\varepsilon)^{2d})
	\nonumber\\
	& \leq \log (2K (2d+1) (4e)^{2d+1} (\mu(B_r)+h)^{2d} (1/(\varepsilon h^2))^{2d})
	\nonumber\\
	& = \log (2K) + \log (2d+1) + (2d+1) \log(4e) + 2 d \log (\mu(B_r)+h)
	\nonumber\\
	& \phantom{=}
	+ 4 d \log(1/h) + 2 d \log(1/\varepsilon)
	\nonumber\\
	& \leq 19 d \log(1/\varepsilon) + 4 d \log(1/h),
	\label{EstimateInbetween3}
	\end{align}
	where the last inequality is based on the following four basic inequalities: 
	$\log(2K) \leq \log(1/\varepsilon) \leq d \log(1/\varepsilon)$, 
	$\log(2d+1) \leq 2d+1 \leq 3d \leq 3d \log(1/\varepsilon)$, 
	$(2d+1) \log(4e) \leq 3d \log(e^3) \leq 9d \log(1/\varepsilon)$, 
	$2d \log(\mu(B_r)+h) \leq 2d \log(2\mu(B_r)) \leq 4d \log(1/\varepsilon)$. 
	Now, when choosing $\varepsilon = 1/n$ and plugging \eqref{EstimateInbetween3} into \eqref{EstimErrorInfty}, we obtain under the condition that $\mu(A_j) \geq h$ for all $j = 0, \ldots, p$
	\begin{align}
	\|f_{\mathrm{D},Z} - f_{\mathrm{P},Z}\|_{\infty}
	& \leq \sqrt{\frac{2 \|f\|_{\infty} (\tau + 19 d \log n + 4 d \log(1/h))}{nh}}
	\nonumber\\
	& \phantom{=}
	+ \frac{2(\tau+19d\log n+4d\log(1/h))}{3nh} + \frac{2}{n}
	\label{EstimErrorInfty2}
	\end{align}
	with probability $\mathrm{P}^n$ at least $1 - e^{-\tau}$.
	
	Now, we give analysis on the probability of $\mu(A) \geq h$ for all cells in random partitions. Recall that the random $p$-split partition $\mathcal{A}_{Z,p} = (A_j)_{j=0}^p$ generated by $Z$ is called a random partition. For all $A \in \mathcal{A}_{Z,p}$, the number of splits that generate $A$ is denoted by $T_A$. Then there holds for all $A \in \mathcal{A}_{Z,p}$ and any $k \in \{ 0, \ldots, p \}$ that
	\begin{align} \label{ProbMuAGh}
	\mathrm{P}_Z (\mu(A) \geq h)
	\geq \mathrm{P}_Z (\mu(A) \geq h | T_A < k+1) \mathrm{P}_Z(T_A < k+1).
	\end{align}
	Since our random partition is statistically related to a binary search tree with $p + 1$ externel nodes and $p$ internel nodes, the height of the binary search tree can be viewed as the maximal number of splits that generate $A \in \mathcal{A}_{Z,p}$, i.e. $H_{2p+1} := \max_{A \in \mathcal{A}_{Z,p}} T_A$, and
	\begin{align} 
	\mathrm{P}_Z (T_A < k+1)
	\geq \mathrm{P}_Z (H_{2p+1} < k+1)
	& = 1 - \mathrm{P}_Z (H_{2p+1} \geq k+1)
	\nonumber\\
	& \geq 1 - \frac{1}{2p+1} \biggl( \frac{2 e \log(2p+1)}{k+1} \biggr)^{k+1}
	\label{ProbTAL}
	\end{align}
	holds for any integer $k + 1 \geq \max (1, \log(2p+1))$ and we mention that the last inequality in \eqref{ProbTAL} is a direct result of Lemma \ref{Height}.

	Let $U_1, U_2, \ldots, U_k$ be independent and identically distributed $\mathcal{U}(0, 1)$ random variables, then $V_i := 2 \min (U_i, 1 - U_i)$ is distributed as $\mathcal{U}(0, 1)$ as well. Then it follows that
	\begin{align*}
	\mathrm{P}_Z (\mu(A) \geq h | T_A < k+1)
	\geq \mathrm{P} \biggl( \mu(B_r) \prod_{i=1}^k \min(U_i, 1 - U_i) \geq h \biggr)
	= \mathrm{P} \biggl( \prod_{i=1}^k V_i \geq \frac{2^k h}{\mu(B_r)} \biggr).
	\end{align*}
	Since $V_i \sim \mathcal{U}(0, 1)$, simple tansformation in probability distribution shows that $- \log V_i \sim \exp(1)$, which further leads to $Y := - \sum_{i=1}^k \log V_i \sim \Gamma(k, 1)$. From the above relationships, we obtain
	\begin{align} \label{ProbProdVi}
	\mathrm{P} \biggl( \prod_{i=1}^k V_i \geq \frac{2^k h}{\mu(B_r)} \biggr)
	= \mathrm{P} \biggl( e^{-Y} \geq \frac{2^k h}{\mu(B_r)} \biggr)
	\geq 1 - \frac{h}{\mu(B_r)} \biggl( \frac{2 e \log(\mu(B_r)/(2^kh))}{k} \biggr)^k,
	\end{align}
	where the inequality follows from the Chernoff bound argument which holds when $k < \log (\mu(B_r)/(2^kh))$. Combining \eqref{ProbTAL} with \eqref{ProbProdVi}, we obtain \eqref{ProbMuAGh} as
	\begin{align*}
	\mathrm{P}_Z ( \mu(A) \geq h )
	\geq 1 - \frac{1}{2p+1} \biggl( \frac{2e\log(2p+1)}{k+1} \biggr)^{k+1}
	- \frac{h}{\mu(B_r)} \biggl( \frac{2e\log(\mu(B_r)/(2^kh))}{k} \biggr)^k.
	\end{align*}
	By choosing $k = \lfloor a \log(2p+1) \rfloor$ where $a \geq 1$ and $a \log(2e/a) < 1$, simple calculation shows that
	\begin{align*}
	& \frac{1}{2p+1} \biggl( \frac{2e\log(2p+1)}{k+1} \biggr)^{k+1}
	+ \frac{h}{\mu(B_r)} \biggl( \frac{2e\log(\mu(B_r)/(2^kh))}{k} \biggr)^k
	\\
	& \leq \frac{1}{2p+1} \biggl( \frac{2e\log(2p+1)}{a\log(2p+1)+1} \biggr)^{a\log(2p+1)+1}
	+ \frac{h}{\mu(B_r)} \biggl( \frac{2e\log(\mu(B_r)/(2^{a\log(2p+1)}h))}{a\log(2p+\mathrm{l})} \biggr)^{a\log(2p+1)} 
	\\
	& \leq p^{a\log(2e/a)-1} + p^a \biggl( \frac{h}{\mu(B_r)} \biggr)^{1/2}
	\\
	& \leq 2 p^a \biggl( \frac{h}{\mu(B_r)} \biggr)^{1/2}
	\end{align*}
	if $a \geq 4.33$. By taking $e^{-\tau} := 2 p^a (h/\mu(B_r))^{1/2}$, we obtain that
	\begin{align} \label{ProbMuAGG}
	\mathrm{P}_Z \Bigl( \mu(A) \geq (1/(2e^{\tau}p^a))^2 \mu(B_r) \Bigr) \geq 1 - e^{-\tau}.
	\end{align}
	Finally, combining \eqref{EstimErrorInfty2} with \eqref{ProbMuAGG} where we take $h = (1/(2e^{\tau} p^a))^2 \mu(B_r)$, we obtain that with probability $\mathrm{P}^n \otimes \mathrm{P}_Z$ at least $1 - 2 e^{-\tau}$,
	\begin{align}
	\|f_{\mathrm{D},Z} - f_{\mathrm{P},Z}\|_{\infty}
	& \leq \sqrt{\frac{8 \|f\|_{\infty} p^{2a} ((8d+1)\tau + 23\log n + 8ad\log p)}{n \mu(B_r) e^{-2\tau}}}
	\\
	& \phantom{=}
	+ \frac{8 p^{2a} ((8d+1)\tau + 23 \log n + 8ad \log p)}{3 n \mu(B_r) e^{-2\tau}} + \frac{2}{n}.  
	\label{EstimErrorInfty3}
	\end{align}
\end{proof}

\begin{proof}[of Theorem \ref{L1Convergence}] 
	From Proposition \ref{ApproximationError}, it can be seen that when $p_n \to \infty$, for any $\varepsilon > 0$, there exists a constant $n_1$ such that for $n \geq n_1$, we have $\|f_{\mathrm{P},Z} - f\|_1 \leq \varepsilon$ with probability $\mathrm{P}_Z$ at least $1 - \xi$ and $r_n := p_n^{c_T/(8d)} \to \infty$.
	
	For any $0 < \xi < 1$, we select $\tau := \log(1/\xi)$. As a result, there exists a constant $n_2$ such that $\tau < p_n \log n$ for any $n \geq n_2$. Since $p_n \log n / n \to 0$, following from Proposition \ref{OracleInequalityL1}, there also exists a constant $n_3$ such that for all $n \geq n_3$, we have $\|f_{\mathrm{D},Z} - f_{\mathrm{P},Z}\|_1 \leq \varepsilon$.
	
	Therefore, when $n \geq \max \{ n_1, n_2, n_3 \}$ is sufficient large, for any $\varepsilon > 0$, with probability $\mathrm{P}^n$ at least $1 - 2 \xi$, there holds
	\begin{align*}
	\|f_{\mathrm{D},Z} - f\|_1 \leq 2 \varepsilon.
	\end{align*}
	Therefore, with properly chosen $\xi$, one can show that $f_{\mathrm{D},Z}$ converges to $f$ under $L_1$-norm almost surely. We have completed the proof of Theorem \ref{L1Convergence}. 
\end{proof}

\begin{proof}[of Theorem \ref{L1ConvergenceRates}] 
	\emph{(i)} 
	Combining the estimates in Proposition \ref{ApproximationError} and Proposition \ref{L1LInftyRelation} and Proposition \ref{OracleInequalityL1}, we know that with probability $\mathrm{P}^n \otimes \mathrm{P}_Z$ at least $1 - 2 e^{-\tau}$, there holds
	\begin{align*}
	\|f_{\mathrm{D},Z} - f\|_1
	& \leq \sqrt{\frac{9(\tau+15dp\log n)}{2n}} + \frac{2(\tau+15dp\log n)}{n} + \frac{4}{n}
	\\
	& \phantom{=}
	+ c 2^d r^d ( (K+2r) d e^{\tau} p^{-c_T/(4d)})^{\alpha} + 2 P(B_r^c)
	\\
	& \leq C \biggl( \sqrt{\frac{(\tau+dp\log n)}{n}} + r^d (re^{\tau}dp^{-c_T/(4d)})^{\alpha} + P(B_r^c) \biggr)
	\end{align*}
	where $C$ is some constant such that the last inequality holds. Therefore, for any $r \geq 1$, $p \in \mathbb{N}_+$, with probability $\mathrm{P}^n \otimes \mathrm{P}_Z$ at least $1 - 2 e^{-\tau}$ there holds
	\begin{align} \label{ApproxError1}
	\|f_{\mathrm{D},Z} - f\|_1
	\leq C \biggl( \sqrt{\frac{(\tau+dp\log n)}{n}} + r^d (re^{\tau}dp^{-c_T/(4d)})^{\alpha} + r^{-\eta d} \biggr).
	\end{align}
	By choosing
	\begin{align*}
	r_n = \biggl( \frac{(c_T\alpha+2d)\eta}{2(d+\alpha)C_{\alpha,\tau,d}} \biggr)^{\frac{c_T\alpha+2d}{2d(d+\alpha)+\eta d(c_T\alpha+2d)}}
	\biggl( \frac{n}{\log n} \biggr)^{\frac{c_T\alpha}{4d(d+\alpha)+2\eta d(c_T\alpha+2d)}}
	\end{align*}
	and
	\begin{align*}
	p_n = \biggl( \frac{c_T\alpha}{2} d^{\alpha-\frac{3}{2}} e^{\tau\alpha} \biggr)^{\frac{4d}{c_T\alpha+2d}}
	\biggl( \frac{n}{\log n} \biggr)^{\frac{2d}{c_T\alpha+2d}}
	r^{\frac{4d(d+\alpha)}{c_T\alpha+2d}},
		\end{align*}
		then we have
		\begin{align} \label{ApproxError2}
		\|f_{\mathrm{D},Z} - f\|_1
		\leq C_{\alpha,\tau,d,\eta} \biggl( \frac{\log n}{n} \biggr)^{\frac{c_T\alpha\eta}{2\alpha(c_T\eta+2)+4d(\eta+1)}},
		\end{align}
		where constant is
		\begin{align*}
		C_{\alpha,\tau,d,\eta}
		:= C \biggl( \frac{2C_{\alpha,\tau,d}(d+\alpha)}{(c_T\alpha+2d)\eta} \biggr)^{\frac{(2d+c_T\alpha)\eta}{2d(1+\eta)+\alpha(2+c_T\eta)}}
		\biggl( 1 + \frac{(c_Td+2d)\eta}{2(d+\alpha)} \biggr) 
		+ C \tau^{\frac{1}{2}}.
		\end{align*}
		
		\emph{(ii)} 
		Similar to case \emph{(i)}, one can show that with probability $\mathrm{P}^n \otimes \mathrm{P}_Z$ at least $1 - 2 e^{-\tau}$ there holds
		\begin{align*}
		\|f_{\mathrm{D},Z} - f\|_1
		\leq C \biggl( \sqrt{\frac{(\tau+dp\log n)}{n}} + r^d (re^{\tau}dp^{-c_T/(4d)})^{\alpha} + e^{-ar^{\eta}} \biggr).
		\end{align*}
		By choosing
		\begin{align*}
		r_n = (\log n/a)^{\frac{1}{\eta}}
		\end{align*}
		and
		\begin{align*}
		p_n = \biggl( \frac{c_T\alpha}{2} d^{\alpha-\frac{3}{2}} e^{\tau\alpha} \biggr)^{\frac{4d}{c_T\alpha+2d}}
		\biggl( \frac{n}{\log n} \biggr)^{\frac{2d}{c_T\alpha+2d}}
		(\log n)^{\frac{d+\alpha}{\eta} \frac{4d}{c_T\alpha+2d}},
		\end{align*}
		then we have
		\begin{align*}
		\|f_{\mathrm{D},Z} - f\|_1
		\leq C_{\alpha,\tau,d,a} \biggl( \frac{\log n}{n} \biggr)^{\frac{c_T\alpha}{2c_T\alpha+4d}}
		(\log n)^{\frac{2d}{\eta} \frac{\alpha+d}{c_T\alpha+2d}},
		\end{align*}
		where constant is
		\begin{align*}
		C_{\alpha,\tau,d,a} := C (C_{\alpha,\tau,d} + \tau^{\frac{1}{2}} + 1).
		\end{align*}
		
		\emph{(iii)} Once again, similar to case \emph{(i)}, it can be showed that with confidence $\mathrm{P}^n \otimes \mathrm{P}_Z$ at least $1 - 2 e^{-\tau}$ there holds
		\begin{align*}
		\|f_{\mathrm{D},Z} - f\|_1
		\leq C \biggl( \sqrt{\frac{(\tau+dp\log n)}{n}} + r_0^d (r_0 e^{\tau}dp^{-c_T/(4d)})^{\alpha} \biggr).
		\end{align*}
		By choosing
		\begin{align*}
		p_n = \biggl( \frac{c_T\alpha}{2} d^{\alpha-\frac{3}{2}} e^{\tau\alpha} r_0^{d+\alpha} \biggr)^{\frac{4d}{c_T\alpha+2d}}
		\biggl( \frac{n}{\log n} \biggr)^{\frac{2d}{c_T\alpha+2d}},
		\end{align*}
		then we have
		\begin{align*}
		\|f_{\mathrm{D},Z} - f\|_1
		\leq C_{\alpha,\tau,d,r_0} \biggl( \frac{\log n}{n} \biggr)^{\frac{c_T\alpha}{2c_T\alpha+2d}},
		\end{align*}
		where constant is
		\begin{align*}
		C_{\alpha,\tau,d,r_0} := C (C_{\alpha,\tau,d} r_0^{\frac{2d(d+\alpha)}{c_T\alpha+2d}} + \tau^{\frac{1}{2}}).
		\end{align*}
	\end{proof}

	\begin{proof}[of Theorem \ref{LInftyConvergenceRates}] 
		The desired estimate is an easy consequence if we combine the estimates in Proposition \ref{OracleInequalityLInfty} and Proposition \ref{ApproximationError} \emph{(ii)} and choose
		\begin{align*}
		p_n = (n/\log n)^{\frac{2d}{c_T\alpha+4ad}}.
		\end{align*}
		We omit the details of the proof here.
	\end{proof}

\begin{proof}[of Theorem \ref{L1ConvergenceRatesForest}] 
	Considering the relationship between the $L_1$ error of the best-scored random density forest estimator and the $L_1$ errors of individual best-scored random density tree estimators contained in the forest, there holds
	\begin{align*}
	\|f_{\mathrm{D},Z_{\mathbb{E}}} - f\|_1
	= \biggl\| \frac{1}{m} \sum_{t=1}^m f_{\mathrm{D},Z_t} - f \biggr\|_1
	= \biggl\| \frac{1}{m} \sum_{t=1}^m (f_{\mathrm{D},Z_t} - f) \biggr\|_1
	\leq \frac{1}{m} \sum_{t=1}^m \|f_{\mathrm{D},Z_t} - f\|_1.
	\end{align*}
	In the following, we only present the analysis of the case \emph{(i)} in the three tail probability distributions, since the proof of \emph{(ii)} and \emph{(iii)} are quite similar.
	
	According to Theorem \ref{L1ConvergenceRates}, with $p_n = (n/\log n)^{\frac{2d(\eta+1)+2\alpha}{\alpha(c_T\eta+2)+2d(\eta+1)}}$, the bound of the $L_1$ errors of individual estimators contained in the ensemble is denoted by
	\begin{align*}
	\mathcal{E} := C_{\alpha,\tau,d,\eta} \biggl( \frac{\log n}{n} \biggr)^{\frac{c_T\alpha\eta}{2\alpha(c_T\eta+2)+4d(\eta+1)}}
	\end{align*}
	as in \eqref{ApproxError1}. Then the union bound yields that, for all $\tau > 0$, there holds
	\begin{align*}
	\mathrm{P}^n \otimes \mathrm{P}_Z ( \|f_{\mathrm{D},Z_{\mathbb{E}}} - f\|_1 > \mathcal{E} )
	\leq \sum_{t=1}^m \mathrm{P}^n \otimes \mathrm{P}_Z ( \|f_{\mathrm{D},Z_t} - f\|_1 > \mathcal{E} )
	\leq 2 m e^{-\tau}.
	\end{align*}
	Consequently, with probability $\mathrm{P}^n \otimes \mathrm{P}_Z$ at least $1 - 2 e^{-\tau}$, there holds
	\begin{align*}
	\|f_{\mathrm{D},Z_{\mathbb{E}}} - f\|_1
	\leq C_{\alpha,\tau,d,\eta,m} \biggl( \frac{\log n}{n} \biggr)^{\frac{c_T\alpha\eta}{2\alpha(c_T\eta+2)+4d(\eta+1)}},
	\end{align*}
	where
	\begin{align*}
	C_{\alpha,\tau,d,m} := \biggl( \frac{c_T\alpha m^{\alpha}e^{\tau\alpha}}{2} \biggr)^{\frac{2d}{c_T\alpha+2d}}
	\biggl( 1 + \frac{2d}{c_T\alpha} \biggr) 
	d^{\frac{4\alpha d+c_T\alpha-4d}{2c_T\alpha+4d}}
	\end{align*}
	and
	\begin{align*}
	C_{\alpha,\tau,d,\eta,m} := C \biggl( \frac{2C_{\alpha,\tau,d,m}(d+\alpha)}{(c_T\alpha+2d)\eta} \biggr)^{\frac{(2d+c_T\alpha)\eta}{2d(1+\eta)+\alpha(2+c_T\eta)}}
	\biggl( 1 + \frac{(c_Td+2d)\eta}{2(d+\alpha)} \biggr)
	+ C (\tau + \log m)^{\frac{1}{2}}.
	\end{align*}
\end{proof}

\begin{proof}[of Theorem \ref{LInftyConvergenceRatesForest}] 
	We consider the relationship between the $L_{\infty}$ error of the best-scored random density forest estimator and the $L_{\infty}$ errors of individual estimators contained in the forest.
	\begin{align*}
	\|f_{\mathrm{D},Z_{\mathbb{E}}} - f\|_{\infty}
	= \biggl\| \frac{1}{m} \sum_{t=1}^m f_{\mathrm{D},Z_t} - f \biggr\|_{\infty}
	= \biggl\| \frac{1}{m} \sum_{t=1}^m (f_{\mathrm{D},Z_t} - f) \biggr\|_{\infty}
	\leq \frac{1}{m} \sum_{t=1}^m \|f_{\mathrm{D},Z_t} - f\|_{\infty}.
	\end{align*}
	The desired estimate can be obtained by choosing the same number of splits for each partition in $\{ \mathcal{A}_{Z_t} \}_{t=1}^m$, which is
	\begin{align*}
	p_n = (n/\log n)^{\frac{2d}{c_T\alpha+4ad}}.
	\end{align*}
	We omit the details of the proof here for it is similar to that of the Theorem \ref{L1ConvergenceRatesForest}. 
\end{proof}

\begin{proof}[of Lemma \ref{VCindex}] 
	The proof will be conducted by dint of geometric constructions, and we proceed by induction. We begin by observing a partition with number of splits $p = 1$. On account that the dimension of the feature space is $d$, the smallest number of points that cannot be divided by $p = 1$ split is $d + 2$. Specifically, considering the fact that $d$ points can be used to form $d - 1$ independent vectors and therefore a hyperplane of a $d$-dimensional space, we now focus on the case where there is a hyperplane consisting of $d$ points all from the same class labeled as $A$, and there are two points from the other class $B$ on either side of the hyperplane. We denote the hyperplane by $H_1^A$ for brevity. In this case, points from two classes cannot be separated by one split, i.e. one hyperplane, which means that $\mathrm{VC}(\mathcal{B}(\pi_1)) \leq d + 2$.
	
	We next turn to consider the partition with number of splits $p = 2$ which is an extension of the above case. Once we pick one point out of the two located on either side of the above hyperplane $H_1^A$, a new hyperplane $H_2^B$ parallel to $H_1^A$ can be constructed by combining the selected point with $d - 1$ newly-added points from class $B$. Subsequently, a new point from class $A$ is added to the side of the newly constructed hyperplane $H_2^B$. Notice that the newly added point should be located on the opposite side to $H_1^A$. Under this situation, $p = 2$ splits can never separate those $2 d + 2$ points from two different classes. As a result, we prove that $\mathrm{VC}(\mathcal{B}(\pi_2)) \leq 2 d + 2$.
	
	If we apply induction to the above cases, the analysis of VC index can be extended to the general case where $p \in \mathbb{N}$. What we need to do is to add new points continuously to form $p$ mutually parallel hyperplanes with any two adjacent hyperplanes being built from different classes. Without loss of generality, we assume that $p = 2k+1$, $k \in \mathbb{N}$, and there are two points denoted by $p_1^B, p_2^B$ from class $B$ separated by $2 k + 1$ alternately appearing hyperplanes. Their locations can be represented by $p_1^B, H_1^A, H_2^B, H_3^A, H_4^B, \ldots, H_{(2k+1)}^A, p_2^B$. According to this construction, we demonstrate that the smallest number of points that cannot be divided by $p$ splits is $d p + 2$, which leads to $\mathrm{VC}(\mathcal{B}(\pi_p)) \leq d p + 2$.
	
	It should be noted that our hyperplanes can be generated both vertically and obliquely, which is in line with our splitting criteria for the random trees. This completes the proof. 
\end{proof}

\begin{proof}[of Lemma \ref{ScriptBpCoveringNumber}]
	The inequality \eqref{BpCoveringNumber} follows directly from Lemma \ref{VCindex} and Theorem 9.2 in \cite{Kosorok2008introduction}.
\end{proof}

\section{Conclusion}\label{sec::conclusion}

In the present paper, we study the density estimation problems with a new nonparametric strategy called the best-scored random forest density estimation. Each tree in the forest is the best one selected from a certain number of purely random density tree candidates based on their estimation performance and we call this selection mechanism the best-scored method. It is by this best-scored method, the ensemble forest of these selected trees are able to achieve better estimation results. The main results presented in this paper include establishing the consistency of the best-scored random density tree estimators under $L_1$-norm, and the convergence rates of them under $L_1$-norm with respect to three different tail assumptions. The convergence rates of the best-scored random density tree estimators are also analyzed under $L_{\infty}$-norm with mild assumptions that the density function is $\alpha$-H\"{o}lder continuous, bounded and compactly supported. Moreover, we also establish the above convergence rates analysis for the best-scored random density forest estimators. When it comes to the numerical experiments, in addition to improving the purely random splitting criteria to data-driven ones called the adaptive random splitting criteria, we also generalize the axis-parallel partitions to the oblique ones which are proved to increase the estimation accuracy. Experiments on comparisons between our forest density estimators and other standard density estimators further illustrate the satisfying performance of our proposal with respect to the estimation accuracy and stronger resistance to the curse of dimensionality.

\acks{The authors are grateful to Professor Ingo Steinwart for his valuable comments and suggestions. The research leading to these results are supported by fund for building world-class universities (disciplines) of Renmin University of China. The corresponding author is Hongwei Wen.}

%\small{\bibliography{HANGBib}}

\end{document}